\definecolor{mycolor}{RGB}{120, 150, 193}
\tikzset{>=latex}
\newtheorem{proposition}{Proposition}
\newtheorem{definition}{Definition}
\newtheorem{remark}{Remark}
\newtheorem{lemma}{Lemma}
\newcommand{\xb}{\mathbf{x}}
\newcommand{\ab}{\mathbf{a}}
\newcommand{\vb}{\mathbf{v}}
\newcommand{\yb}{\mathbf{y}}
\newcommand{\wb}{\mathbf{w}}
\newcommand{\dr}{\mathrm{d}}
\newcommand{\PM}{\mathcal{P}}
\newcommand{\LM}{\mathcal{L}}
\newcommand{\mut}{\tilde \mu}
\newcommand{\EBB}{\mathbb{E}}
\newcommand{\FM}{\mathcal{F}}
\newcommand{\Kb}{\mathbf{K}}
\newcommand{\RBB}{\mathbb{R}}
\newcommand{\NAME}{GAD}
\title{\NAME-PVI : A General Accelerated Dynamic-Weight Particle-based Variational Inference Framework}
\author{
    %Authors
    % All authors must be in the same font size and format.
    % Written by AAAI Press Staff\textsuperscript{\rm 1}\thanks{With help from the AAAI Publications Committee.}\\
    % AAAI Style Contributions by Pater Patel Schneider,
    % Sunil Issar,\\
    % J. Scott Penberthy,
    % George Ferguson,
    % Hans Guesgen,
    % Francisco Cruz\equalcontrib,
    % Marc Pujol-Gonzalez\equalcontrib
    Fangyikang Wang\textsuperscript{\rm 1},
    Huminhao Zhu\textsuperscript{\rm 1},
    Chao Zhang\thanks{Corresponding author.}\textsuperscript{\rm 1,\rm 2},
    Hanbin Zhao\textsuperscript{\rm 1,\rm 2},
    Hui Qian\textsuperscript{\rm 1,\rm 3}
}
\author {
    Anonymous Authors
}
\author {
    % Authors
    First Author Name\textsuperscript{\rm 1,\rm 2},
    Second Author Name\textsuperscript{\rm 2},
    Third Author Name\textsuperscript{\rm 1}
}
\begin{document}

\maketitle

\begin{abstract}
    % Particle-based Variational Inference (ParVI) methods 
    % approximate the target distribution
    % by iteratively evolving the positions and weights of particles.
    % Recent advances in the field of ParVI have revealed the benefits
    % of accelerated position updating and dynamic weight adjusting strategies.
    % which have been shown to improve the performance of ParVI methods.
    % However, the ParVI method that can effectively utilize 
    % both accelerated position updating and dynamic weight adjusting strategies
    % remains unexplored.
    % In this paper, we propose the first ParVI framework which utilize both strategies,
    % named the General Accelerated Dynamic-Weight Particle-based Variational Inference (GAD-PVI) framework,
    % by simulating a novel gradient flow and its discretization.
    % This paper proposes a novel gradient flow and its discretization to derive 
    % the General Accelerated Dynamic-Weight Particle-based Variational Inference (GAD-PVI) algorithms, 
    % which utilize both strategies and achieve improved performance over existing ParVI methods.
    % Moreover, the GAD-PVI framework are compatible with general information metrics, dissimilarities, 
    % and associated smoothing approaches.
    % Experiments on both synthetic and real-world data validate our theory and demonstrate
    % the faster convergence and less approximation error of GAD-PVI methods over the state-of-the-art.
    Particle-based Variational Inference (ParVI) methods  approximate the target distribution by iteratively evolving finite weighted particle systems. Recent advances of ParVI methods reveal the benefits
    of accelerated position update strategies and dynamic weight adjustment approaches.
    In this paper, we propose the first ParVI framework that possesses both accelerated position update
    and dynamical weight adjustment simultaneously,
    named the General Accelerated Dynamic-Weight Particle-based Variational Inference (GAD-PVI) framework.
    Generally, GAD-PVI simulates the semi-Hamiltonian gradient flow on a novel Information-Fisher-Rao space, which yields an additional decrease on the local functional dissipation.
    GAD-PVI is compatible with different dissimilarity functionals and associated smoothing approaches under three information metrics.
    Experiments on both synthetic and real-world data demonstrate
    the faster convergence and reduced approximation error of GAD-PVI methods over the state-of-the-art.
\end{abstract}

\section{Introduction}
Particle-based Variational Inference (ParVI) methods have gained significant attention in the Bayesian inference literature
 owing to their effectiveness in providing approximations of the target posterior distribution $\pi$
 \cite{liu2016stein,zhu2020variance,De-randomizing_MCMC,dpvi,li2023sampling}.
The essence of ParVI lies in deterministically evolving a system of finite weighted particles 
by simulating the probability space gradient flow of 
certain dissimilarity functional $\FM(\mu) := \mathcal D(\mu|\pi)$ 
vanishing at $\mu = \pi$ \cite{liu2019understanding}.
Since the seminal work Stein Variational Gradient Descent (SVGD) \cite{liu2016stein},
classical ParVI focus on simulating the \emph{first-order} gradient flow in \emph{Wasserstein} space.
By using different dissimilarity and associated smoothing approaches,
various effective ParVI methods have been proposed, including
the BLOB method\cite{chen2018unified}, the GFSD method\cite{liu2019understanding}, 
and the KSDD method\cite{korba2021kernel}.

% Several ParVI methods have been proposed, using
% different dissimilarity and associated smoothing approaches. 
% These include the Stein Variational Gradient Descent (SVGD) method \cite{liu2016stein}, 
% BLOB \cite{chen2018unified}, GFSD \cite{liu2019understanding}, 
% and the Kernel Stein Discrepancy Descent (KSDD) \cite{korba2021kernel}.

\begin{table*}[h]
	\centering
    \small
	\begin{tabular}{l|cccc}
		\toprule
        \diagbox{Methods}{Features} 
        
        & \makecell[c]{Accelerated \\position update}  
        & \makecell[c]{Dynamic \\weight adjustment} 
        & \makecell[c]{Dissimilarity and \\associated smoothing approach}
        & \makecell[c]{Underlying \\probability space}  \\

		% \multirow{2}*{Methods} & \multicolumn{3}{c}{Features}  \\
		% ~    & \makecell[c]{Weight\\ Adjusting}
        %  &\makecell[c]{Acceleration} &\makecell[c]{Diverse Smooth-\\ing Strategy} \\
		\hline
		\makecell[l]{SVGD\gape{\footnotemark}  \cite{liu2016stein}
        % \\ \cite{nusken2023geometry}
        } 
        & \makecell[c]{\ding{55}}  
        & \makecell[c]{\ding{55}} 
        & KL-RKHS
        & Wasserstein  \\

        % \hline
		\makecell[l]{BLOB \cite{craig2016blob}
        % \\ \cite{craig2016blob}
        } 
        & \makecell[c]{\ding{55}}  
        & \makecell[c]{\ding{55}} 
        & KL-BLOB
        & Wasserstein  \\

        % \hline
		\makecell[l]{KSDD \cite{korba2021kernel} 
        % \\ \cite{korba2021kernel} 
        } 
        
        & \makecell[c]{\ding{55}}  
        & \makecell[c]{\ding{55}} 
        & KSD-KSDD 
        & Wasserstein  \\
    
        \hline
        % \hline
        \makecell[l]{   ACCEL \cite{taghvaei2019accelerated}\\ %\cite{taghvaei2019accelerated}
        }  
        & \makecell[c]{\ding{51}}  
        & \makecell[c]{\ding{55}} 
        & KL-GFSD
        & Wasserstein  \\

        % \hline
        \makecell[l]{WNES, WAG \cite{liu2019understanding}\\ 
        % \cite{liu2019understanding}
        }  
        & \makecell[c]{\ding{51}}  
        & \makecell[c]{\ding{55}} 
        & General
        & Wasserstein  \\
    
        % \hline
        \makecell[l]{AIG \cite{AIG} %\\ 
        % \cite{AIG}
        }  
        
        & \makecell[c]{\ding{51} }
        & \makecell[c]{\ding{55}} 
        & KL-GFSD
        &  \makecell[c]{Information (General)} \\  
        
        \hline
        % \hline
        \makecell[l]{DPVI \cite{dpvi}\\ 
        % \cite{dpvi}
        }  
        
        & \ding{55} 
        & \ding{51} & General
        & WFR  \\  
    
        \hline
        % \hline
        \makecell[l]{\NAME-PVI (Ours)  %\\(Ours) 
        } 
        
        & \makecell[c]{\ding{51} }
        & \makecell[c]{\ding{51}} 
        & General
        & \makecell[c]{IFR (General)} \\  
        
	\bottomrule
	\end{tabular}
	\caption{Feature-by-Feature comparison of different ParVIs.}
	\label{gmm_w2}
	% \vspace{-5mm}
\end{table*}

% Two main lines of work have been proposed to improve the performance of ParVI methods: 
% accelerated position updating and dynamic weight adjusting.
To improve the efficiency of ParVIs,
recent works explore different aspects of the underlying geometry structures in the probability space
and design two types of refined particle systems with either \emph{accelerated position update}
or \emph{dynamic weight adjustment}.
\begin{itemize}
\item \emph{Accelerated position update.}
By considering the second-order Riemannian information of the Wasserstein probability space,
different accelerated position update strategies have been proposed\cite{liu2019understanding,taghvaei2019accelerated}:
\citet{liu2019understanding} follows the accelerated gradient descend methods in the Wasserstein propability space \cite{liu2017accelerated,zhang2018estimate} and derives the WNES and WAG methods,
which update the particles' positions with an extra momentum;
the ACCEL method\cite{taghvaei2019accelerated} directly discretizes the Hamiltonian gradient flow in the Wasserstein space and update the position with the damped velocity field, 
which effectively decrease the Hamiltonian potential of the particle system. 
Later, \citet{AIG} consider the Hamiltonian gradient flow for general information probability space\cite{density_manifold},
and derive novel accelerated position update strategies according to the Kalman-Wasserstein/Stein Hamiltonian flow.
They theoretically show that the Hamiltonian flow usually has a faster convergence to the equilibrium compared with the original first-order counterpart under mild condition.
Numerous experimental studies demonstrate that these accelerated position update strategies usually drift the particle system to the target distribution more efficiently\cite{liu2019understanding,taghvaei2019accelerated,carrillo2019convergence,AIG}.  
\item \emph{Dynamic weight adjustment.}
Delving into the orthogonality structure of the Wasserstein-Fisher-Rao(WFR) space,
\citet{dpvi} develop the first dynamic-weight ParVI (DPVI) methods.
Specifically, they derive effective dynamical weight adjustment approaches by mimicing the reaction variational step in a JKO splitting scheme of first-order WFR gradient flow\cite{gallouet2017jko,rotskoff2019global}.
Compared with the commonly used fixed weight strategy, these dynamical weight adjustment schemes usaully lead to less approximation error,
especially when the number of particles is limited\cite{dpvi}. 
\end{itemize}
%underlying geometry structure motivating
%Note that these two research lines focus on two ,no ParVI method that possesses both accelerated position update and dynamical weight adjustment.
% As these two lines of work consider perpendicular geometry structures of the probability space, it is natural to ask 
% whether we can derive efficient ParVI algorithms that update particle's position in an accelerated manner and dynamically adjust weights simultaneously?
% \footnote{One intuitive approach is to consider the WFR Hamiltonian flow.
% However, we will show that it is generally infiseable to obtain practical algorithm by discretizing this flow. Please check the Appndix for a detailed discussion of WFR Hamiltonian flow.}
% To the best of our knowledge, 
% there is currently no ParVI method that incorporates 
% both accelerated position updating and dynamic weight adjusting.
%尽管看起来发展一个兼具这俩的方法是自然的想法，但是当前没有，主要原因是没有合适的流。
%脚注谈论WFR加速
%最后说现在没有

% Note, one intuitive methodology is to directly employ accelerated structure 
% to Wasserstein-Fisher-Rao flow and derive ParVI algorithms based on it.
% However, we will show that implementing practical algorithm through this approach is virtually infeasible, 
% see detailed discussion in Appendix.
% \footnote{Detailed discussion of Accelerated Wasserstein-Fisher-Rao flow could be found in Appendix}.

%1.紧着上一句没有，我们第一个有提方法，兼容的
%2.我们construct a SA gradient flow in 
%a novel IFR space,i.e xxx。
%3. IFR space,  
%4.

\noindent\textbf{Contribution:}In this paper, we propose the first ParVI methods which possess both accelerated position update and dynamical weight adjustment simultaneously.
Specifically, we first construct a novel Information-Fisher-Rao (IFR) probability space, which augment the origianl information space with an orthogonal Fisher-Rao structure.\footnotetext{SVGD methods can be also viewed as using KL dissimilarity without Smoothing Approach under Stein-Metric\cite{nusken2021stein}.}
Then, we originate a novel Semi-Hamiltonian IFR (SHIFR) flow in this space, which simplifies the influence of the kinetic energy on the velocity feild in the Hamiltonian IFR flow
\footnote{Though the Hamiltonian IFR flow seems a natural choice, it is generally infeasible to obtain practical algorithm by discretizing this flow. Please check the Appendix A.2\ref{full_Hamiltonian} for a detailed discussion of IFR Hamiltonian flow.}.
% which can corporate accelerated position updating and dynamically
% weight adjusting.
By discretizing the SHIFR flow, a practical General Accelerated Dynamic-weight 
Particle-based Variational Inference (GAD-PVI) framework is proposed.
% we propose the novel Semi-Hamiltonian-Information-Fisher-Rao ( SHIFR) flow, 
% which is a probability gradient flow incorporating both acceleration and density-adjusting schemes. 
% We then develop the General Accelerated Dynamic-weight Particle-based Variational Inference (GAD-PVI) framework 
% by discretizing the finite-particle formulations to the  SHIFR flows. 
% The GAD-PVI framework is compatible with different gradient flow metrics, 
% dissimilarity measures, and smoothing approaches. 
% By choosing different gradient flow metrics, dissimilarity measures, and smoothing approaches, 
% several GAD-PVI algorithms that 
% incorporate both accelerated position updating and dynamic weight adjusting can be obtained.
The main contribution of our paper are listed as follows:

\begin{itemize}
    \item 
    % We derive finite-particle formulations for the  SHIFR flow under the Wasserstein metric, 
    % Kalman-Wasserstein metric, and Stein metric. 
    % Moreover, we demonstrate that the mean-field limit of the proposed finite-particle formulations is indeed the  SHIFR flow.
    We investigate the convergence property of the SHIFR flow
    and show that the target distribution $\pi$
    is the stationary distribution of the proposed semi-Hamiltonian flow for proper dissimilarity functional
    $\mathcal D(\cdot|\pi)$.
    Moreover, our theoretical result also shows that the augmented Fisher-Rao structure yields an additional decrease on the  local functional dissipation, compared to the Hamiltonian flow in the vanilla information space.

    % establish that the target distribution $\pi$
    % is the stationary distribution of the  SHIFR flow with dissimilarity functional
    %  $\mathcal D(\cdot|\pi)$.
    % We also investigate the faster descending property of  SHIFR flows.

    \item 
    We derive an effective finite-particle approximation
    to the SHIFR flow, which directly evolves the position, weight, and velocity of the particles via a set of ordinary differential equations.
    The finite particle system is compatible with different dissimilarity and associated smoothing approaches.
    We prove that the mean-field limit of the proposed particle system 
    converges to the exact SHIFR flow under mild condition.
    %第二句，particle system特点加mean-field
    % Moreover, we also develop a duplicate/kill weight adjustment particle system as variants,
    % where a probabilistic discretization to the weight adjustment part in the novel flow is used to 
    % dynamically duplicate/kill particles, instead of adjusting the particles' weights continuously.
    
    % We propose the General Accelerated Dynamic-weight Particle-based Variational Inference (GAD-PVI) framework, 
    % which utilizes an Euler discretization of the finite-particle formulations of the  SHIFR flows to 
    % update positions in an acceleration manner and dynamically adjust weights. 
    % Notably, the position acceleration operation and weight adjustment operation can be implemented 
    % without incurring significant additional computation.

    \item 
    By adopting explicit Euler discretization to the finite-particle system,
    we architect the General Accelerated Dynamic-weight Particle-based Variational 
    Inference (GAD-PVI) framework,
    which update positions in an acceleration manner and dynamically adjust weights.
    We derive nine GAD-PVI instances by using three different dissimilarity functionals and associated smoothing approaches(KL-BLOB, KL-GFSD and KSD-KSDD) on the Wasserstein/Kalman-Wasserstein/Stein IFR space, respectively.
\end{itemize}
    We evaluate our algorithms on various synthetic and real-world tasks.
    The empirical results demonstrate the superiority of our GAD-PVI methods.

~\\
\noindent
\textbf{Notation.} 
Given a probability measure $\mu$ on $\RBB^d$, we denote $\mu \in \PM_2(\RBB^d)$ if its second moment is finite.
For a given functional $\FM(\cdot):\PM_2(\RBB^d) \to \RBB$, 
$\frac{\delta \FM(\tilde{\mu})}{\delta \mu}(\cdot):\RBB^d \to \RBB$ denote its first variation at $\mu=\tilde{\mu}$.
We use $C(\RBB^n)$ to denote the set of continuous functions map from $\mathbb{R}^n$ to $\mathbb{R}$.
We denote $\xb^i \in \RBB^d$ as the $i$-th particle, for $i\in \{1...M\}$.
We denote the Dirac delta distribution with point mass located at $\xb^i$ as $\delta_{\xb^i}$, 
and use $f*g:\RBB^d \to \RBB$ to denote the convolution operation 
between $f:\RBB^d \to \RBB$ and $g:\RBB^d \to \RBB$. 
Besides, we use $\nabla$ and $\nabla \cdot ()$ to denote 
the gradient and the divergence operator, respectively.
We denote a general information probability space as $(\mathcal{P}(\RBB^n),G(\mu))$,
where $G(\mu)[\cdot]$ denotes the one-to-one information metric tensor mapping elements in the tangent space $T_\mu\mathcal{P}(\RBB^n) \subset C(\RBB^n)$ 
to the cotangent space $T^*_\mu\mathcal{P}(\RBB^n) \subset C(\RBB^n)$.
The inverse map of $G(\mu)[\cdot]$ is denoted 
as $G^{-1}(\mu)[\cdot]:T^*_\mu\mathcal{P}(\RBB^n)\to T_\mu\mathcal{P}(\RBB^n)$. 
\section{Preliminaries}
When dealing with Bayesian inference tasks, 
variational inference methods approximate the target posterior $\pi$ with 
an easy-to-sample distribution $\mu$, and recast the inference task as an optimization problem 
over $\mathcal{P}_2(\mathbb{R}^d)$ \cite{ranganath2014black}: 
\begin{align}
    \text{min}_{\mu\in \mathcal{P}_2(\RBB^n)}\mathcal{F}(\mu):=\mathcal{D}(\mu|\pi).
\end{align}
To solve this optimization problem, 
Particle-based Variational Inference (ParVI) methods generally simulate the gradient 
flow of $\mathcal{F}(\mu)$ in certain probability space with a finite particle system,
which transport the initial empirical distribution towards the target 
distribution $\pi$ iteratively.
Given an information metric tensor $G(\mu)[\cdot]$, 
the gradient flow in the information probability space $(\mathcal{P}(\RBB^n),G(\mu))$ takes the following form\cite{ambrosio2008gradient}:
\begin{align}\label{gf_general}
    \partial_t \mu_t = -G(\mu_t)^{-1} \left[\frac{\delta \mathcal{F}(\mu_t) }{\delta \mu}\right].
\end{align}

% In this section, we firstly review the Wasserstein gradient flow and classical ParVI methods based on the Wasserstein gradient flow;
% secondly review the Accelerated Information Gradient (AIG) flows 
% and accelerated ParVI methods based on the AIG flows;
% thirdly review the Wasserstein-Fisher-Rao (WFR) flow and dynamic-weight ParVI methods based on the WFR flow.

\subsection{Wasserstein Gradient Flow and Classical ParVIs}
% Using a specific metric in probability space lead to a 
% specific probabilistic gradient flow.
Since the seminal work Stein Variational Gradient Descent (SVGD)\cite{liu2016stein},
many ParVI methods focus on flows in the Wasserstein space, where the inverse of the Wasserstein metric tensor writes
% and Fisher-Rao metric tensor (arise in information geometry) write:
\begin{align}\label{Wasserstein_metric}
    G^{W}(\mu)^{-1}\left[\Phi\right] = -\nabla \cdot (\mu\nabla\Phi),\Phi \in T_\mu^* \mathcal{P}(\RBB^n),
\end{align}
and the Wasserstein gradient flow is defined as
\begin{align}\label{gf_Wasserstein}
    \partial_t \mu_t = 
    % G^{W}(\mu_t)^{-1}\left[\frac{\delta \mathcal{F}(\mu_t) }{\delta \mu}\right] = 
    \nabla \cdot (\mu_t\nabla \frac{\delta \mathcal{F}(\mu_t) }{\delta \mu}).
\end{align}
Based on the probability flow \eqref{gf_Wasserstein} on the density,
existing ParVIs maintain a set of particles ${\xb^i_t}$ and directly modify the particle position according to the following ordinary differential equation
\begin{align}\label{ode_wgf}
    \mathrm{d}\xb^i_t = \nabla \frac{\delta \mathcal{F}(\tilde\mu_t) }{\delta \mu} (\xb^i_t)\mathrm{d} t,
\end{align}
% The Wasserstein gradient flow \eqref{gf_Wasserstein} defines a continuity equation with 
% $\nabla \frac{\delta \mathcal{F}(\mu_t) }{\delta \mu}$
% as the fluid velocity field.
where $\tilde{\mu}_t = \sum_{i=1}^M w^i_t \delta_{\xb^i_t}$  denotes the empirical distribution.
Since the first total variation $\frac{\delta \mathcal{F}(\tilde\mu_t)}{\delta \mu}$  of $\mathcal{F}$ might be not well-defined for the discrete empirical distribution,
various ParVI methods have proposed by choosing different dissimilarity $\mathcal{F}$ and associated smoothing approaches for $\frac{\delta \mathcal{F}(\tilde{\mu_t}) }{\delta \mu}$, e.g., KL-BLOB\cite{chen2018unified}, KL-GFSD\cite{liu2019understanding}, 
and KSD-KSDD\cite{korba2021kernel}.

\subsection{Hamiltonian Gradient Flows and Accelerated ParVIs}
The following Hamiltonian gradient flow in the general information probability space has recently been utilized to derive more efficient ParVI methods
\begin{align}\label{AIG_general}
   \left\{
	\begin{aligned}
		&\partial_t\mu_t= \frac{\delta}{\delta\Phi}\!\mathcal{H}(\mu_t,\Phi_t), \\
        &\partial_t\Phi_t \! =\!- \! \gamma_t\Phi_t \! - \! \frac{\delta}{\delta\mu}\!\mathcal{H}(\mu_t,\Phi_t), 
    \end{aligned}\right.
\end{align}
where $\Phi_t$ denote the Hamiltonian velocity and $\mathcal{H}(\mu_t,\Phi_t) = \frac{1}{2}\int\Phi_tG(\mu_t)^{-1}\left[\Phi_t\right] dx + \mathcal{F}(\mu_t)$ denotes the Hamiltonian potential.
Note that the Hamiltonian flow \eqref{AIG_general} can be regarded as the second-order accelerated version 
of the information gradient flow \eqref{gf_general}, and usually converges faster to the equilibrium of the target distribution under mild condition\cite{carrillo2019convergence,taghvaei2019accelerated,AIG}. 
Though the form of the Hamiltonian flow \eqref{AIG_general} seems complicated,
it induces a simple augmented particle system $(\xb_t^i,\vb_t^i)$,
which evolves the position $\xb_t^i$  and velocity $\vb_t^i$ of particles simultaneously.
As the position update rule of $\xb_t^i$ also uses the extra velocity information, 
the induced system is said to have an accelerated position update.
By discretizing the continuous particle system, several accelerated ParVI methods have been proposed,
which converge faster to the target distribution in numerous real-world Bayesian inference tasks \cite{taghvaei2019accelerated,AIG}.

\subsection{Wasserstein-Fisher-Rao Flow and Dynamic-weight ParVIs}
Recently, the Wasserstein-Fisher-Rao (WFR) Flow has been used to derive effective dynamic weight adjustment approaches to mitigate the fixed-weight restriction of ParVIs\cite{dpvi}. 
The inverse of WFR metric tensor is
% and Fisher-Rao metric tensor (arise in information geometry) write:
\begin{align}\label{wfr_metric}
    G^{WFR}(\mu)^{-1}\left[\Phi\right] = -\nabla \cdot (\mu\nabla\Phi)+(\Phi-\int \Phi)\mu,
\end{align}
where $\Phi \in T^*_\mu\mathcal{P}(\RBB^n)$,
and the WFR gradient flow writes:
\begin{align}\label{gf_wfr}
    \small
    \partial_t \mu_t \!= \!
    % G^{W}(\mu_t)^{-1}\left[\frac{\delta \mathcal{F}(\mu_t) }{\delta \mu}\right] = 
    \underbrace{\nabla \!\cdot\! (\mu_t\nabla \frac{\delta \mathcal{F}(\mu_t) }{\delta \mu}) }_{\text{Wasserstein transport}}
    \!-\! \underbrace{(\frac{\delta\mathcal{F}(\mu_t)}{\delta \mu} \!-\! \int{\frac{\delta\mathcal{F}(\mu_t)}{\delta \mu}}\mathrm{d}\mu_t)\mu_t}_{\text{Fisher-Rao variational distortion}}.
\end{align}
Since the WFR space can be regarded as orthogonal sum of the Wasserstein space and the Fisher-Rao space,
\citet{dpvi} mimic a JKO splitting scheme for the WFR flow, 
which deal with the position and the weight with the Wasserstein transport and the Fisher-Rao variational distortion, respectively. 
Given a set of particles with position $\xb^i_t$ and weight $\wb_t^i$, the Fisher-Rao distortion can be approximated by the following ode
\begin{align}\label{FR-distortion}
    \frac{\mathrm{d}}{\mathrm{d}t} w^i_t &=\textstyle 
        -\left(\frac{\delta\mathcal{F}(\tilde{\mu}_t)}{\delta\mu}(\xb^i_t) 
        - \sum_{i=1}^M w^i_t \frac{\delta\mathcal{F}(\tilde{\mu}_t)}{\delta\mu}(\xb^i_t)\right)w^i_t.
\end{align}
According to the ode \eqref{FR-distortion}, \citet{dpvi} derive two dynamical weight-adjustment scheme and propose the Dynamic-Weight Particle-Based Variational Inference (DPVI) framework, which is compatible with several dissimilarity functionals and associated smoothing
approaches.

\section{Methodology}
In this section, we present our General Accelerated Dynamic-weight Particle-based Variational Inference (GAD-PVI) framework, detailted in Algorithm \ref{alg:algorithm1}.
We first introduce a novel augmented Information-Fisher-Rao space,
and originate the Semi-Hamiltonian-Information-Fisher-Rao (SHIFR) flow in the space.
The theoretical analysis on SHIFR shows that it usually possesses an additional decrease on the local functional dissipation compared to the Hamiltonian flow in the original information space.
Then, effective finite-particle systems, which directly evolve the position, weight, and velocity of the particles via a set of ordinary differential equations, are constructed based on SHIFR flows in several IFR spaces with different underlying information metric tensors.
We demonstrate that the mean-field limit of the constructed particle system exactly converges to the SHIFR flow in the corresponding probability space.
Next, we develop the GAD-PVI framework by discretizing these continuous-time finite-particles formulations, 
which enables simultaneous accelerated updates of particles' positions and dynamic adjustment of particles' weights.
We present nine effective GAD-PVI algorithms that use different underlying information metric tensors, 
dissimilarity functionals and the associated finite-particle smoothing approaches.

\subsection{Information-Fisher-Rao Space and Semi-Hamiltonian-Information-Fisher-Rao Flow}
To define the augmented Information-Fisher-Rao probability space, we introduce the Information-Fisher-Rao metric tensor $G^{IFR}(\mu)$, whose inverse is defined as follows.
\begin{align}\label{ifr_metric}
    G^{IFR}(\mu)^{-1}\left[\Phi\right] = G^{I}(\mu)^{-1}\left[\Phi\right]+(\Phi-\int \Phi \mathrm{d}\mu)\mu,
\end{align}
where $\Phi \in T^*_\mu\mathcal{P}(\RBB^n)$ and $G^{I}(\mu)$ denotes certain underlying information metric tensor.
Note that $G^{IFR}(\mu)$ is formed by the inf-convolution of $G^{I}(\mu)$ and Fisher-Rao metric tensor.
% therefore the IFR gradient flow writes:
% \begin{align}\label{gf_ifr}
%     \partial_t \mu_t = 
%     -G^{I}(\mu)^{-1}\left[\Phi\right]
%     -(\frac{\delta\mathcal{F}(\mu_t)}{\delta \mu} - \int{\frac{\delta\mathcal{F}(\mu_t)}{\delta \mu}}\mathrm{d}\mu_t)\mu_t
% \end{align}

Based on $G^{IFR}(\mu)$, we introduce the following novel semi-Hamiltonian flow of $\FM$  on the Information-Fisher-Rao space $(\mathcal{P}(\RBB^n),G^{IFR}(\mu))$
\begin{align}\label{GAD-FLOW}
    \left\{
	\begin{aligned}
		&\partial_t\mu_t=\frac{\delta}{\delta\Phi}\!\mathcal{H}^{IFR}(\mu_t,\Phi_t), \\
        &\partial_t\Phi_t\!=\!-\!\gamma_t\Phi_t\!-\!\frac{1}{2}\frac{\delta}{\delta\mu}\!\left(\!\int\Phi_tG^I(\mu_t)^{-1}\left[\Phi_t\right] dx\!\right)\!\!-\!\frac{\delta \mathcal{F}(\mu_t) }{\delta \mu}.
    \end{aligned}\right.
\end{align}
where $\Phi_t$ denote the Hamiltonian velocity and 
\begin{align}
\mathcal{H}^{IFR}(\mu_t,\Phi_t) &= 
\underbrace{\frac{1}{2}\!\!\int\Phi_tG^I(\mu_t)^{-1}\left[\Phi_t\right] dx}_{\text{Information kinetic energy}}
\!\\
&+\underbrace{\frac{1}{2}\int\Phi_t(\Phi_t-\int\Phi d\mu_t)d\mu_t}_{\text{Fisher-Rao kinetic energy}}
+\underbrace{\frac{\delta \mathcal{F}(\mu_t) }{\delta \mu}}_{\text{potential energy}},\nonumber
\end{align}
denotes the Hamiltonian potential in the IFR space.
Compared to the full Hamiltonian flow of $\FM$ in the IFR space, 
the  SHIFR flow \eqref{GAD-FLOW} ignores the influence of the Fisher-Rao kinetic energy on the Hamiltonian field $\Phi_t$.
Later, we will show that SHIFR can be directly transformed into a particle system consisting of odes on the positions, velocities and weights of particles for proper underlying information metric tensor, while it is generally infeasible to obtain such a direct particle system according to the corresponding full Hamiltonian flow because it is difficult to handle the Fisher-Rao kinetic energy.
As the kinetic energy term vanishes when near the equilibrium of the flow, therefore it is acceptable for the SHIFR flow to neglect this intractable term and still has the target distribution $\pi$ as its stationary distribution. Moreover, this semi-Hamiltonian flow would 
converge faster compare to the Hamiltonian flow in the original information space on account of extra local descending property. 
Due to the limit of space, we defer the discussion of 
the stationary analysis and functional dissipation quantitative analysis of the SHIFR flow to Appendix A.4. Please refer to Proposition 2
% \ref{proposition_dissipation} 
and Proposition 3
% \ref{proposition_stationary} 
for details.

With different underlying information metric tensor $G^I(\mu)$ in $\mathcal{H}^{IFR}(\mu_t,\Phi_t)$, we can obtain different SHIFR flows.
Suitable $G^I(\mu)$ includes the Wasserstein metric tensor,the Kalman-Wasserstein metric tensor (KW-metric) 
and the Stein metric tensor (S-metric).
For instance, the  SHIFR flow with Wasserstein metric (Wasserstein-SHIFR flow) writes:
\begin{align}\label{Gad-W-flow}
    \left\{
    \begin{aligned}
        &\partial_t \mu_t \!=-  \nabla\cdot\left(\mu_t\nabla \Phi_t\right) \!
        -\! \left(\frac{\delta\mathcal{F}(\mu_t)}{\delta\mu} \!-
        \! \int{\frac{\delta\mathcal{F}(\mu_t)}{\delta\mu}\mathrm{d}\mu_t}\right)\mu_t,\\
        &\partial_t\Phi_t \!=\!-\! \gamma_t\Phi_t\!-\!\left\lVert \nabla\Phi_t\right\rVert^2\!-\!\frac{\delta\mathcal{F}(\mu_t)}{\delta\mu}.
    \end{aligned}\right.
\end{align}
Note that in the subsequent section, we focus on the Wasserstein-SHIFR flow, and defer the detailed formulations with respect to KW-SHIFR and S-SHIFR to the Appendix B.1 and B.2 due to limited space.

\subsection{Finite-Particles Formulations to  SHIFR flows}
Now, we derive the finite-particle approximation to the SHIFR flow, which directly evolves the position $\xb^i_t$, weight $w^i_t$, and velocity $\vb^i_t$ of the particles. 
Specifically, we construct the following ordinary differential equation system to simulate the Wasserstein-SHIFR flow \eqref{Gad-W-flow}
\begin{align}\label{particle_formulation_w}
    \small
    \left\{
	\begin{aligned}
		\mathrm{d}\xb^i_t &=   \vb^i_t  \mathrm{d}t, \\
        \mathrm{d} \vb^i_t &= (-\gamma \vb^i_t - \nabla\frac{\delta\mathcal{F}(\tilde{\mu}_t)}{\delta\mu}(\xb^i_t) )\mathrm{d}t, \\
		\mathrm{d} w^i_t &=\textstyle -\left(\frac{\delta\mathcal{F}(\tilde{\mu}_t)}{\delta\mu}(\xb^i_t) 
        - \sum_{i=1}^M w^i_t \frac{\delta\mathcal{F}(\tilde{\mu}_t)}{\delta\mu}(\xb^i_t)\right)w^i_t\mathrm{d}t,\\
		\tilde{\mu}_t   &= \textstyle\sum_{i=1}^M w^i_t\delta_{\xb^i_t}.
	\end{aligned}\right.
\end{align}

The following proposition demonstrates that the mean-field limit of the particle system \eqref{particle_formulation_w} corresponds precisely to the Wasserstein-SHIFR flow in \eqref{Gad-W-flow}.
\begin{proposition}\label{proposition_1}
    Suppose the empirical distribution $\mut^M_0$ of $M$ weighted particles weakly converges to a 
    distribution $\mu_0$ when $M \to \infty$. 
    Then, the path of \eqref{particle_formulation_w} starting from $\mut^M_0$ and $\Phi_0$ with initial velocity $\mathbf{0}$
    weakly converges to a solution of the Wasserstein-SHIFR gradient flow \eqref{Gad-W-flow}
    starting from $\mu_t|_{t=0} = \mu_0 = $ and $\Phi_t|_{t=0} = \mathbf{0}$ as $M\to\infty$:
\end{proposition}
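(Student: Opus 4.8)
The plan is to read the finite system \eqref{particle_formulation_w} as the empirical-measure flow of a nonlinear characteristic ODE, identify its mean-field equation, and recognise that equation as the Wasserstein-SHIFR flow \eqref{Gad-W-flow} once a velocity potential is reconstructed from the particle velocities. Abbreviate $U^\mu:=\frac{\delta\FM(\mu)}{\delta\mu}$. Throughout I use the standing regularity carried by the chosen smoothing approach (KL-BLOB, KL-GFSD, or KSD-KSDD): $U^\mu$ and $\nabla U^\mu$ are bounded and Lipschitz in the space variable uniformly in $\mu$, and $\mu\mapsto(U^\mu,\nabla U^\mu)$ is Lipschitz for a bounded-Lipschitz (equivalently $W_1$) distance; this holds because each smoothing approach replaces the ill-defined first variation of an empirical measure by a convolution against a fixed smooth kernel. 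I also use that summing the weight ODE in \eqref{particle_formulation_w} gives $\sum_i w^i_t\equiv\sum_i w^i_0$ and that, its right-hand coefficient being bounded, the weights stay in $(0,1]$.

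The first step is a phase-space lift: set $\nu^M_t:=\sum_{i=1}^M w^i_t\,\delta_{(\xb^i_t,\vb^i_t)}$ on $\RBB^d\times\RBB^d$, whose position marginal is $\mut^M_t$. Differentiating $t\mapsto\langle\nu^M_t,\phi\rangle$ for $\phi\in C^\infty_c(\RBB^d\times\RBB^d)$ and inserting the three ODEs of \eqref{particle_formulation_w} (chain rule for the $(\xb^i_t,\vb^i_t)$ terms, the weight ODE for the $w^i_t$ terms) shows that $\nu^M_t$ is a weak solution of the mean-field Vlasov-plus-reaction equation
\begin{align}\label{vlasov_reaction}
\partial_t\nu_t+\nabla_x\cdot(\vb\,\nu_t)+\nabla_v\cdot\big((-\gamma\vb-\nabla U^{\mu_t}(\xb))\,\nu_t\big)=-\Big(U^{\mu_t}(\xb)-{\textstyle\int U^{\mu_t}\,\dr\mu_t}\Big)\nu_t,
\end{align}
with $\mu_t$ the position marginal of $\nu_t$. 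Since $\vb^i_0=\mathbf 0$ and (under the stated mild condition) the initial empirical measures have uniformly bounded second moment, boundedness of $\nabla U^\mu$ and a Gr\"onwall estimate on \eqref{particle_formulation_w} bound $\sup_{t\le T}\sup_i(|\xb^i_t|+|\vb^i_t|)$ uniformly in $M$; hence the curves $t\mapsto\nu^M_t$ have uniformly bounded second moment and uniformly equicontinuous pairings against Lipschitz test functions, so they are relatively compact in $C([0,T];\PM(\RBB^d\times\RBB^d))$.

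Next I would show \eqref{vlasov_reaction} is well posed for $\nu_0=\mu_0\otimes\delta_{\mathbf 0}$: existence by a Banach fixed point for the characteristic flow generated by the smoothed drift of \eqref{vlasov_reaction} on $C([0,T];\PM_2)$, and uniqueness by a Dobrushin-type estimate comparing two solutions in a Wasserstein distance on phase space, using the $\mu$-Lipschitzness of $U^\mu$ and conservation of total mass. Passing to the limit in the weak form of \eqref{vlasov_reaction} along any subsequential limit $\nu^M_t\rightharpoonup\nu_t$ (the nonlinearity carries through by that same Lipschitz regularity) identifies the limit as this unique solution, which promotes subsequential to full convergence. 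Finally, because the initial velocity field $\vb\equiv\mathbf 0=\nabla\Phi_0$ is a gradient and $-\gamma\vb-\nabla U^{\mu_t}$ is a gradient in $\xb$, the velocity field stays curl-free, so --- as long as the characteristics $\dot\xb=\vb,\ \dot\vb=-\gamma\vb-\nabla U^{\mu_t}(\xb)$ stay injective, which holds on a time interval and, under the mild regularity condition, on all of $[0,T]$ --- the unique solution is monokinetic, $\nu_t=(\mathrm{id},\nabla\Phi_t)_{\#}\mu_t$ with $\Phi_0\equiv0$. Substituting this ansatz into \eqref{vlasov_reaction} returns exactly the two lines of \eqref{Gad-W-flow} --- the velocity equation from the $\nabla_v\phi$ part, using $(\nabla^2\Phi_t)\nabla\Phi_t=\tfrac12\nabla\lVert\nabla\Phi_t\rVert^2$, and the density equation from the rest --- so $(\mu_t,\Phi_t)$ solves the Wasserstein-SHIFR flow started at $(\mu_0,\mathbf 0)$ and, in particular, $\mut^M_t\rightharpoonup\mu_t$, which is the assertion. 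The KW- and S-SHIFR cases are identical after swapping $G^I(\mu)^{-1}$ in the drift and the kinetic energy.

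\textbf{Main obstacle.} The delicate point is the stability/uniqueness estimate for \eqref{vlasov_reaction}: the second-order kinetic transport together with the non-conservative Fisher-Rao reaction term does not fit the textbook Dobrushin argument, so one must control how the reaction re-weights mass along characteristics while keeping the coupling cost bounded --- this is exactly where the uniform-in-$\mu$ boundedness and Lipschitz regularity of $U^\mu$, the conservation of total mass, and the a priori trajectory bounds are used together. A secondary technical point is verifying that the characteristic flow stays injective, so that the monokinetic structure, and with it the equivalence between \eqref{vlasov_reaction} and the $(\mu_t,\Phi_t)$-formulation \eqref{Gad-W-flow}, persists on all of $[0,T]$.
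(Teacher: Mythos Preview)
Your proposal is correct in strategy but takes a genuinely different route from the paper's proof. The paper argues at the level of \emph{infinitesimal generators on functionals}: for an arbitrary test functional $\Psi:\PM_2(\RBB^d)\to\RBB$ it computes $\partial_t\Psi[\mu_t]=(\LM\Psi)[\mu_t]$, splits $\LM$ into a position part (coming from the transport under $\nabla\Phi_t$) and a weight part (coming from the Fisher--Rao reaction), does the same for the finite system to obtain $\LM_M$, and then asserts $\LM_M\Psi[\mu^M]\to\LM\Psi[\mu]$ along a weakly convergent sequence. The Euler/Lagrange equivalence you recover via the monokinetic reduction is isolated in the paper as a separate Lemma, proved by the material-derivative identity $(\partial_t+\nabla\Phi_t\!\cdot\!\nabla)\nabla\Phi_t$ together with $\tfrac12\nabla\lVert\nabla\Phi\rVert^2=\nabla^2\Phi\,\nabla\Phi$, and is invoked to match $V_{\mu^M}$ with $\nabla\Phi$ in the limit.

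What the two routes buy: the paper's generator-splitting is short and slots directly into the template used in the prior DPVI/WFR mean-field arguments it cites, but the passage from ``$\LM_M\to\LM$'' to ``$\mu^M_t\rightharpoonup\mu_t$'' is carried out formally. Your kinetic lift to $\nu^M_t$ on $(\xb,\vb)$-space, compactness, and Dobrushin-type stability for the Vlasov-plus-reaction equation is the more classical PDE program; it makes explicit the two genuine analytic obligations (stability with a non-conservative reaction term; persistence of monokineticity via injectivity of the position characteristics) that the paper's argument absorbs into its formal generator step and its lemma. Your identification $(\nabla^2\Phi_t)\nabla\Phi_t=\tfrac12\nabla\lVert\nabla\Phi_t\rVert^2$ in the monokinetic substitution is exactly the computation the paper uses in that lemma, so the core algebraic ingredient is shared even though the surrounding mean-field argument is organised differently.
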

% The particle systems w.r.t. the KW-SHIFR and S-SHIFR flows and their mean-field analysis are provided in the Appendix B.1 and B.2.

\begin{algorithm}[tb]
    \caption{General Accelerated Dynamic-weight Particle-based Variational Inference (GAD-PVI) framework}
    \label{alg:algorithm1}
    \textbf{Input}: 
    Initial distribution $\tilde{\mu}_0 = \sum_{i=1}^M w^i_0 \delta_{\xb^i_0}$, position adjusting step-size $\eta_{pos}$, 
    weight adjusting step-size $\eta_{wei}$, velocity field adjusting step-size $\eta_{vel}$,
    velocity damping parameter $\gamma$.	
    \begin{algorithmic}[1] %[1] enables line numbers
    \STATE Choose a suitable functional $\FM$ and 
    its smoothing strategy $U_{\tilde{\mu}}$ 
    % $\thickapprox \frac{\delta\mathcal{F}(\tilde{\mu})}{\delta\mu}$
     from KL-BLOB/KL-GFSD/KSD-KSDD
    \FOR{$k = 0,1,...,T-1$}
    \FOR{$i = 1,2,...,M$}
    \STATE Update positions $\xb^i_{k+1}$'s according to \eqref{dynamic_pos_w}.
    \ENDFOR 
    \FOR{$i = 1,2,...,M$}
    \STATE  Adjust velocity field $\vb^i_{k+1}$'s according to \eqref{dynamic_ac_w}.
    \ENDFOR
    \FOR{$i = 1,2,...,M$}
    \STATE  Adjust weights $w^i_{k+1}$'s according to \eqref{dynamic_weight}.
    \ENDFOR 
    \ENDFOR
    \STATE \textbf{Output}:$\tilde{\mu}_T = \sum_{i=1}^M w^i_T \delta_{\xb^i_T}$.
    \end{algorithmic}
    \end{algorithm}

\subsection{GAD-PVI Framework}
Generally, it is impossible to obtain an analytic solution of the continuous finite-particles formulations \eqref{particle_formulation_w}, 
thus a numerical integration method is required to derive an approximate solution.
Note that any numerical solver, such as the implicit Euler method \cite{platen2010numerical} 
and higher-order Runge-Kutta method \cite{butcher1964implicit} can be used.
Here, we follow the tradition of ParVIs to adopt the first-order explicit Euler discretization \cite{suli2003introduction} 
since it is efficient and easy-to-implement \cite{dpvi},
and propose our GAD-PVI framework, 
as listed in Algorithm \ref{alg:algorithm1}.

\paragraph{Dissimilarity Functionals and Smoothing Approaches}
To develop practical GAD-PVI methods, we must first select a dissimilarity functional $\FM$. 
The commonly used underlying functionals are 
KL-divergence \cite{liu2016stein,liu2019understanding,AIG} 
and Kernel-Stein-Discrepancy \cite{korba2020non}. 
Once a dissimilarity functional $\FM$ has been chosen, 
we need to select a smoothing approach to approximate 
the first variation of the empirical approximation, 
as the value of $\frac{\delta\FM(\cdot)}{\mu}$ at an 
empirical distribution $\tilde{\mu} = \sum_{i=1}^M w^i \delta_{\xb^i}$ is generally not well-defined. 
Smoothing strategies allow us to approximate the first variation value at the discrete empirical distribution.
Generally, a smoothed approximation to the first total variation is denoted as $U_{\tilde{\mu}}(\cdot) \thickapprox \frac{\delta\mathcal{F}(\tilde{\mu})}{\delta\mu}(\cdot)$.
The commonly used smoothing approaches in the ParVI area, 
namely BLOB (with KL-divergence as $\FM$) \cite{craig2016blob}, 
GFSD (with KL-divergence as $\FM$) \cite{liu2019understanding}, 
and KSDD (with Kernel Stein Discrepancy as $\FM$) \cite{korba2021kernel}, 
are all compatible with our GAD-PVI framework.

Here, we describe the dissimilarity functional KL-divergence and the associated BLOB smoothing approach as an example.
The first total variation of the KL is  
\begin{align}
\frac{\delta\FM(\mu)}{\delta\mu}(\cdot):=\frac{\delta KL(\mu|\pi)}{\delta\mu}(\cdot)=- \log{\pi}(\cdot)+\log\mu(\cdot).
\end{align}
As $\log{\mu(\xb)}$ is ill-defined for the discrete empirical distribution $\tilde{\mu}_k$, 
BLOB smoothing approach reformulate the intractable term $\log{\mu}$ as $\frac{\delta}{\delta \mu}\mathbb{E}_{\mu}\left[\log{\mu}\right]$ and smooth the density with a kernel function $K$, resulting in the approximation
    \begin{align}
        \log{\tilde \mu} &\approx  \frac{\delta}{\delta \tilde{\mu}} \mathbb{E}_{\tilde{\mu}}\left[\log{(\tilde{\mu}*K)}\right]
        \\
        &:=  \textstyle\log{\sum_{i=1}^Mw^iK(\cdot,\xb^i)} + \displaystyle\sum_{i=1}^M\frac{w^iK(\cdot, \xb^i)}{\sum_{j=1}^M w^j K(\xb^i, \xb^j)}.\nonumber
    \end{align}
 for a discrete density $\tilde{\mu}=\sum_{i=1}^{M} w^i\xb^i $.
This leads to the following approximation results: 
    \begin{align}
        \begin{aligned}
            U_{\tilde{\mu}_k} (\xb) = & - \log{\pi(\xb)} + \textstyle\log{\sum_{i=1}^M w^i_k K(\xb,\xb^i_{k})}  \\
            &+ \sum_{i=1}^M{\frac{w^i_kK(\xb,\xb^i_{k})}{\sum_{j=1}^M w^j_kK(\xb^i_{k},\xb^j_{k})}}.\label{blob-u}
        \end{aligned}
    \end{align}
    % \begin{align}
    %     \begin{aligned}
    %         \nabla U_{\tilde{\mu}_k} (\xb)  =
    %          &-\nabla\log{\pi(\xb)} + \frac{\sum_{i=1}^M w^i_k\nabla_{\xb}K(\xb, \xb^i_k)}{\sum_{i=1}^M w^i_k K(\xb,\xb^i_k)}  
    %         \\ &+ \sum_{i=1}^M{\frac{w^i_k\nabla_{\xb}K(\xb,\xb^i_k)}{\sum_{j=1}^Mw^j_kK(\xb^i_k,\xb^j_k)}},\label{blob-v} 
    %     \end{aligned}
    % \end{align}
    Details regarding other dissimilarity functionals and smoothing approaches are included in the Appendix B.3.

\paragraph{Updating rules}
Once the functional $\FM$ and its empirical 
approximation of the first variation 
$U_{\tilde{\mu}} \thickapprox \frac{\delta\mathcal{F}(\tilde{\mu})}{\delta\mu}$ is decided,
we adopt a Jacobi-type strategy to 
update the position $\xb^i_k$, velocity field $\vb^i_k$ and the weight $w^i_{k}$, i.e.,
the calculations in the $k+1$-th iteration are totally based on the variables obtained in the $k$-th iteration.
Therefore, starting from $M$ weighted particles located at $\{\xb_0^i\}_{i=1}^M$ 
with weights $\{w_0^i\}_{i=0}^M$ and $\{\vb_0^i=0\}_{i=0}^M$, GAD-PVI w.r.t. the Wasserstein-SHIFR flow first 
updates the positions of particles according to the following rule: 
\begin{align}\label{dynamic_pos_w}
    \xb^i_{k+1} = \xb^i_{k} + \eta_{pos} \vb_{k}^i.
\end{align}
Then, it adjusts the velocity field as 
\begin{align}\label{dynamic_ac_w}
	\vb_{k+1}^i=(1-\gamma\eta_{vel})\vb_k^i-\eta_{vel}\nabla U_{\tilde{\mu}_k}(\xb_k^i),
\end{align}
and particles' weights as following:
\begin{align}\label{dynamic_weight}
	w^i_{k+1} \!= w^i_k \!-\! \eta_{wei} (U_{\tilde{\mu}_k}(\xb_k^i)\!-\!
    \sum\nolimits_{j = 1}^{M}w_k^j U_{\tilde{\mu}_k}(\xb_k^j) ).
\end{align}
Here $\tilde{\mu}_k = \sum\nolimits_{i=1}^M w^i_k\delta_{\xb^i_{k}}$ denotes the empirical distribution, and 
$\eta_{pos}/\eta_{vel}/\eta_{wei}$ are the discretization stepsizes.
It can be verified that the total mass of $\tilde{\mu}_k$ is 
conserved
% as the following equality holds: 
%     \begin{align*}
%         \textstyle\sum_{i=1}^M \eta_{wei} (U_{\tilde{\mu}_k}(\xb_k^i)-
%         \sum_{j = 1}^{M}w_k^j U_{\tilde{\mu}_k}(\xb_k^j) )  = 0
%     \end{align*}
% Thus, 
and $\tilde{\mu}_k$ remains a valid probability distribution 
during the whole procedure of GAD-PVI, i.e. $\sum_i w^i_k = 1$ for all $k$.
The detailed updating rules of GAD-PVI w.r.t. the  KW-SHIFR and S-SHIFR can be found in Appendix B.3. 

% In developing our SAD-ParVI framework, we adopt a Jacobi-type strategy to 
% update the position $\xb^i_k$, velocity field $\vb^i_k$ and the weight $w^i_{k}$, i.e., we adjust the 
% weight and velocity field based on the position $\xb_{k}^i$ obtained in the $k$-th iteration.
% A Gauss-Siedel-type strategy mentioned in \cite{dpvi} can also be adopted in SAD-ParVI framework,
% which should enhance acceleration effect further but add computational costs.

Notice that, compared to the classical ParVIs, the position acceleration scheme and 
dynamic-weight scheme only bring \emph{little}
extra computational cost, because the number of time-complexity-bottleneck operation, i.e.
calculation of $U_{\tilde{\mu}}$ and $\nabla U_{\tilde{\mu}}$, remains the same.

\paragraph{An alternative Weight Adjusting Approach.}
Except for Continuous Adjusting (CA) strategy,
the Duplicate/Kill (DK) strategy, which is a probabilistic discretization strategy 
to the Fisher-Rao part of \eqref{Gad-W-flow}, can also be adopted in GAD-PVI.
This strategy duplicates/kills particle $\xb^i_{k+1}$ according to an exponential clock with instantaneous rate: 
    \begin{align}\label{dk}
        R^i_{k+1} \!=\! -\! \eta_{wei} (\frac{\delta\mathcal{F}(\tilde{\mu}_k)}{\delta\mu}(\xb_k^i)\!-\!
        \sum_{j = 1}^{M}w_k^j \frac{\delta\mathcal{F}(\tilde{\mu_k})}{\delta\mu}(\xb_k^j) ).
    \end{align}
Specifically, if $R^i_{k+1}>0$, duplicate the particle $\xb^i_{k+1}$ with probability $1-\exp{(-R^i_{k+1})}$, and kill another one with 
    uniform probability to conserve the total mass; if $R^i_{k+1}<0$, kill the particle $\xb^i_{k+1}$ with probability $1 - \exp{(R^i_{k+1})}$, 
    and duplicate another one with uniform probability.
By replacing the CA strategy \eqref{dynamic_weight} 
in the GAD-PVI framework, we could obtain the DK variants of GAD-PVI methods.

\paragraph{GAD-PVI instances.}
With different underlying information metric tensors (W-metric, KW-metic and S-metric), weight adjustment approaches(CA and DK) and dissimilarity functionals/associated smoothing approaches(KL-BLOB, KL-GFSD and KSD-KSDD), we can derive 18 different intances of GAD-PVI, named as WGAD/KWGAD/SGAD-CA/DK-BLOB/GFSD/KSDD.

\section{Experiments}
In this section, we conduct empirical studies with our GAD-PVI algorithms. Here, we focus on the instances of GAD-PVI w.r.t. the W-SHIFR flows, i.e.,WGAD-CA/DK-BLOB/GFSD.
The experimental results on methods w.r.t. the KW-SHIFR and S-SHIFR flows are provided in the Appendix C\ref{additional_result}. 
Note that we do not include GAD-PVI methods with the KSDD smoothing approaches, as they are more computationally expensive and have been widely reported to be less stable\cite{korba2020non,dpvi}. 
We include four classes of methods as our baseline:
classical ParVI algorithms (SVGD, GFSD and BLOB),
the Nesterov accelerated ParVI algorithms (WNES-BLOB/GFSD),
the Hamiltonian accelerated ParVI algorithms
(WAIG-BLOB/GFSD) and
the Dynamic-weight ParVI algorithms (DPVI-CA/DK-BLOB/GFSD).
% Notice that although Nesterov accelerated ParVI have WNES variants and WAG variants,
% the WNES variants are reported to be more robust to parameters and efficient \cite{liu2019understanding}.
% Therefore we only choose WNES algorithms to represent the Nesterov accelerated ParVI algorithms.

We compare the performance of these algorithms on two simulations,
 i.e., a 10-D Single-mode Gaussian model (SG) and a Gaussian mixture model (GMM),
and two real-world applications, i.e. Gaussian Process (GP) regression and Bayesian neural network (BNN).
For all the algorithms, the particles' weights are initialized to be equal.
In the first three experiments, we tune the parameters to achieve the best $W_2$ distance.
In the BNN task, we split $1/5$ of the training set as our validation set to tune the parameters.
Note that,
the position step-size are tuned via grid search for the fixed-weight ParVI algorithms, 
then used in the corresponding dynamic-weight algorithms.
The acceleration parameters and weight adjustment parameters
are tuned via grid search for each specific algorithm.
We repeat all the experiments 10 times and report the average results.
Due to limited space, only parts of the results are reported in this section.
We refer readers to the Appendix C\ref{additional_result} for the results on SG and additional results for GMM, GP and BNN.

% \begin{figure}[!htb]
% 	\centering
% 	\subfigure{\includegraphics[width=.5\linewidth]{figures/mg/mg_blob/32_DK_BLOB/demo_w2.pdf}}%\hfill
% 	\subfigure{\includegraphics[width=.5\linewidth]{figures/mg/mg_blob/32_DK_GFSD/demo_w2.pdf}}
% 	\quad
	
% 	\vspace{-3mm}
% 	\subfigure{\includegraphics[width=.5\linewidth]{figures/mg/mg_blob/64_DK_BLOB/demo_w2.pdf}}%\hfill
% 	\subfigure{\includegraphics[width=.5\linewidth]{figures/mg/mg_blob/64_DK_GFSD/demo_w2.pdf}}
% 	\quad 
	
% 	\vspace{-3mm}
% 	\subfigure{\includegraphics[width=.5\linewidth]{figures/mg/mg_blob/128_DK_BLOB/demo_w2.pdf}}%\hfill
% 	\subfigure{\includegraphics[width=.5\linewidth]{figures/mg/mg_blob/128_DK_GFSD/demo_w2.pdf}}
% 	\quad 
	
% 	\vspace{-3mm}
% 	\subfigure{\includegraphics[width=.5\linewidth]{figures/mg/mg_blob/256_DK_BLOB/demo_w2.pdf}}%\hfill
% 	\subfigure{\includegraphics[width=.5\linewidth]{figures/mg/mg_blob/256_DK_GFSD/demo_w2.pdf}}
% 	\quad 
	
% 	\vspace{-3mm}
% 	\subfigure{\includegraphics[width=.5\linewidth]{figures/mg/mg_blob/512_DK_BLOB/demo_w2.pdf}}%\hfill
% 	\subfigure{\includegraphics[width=.5\linewidth]{figures/mg/mg_blob/512_DK_GFSD/demo_w2.pdf}}
% 	\quad 
% 	\caption{$W_2$ distance to the target w.r.t. iteration in the Gaussian Mixture Model task.}
% 	\label{figure_mg}
% \end{figure}

\begin{figure}[ht!]
    \centering
    \begin{subfigure}{\linewidth}
        \includegraphics[width=.48\linewidth]{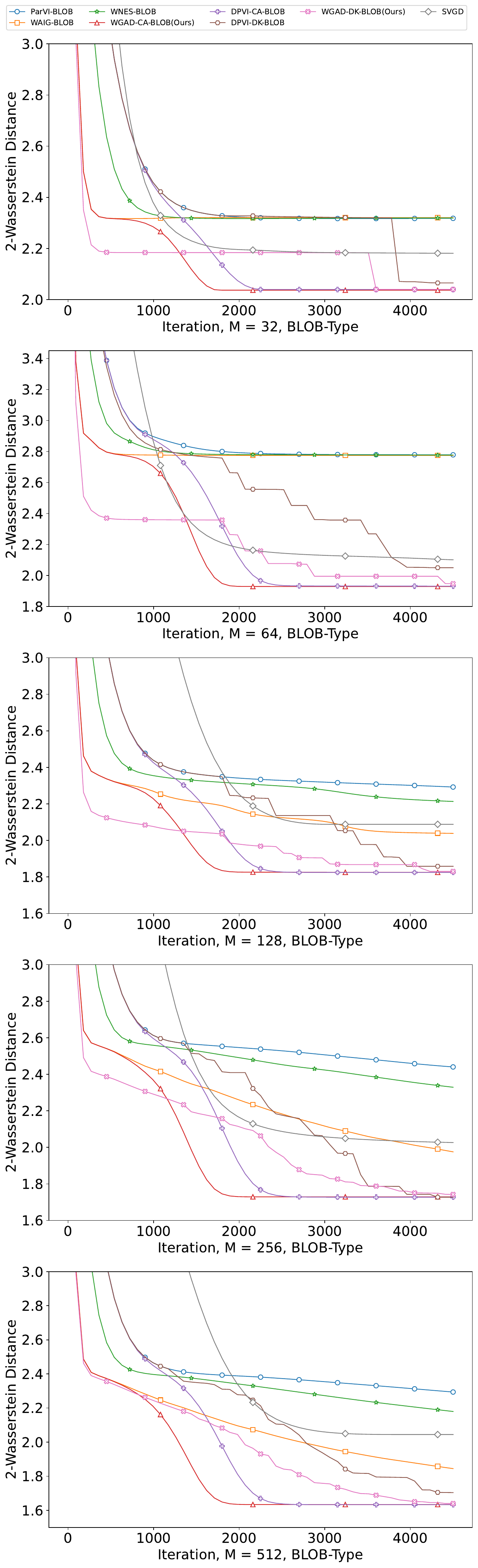}
        \includegraphics[width=.48\linewidth]{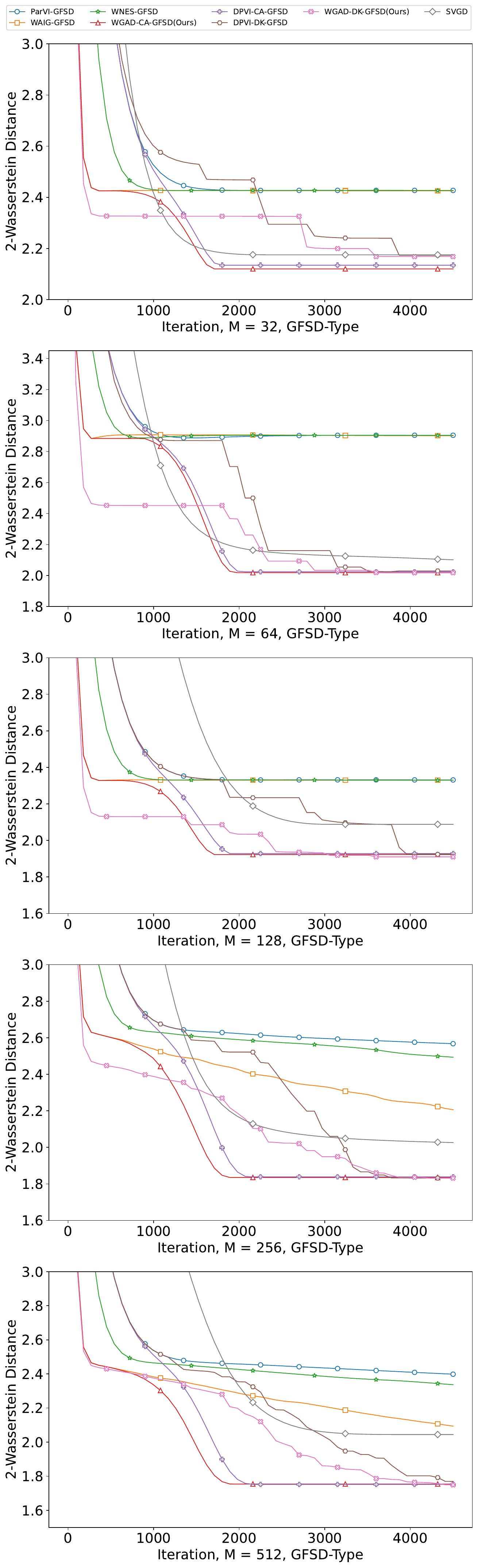}
    \end{subfigure}\qquad
    % \begin{subfigure}{\linewidth}
    %     \includegraphics[width=.48\linewidth]{figures/mg/mg_blob/64_DK_BLOB/demo_w2.pdf}
    %     \includegraphics[width=.48\linewidth]{figures/mg/mg_blob/64_DK_GFSD/demo_w2.pdf}
    % \end{subfigure}\qquad
    % \begin{subfigure}{\linewidth}
    %     \includegraphics[width=.48\linewidth]{figures/mg/mg_blob/128_DK_BLOB/demo_w2.pdf}
    %     \includegraphics[width=.48\linewidth]{figures/mg/mg_blob/128_DK_GFSD/demo_w2.pdf}
    % \end{subfigure}
    % \begin{subfigure}{\linewidth}
    %     \includegraphics[width=.48\linewidth]{figures/mg/mg_blob/256_DK_BLOB/demo_w2.pdf}
    %     \includegraphics[width=.48\linewidth]{figures/mg/mg_blob/256_DK_GFSD/demo_w2.pdf}
    % \end{subfigure}
    % \begin{subfigure}{\linewidth}
    %     \includegraphics[width=.48\linewidth]{figures/mg/mg_blob/512_DK_BLOB/demo_w2.pdf}
    %     \includegraphics[width=.48\linewidth]{figures/mg/mg_blob/512_DK_GFSD/demo_w2.pdf}
    % \end{subfigure}
    \caption{$W_2$ distance to the target w.r.t. iterations in the Gaussian Mixture Model task.}
    \vspace{-5mm}
    \label{figure_mg}
    \end{figure}

\subsection{Gaussian Mixture Model}\label{gmm_exp}
We consider approximating a 10-D Gaussian mixture model with two components, weighted by $1/3$ and $2/3$ respectively.
We run all algorithms with particle number $M\in\{32, 64, 128, 256, 512\}$.

% As reported in \cite{korba2020non},
% the authors claimed that the KSDD-type ParVI methods may lead to 
% particle entrapment in saddle points when confronted with multi-mode models.
% Therefore, we have opted to exclude these KSDD-type methods in our GMM task.
% Here we exclusively present the results obtained using Wasserstein-Type methods, 
% while the outcomes for KW-Type and S-Type methods can be found in the Appendix.
% Furthermore, we introduce DPVI-Type and classical-Type variants under the KW and S metrics, 
% respectively, to serve as ablation baselines.
In Figure \ref{figure_mg}, we report the 2-Wasserstein ($W_2$) distance between the 
empirical distribution generated by each algorithm and the target distribution  w.r.t. iterations
of different ParVI methods. 
We generate 5,000 samples from the target distribution $\pi$ as reference to evaluate the $W_2$ distance 
by using the POT library \footnote{http://jmlr.org/papers/v22/20-451.html}. 
The results demonstrate that our GAD-PVI algorithms consistently outperform their counterpart with only one (or none) of the accelerated position update strategy and dynamic weight adjustment approach.
Besides, the CA weight-adjustment approach usually result a lower $W_2$ compared to the DK scheme, and WGAD-CA-BLOB/GFSD usually have the fastest convergence and the lowest final $W_2$ distance to the target.

\begin{table}[tb]
	\centering
	 \resizebox{0.45\textwidth}{!}{
	\begin{tabular}{c|cc}
		\toprule
		\multirow{2}*{Algorithm} & \multicolumn{2}{c}{Smoothing Strategy}  \\
		~    & BLOB &GFSD \\
		\hline
		ParVI & 	1.570e-01 $\pm $ 2.210e-04  & 2.143e-01 $\pm $ 7.424e-04  \\
		WAIG & 	1.572e-01 $\pm $ 2.070e-04  & 2.142e-01 $\pm $ 7.048e-04 \\
		WNES & 	1.571e-01 $\pm $ 3.011e-04  & 2.138e-01 $\pm $ 7.771e-04 \\
        DPVI-DK & 	1.568e-01 $\pm $ 1.496e-03  & 2.142e-01 $\pm $ 2.712e-03 \\   
		DPVI-CA & 	1.285e-01 $\pm $ 2.960e-04 & 1.638e-01 $\pm $ 4.332e-04 \\
        \textbf{W\NAME-DK} & 	1.561e-01 $\pm $ 1.155e-03  & 2.142e-01 $\pm $ 1.501e-03 \\
        \textbf{W\NAME-CA} & 	\textbf{1.274e-01 $\pm $ 2.964e-04}  & \textbf{1.626e-01  $\pm $ 4.842e-04 }\\

        % \hline
        % KWAIG & 	1.571e-01  & 2.146e-01  \\
        % \NAME-KW-CA & 	1.341e-01  & 1.991e-01  \\
        % \NAME-KW-DK & 	1.566e-01  & 2.072e-01  \\
        
        % \hline
        % SAIG & 	 1.570e-01 & 2.084e-01  \\
        % \NAME-S-CA & 	\textbf{1.236e-01}  & 1.691e-01  \\
        % \NAME-S-DK & 	1.552e-01  & 2.012e-01  \\
		\bottomrule
	\end{tabular}}
    % \vspace{-3mm}
	\caption{Averaged $W_2$ for the GP task with dataset LIDAR.}
	\label{gp_w2_table}
	% \vspace{-5mm}
\end{table}

\begin{figure*}[ht!]
    \captionsetup[subfigure]{labelformat=empty}
    \centering
    \tiny
    \begin{subfigure}{.19\linewidth}
        \makebox[0pt][r]{\makebox[15pt]{\raisebox{25pt}{\rotatebox[origin=c]{90}{ParVI-BLOB}}}}%
        \includegraphics[width=\linewidth,height= .6\linewidth]{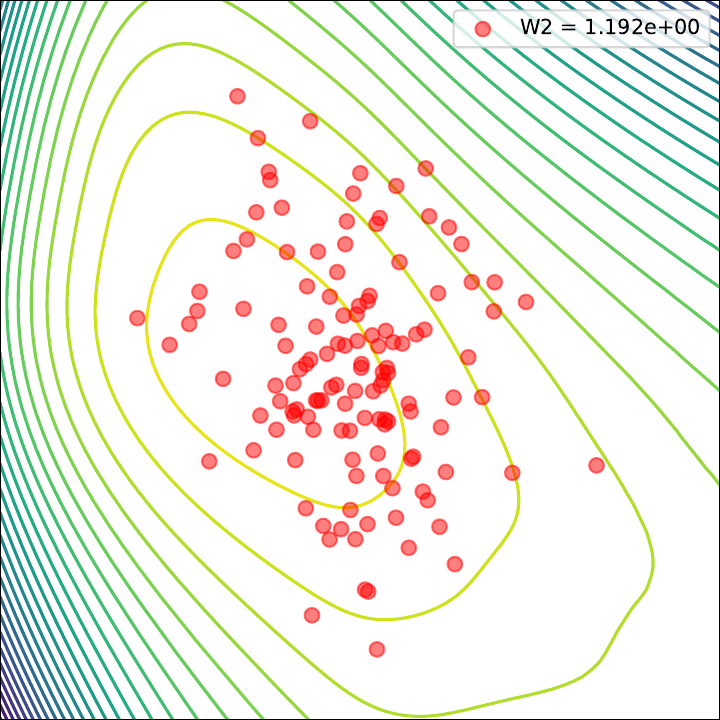}
    \end{subfigure}
    \begin{subfigure}{.19\linewidth}
        \includegraphics[width=\linewidth,height= .6\linewidth]{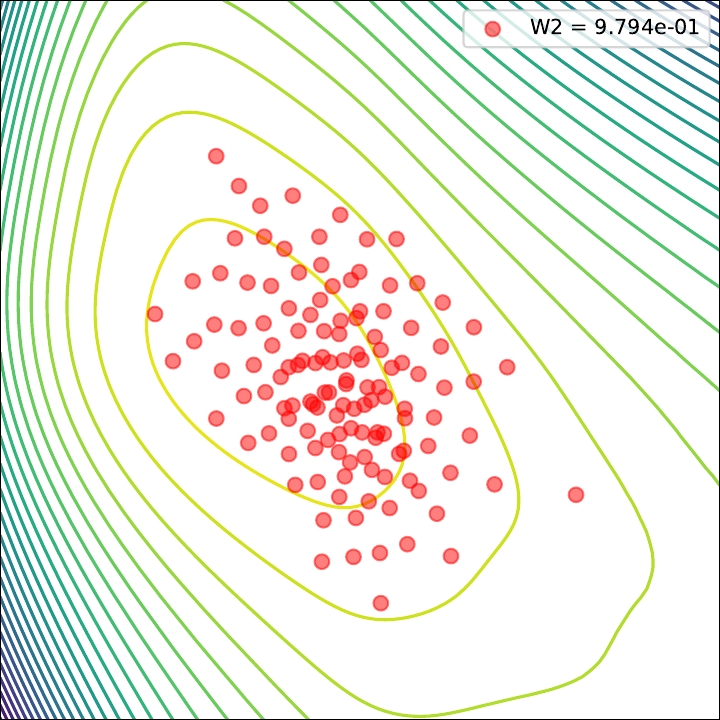}
    \end{subfigure}
    \begin{subfigure}{.19\linewidth}
        \includegraphics[width=\linewidth,height= .6\linewidth]{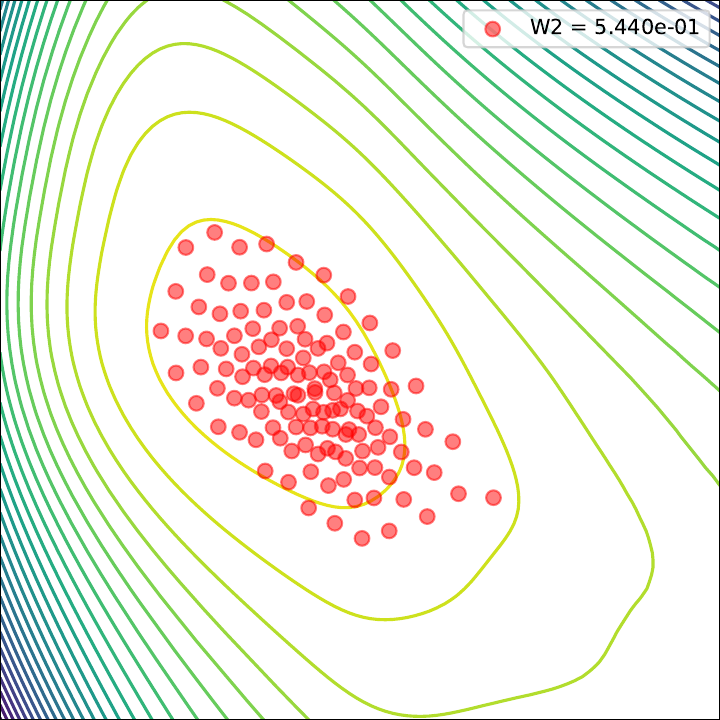}
    \end{subfigure}
    \begin{subfigure}{.19\linewidth}
        \includegraphics[width=\linewidth,height= .6\linewidth]{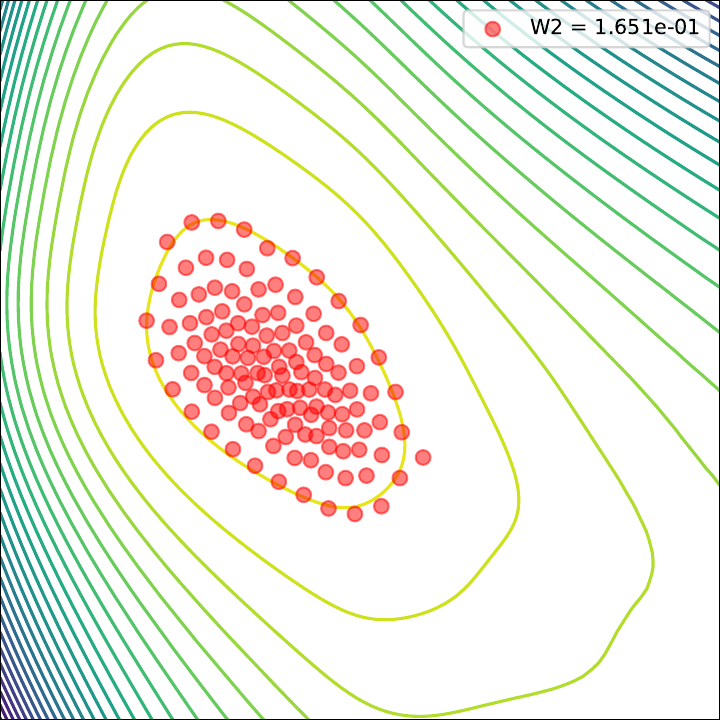}
    \end{subfigure}
    \begin{subfigure}{.19\linewidth}
        \includegraphics[width=\linewidth,height= .6\linewidth]{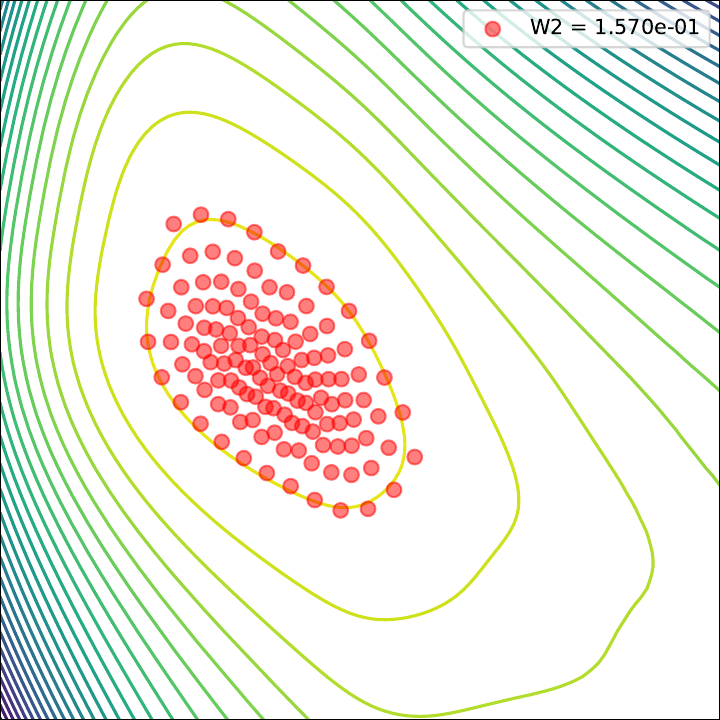}
    \end{subfigure}

    \makebox[0pt][r]{\makebox[15pt]{\raisebox{30pt}{\rotatebox[origin=c]{90}{WAIG-BLOB}}}}%
    \begin{subfigure}{.19\linewidth}
        \includegraphics[width=\linewidth,height= .6\linewidth]{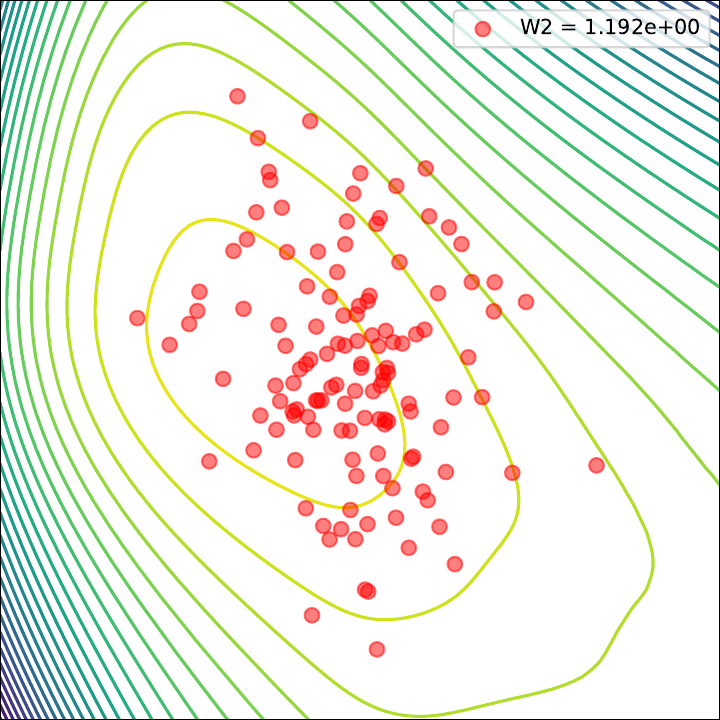}
    \end{subfigure}
    \begin{subfigure}{.19\linewidth}
        \includegraphics[width=\linewidth,height= .6\linewidth]{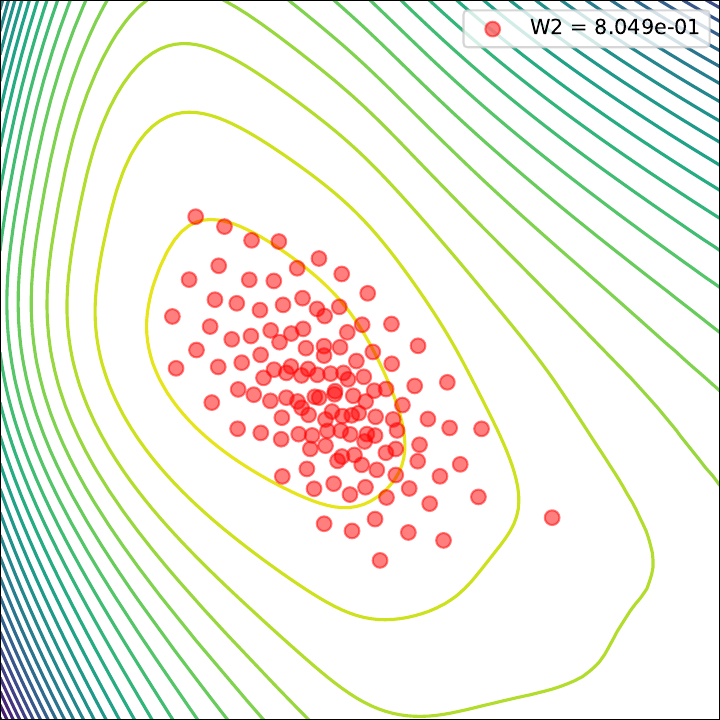}
    \end{subfigure}
    \begin{subfigure}{.19\linewidth}
        \includegraphics[width=\linewidth,height= .6\linewidth]{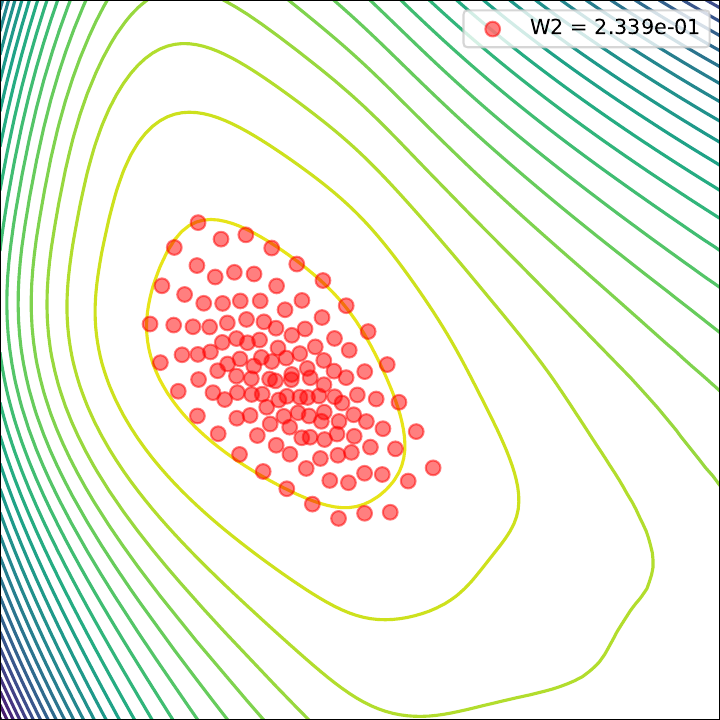}
    \end{subfigure}
    \begin{subfigure}{.19\linewidth}
        \includegraphics[width=\linewidth,height= .6\linewidth]{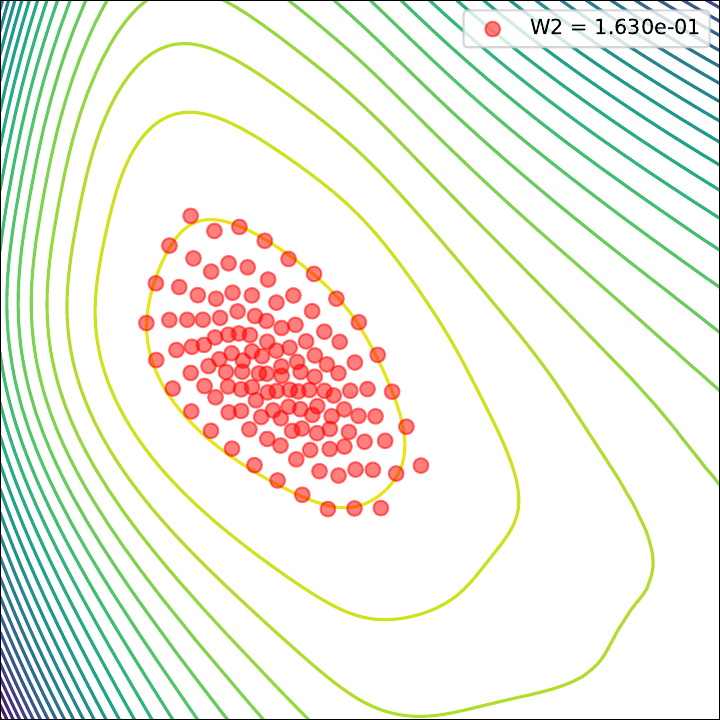}
    \end{subfigure}
    \begin{subfigure}{.19\linewidth}
        \includegraphics[width=\linewidth,height= .6\linewidth]{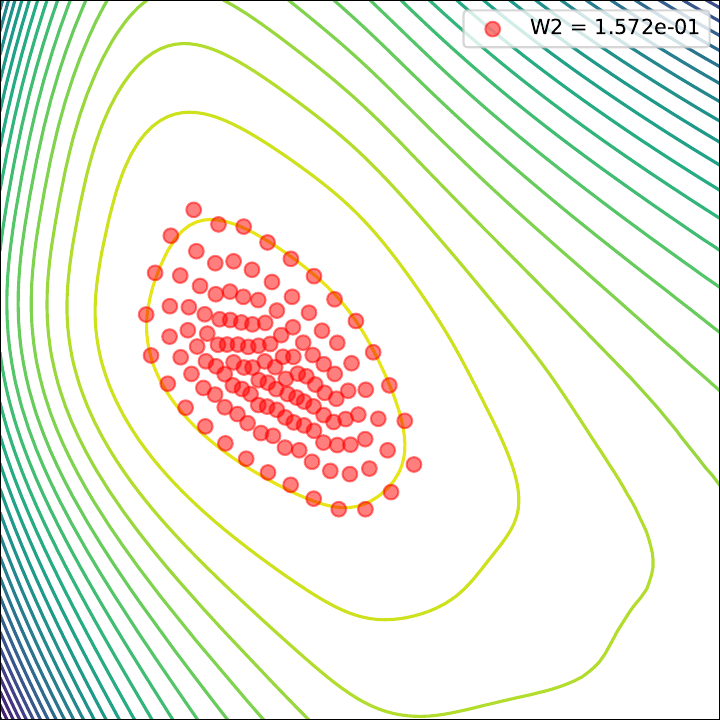}
    \end{subfigure}

    \makebox[0pt][r]{\makebox[15pt]{\raisebox{30pt}{\rotatebox[origin=c]{90}{DPVI-CA-BLOB}}}}%
    \begin{subfigure}{.19\linewidth}
        \includegraphics[width=\linewidth,height= .6\linewidth]{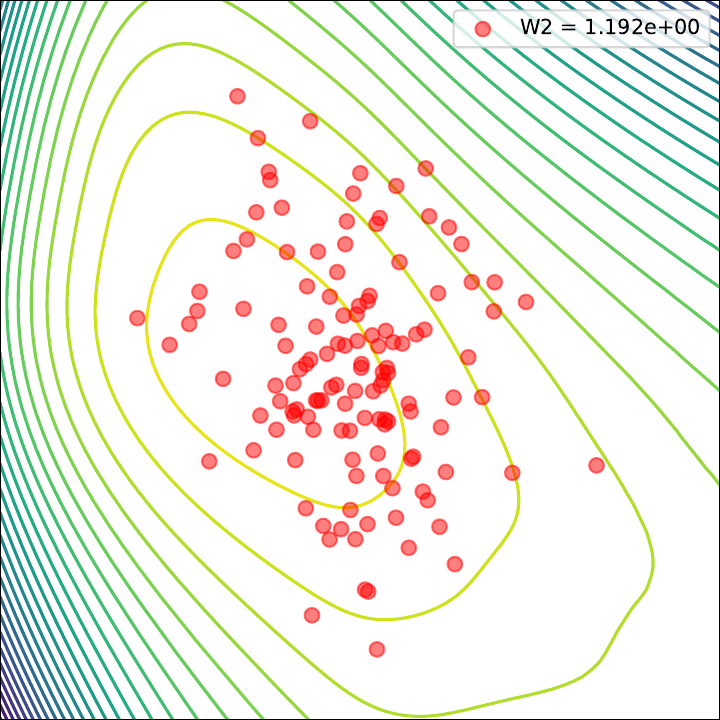}
    \end{subfigure}
    \begin{subfigure}{.19\linewidth}
        \includegraphics[width=\linewidth,height= .6\linewidth]{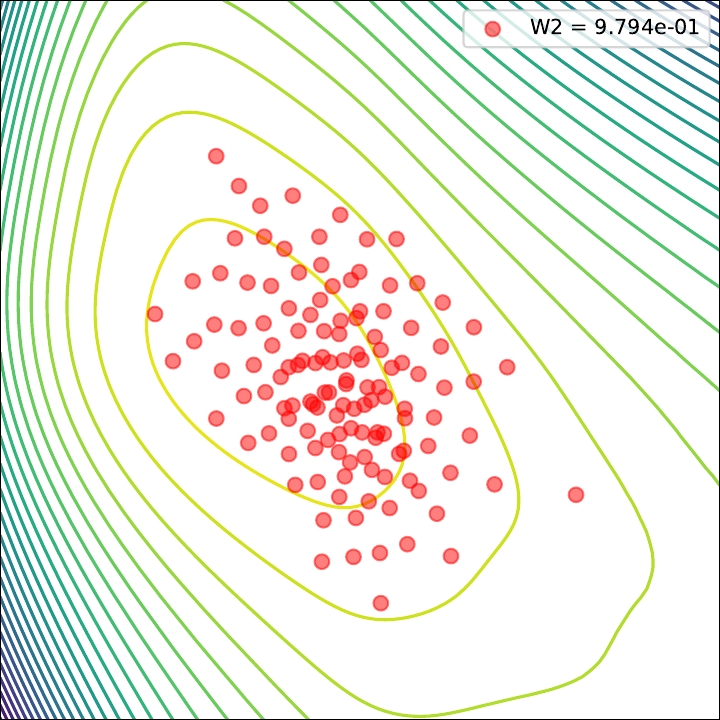}
    \end{subfigure}
    \begin{subfigure}{.19\linewidth}
        \includegraphics[width=\linewidth,height= .6\linewidth]{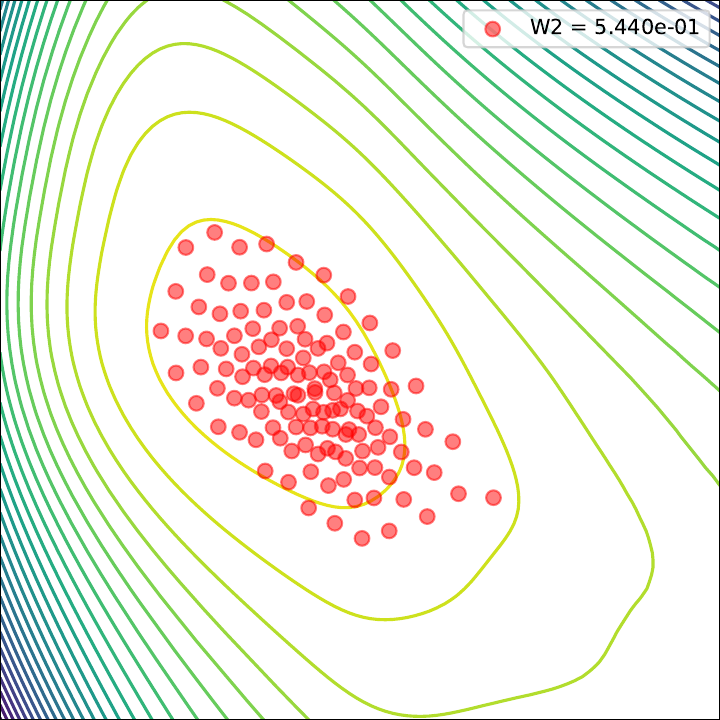}
    \end{subfigure}
    \begin{subfigure}{.19\linewidth}
        \includegraphics[width=\linewidth,height= .6\linewidth]{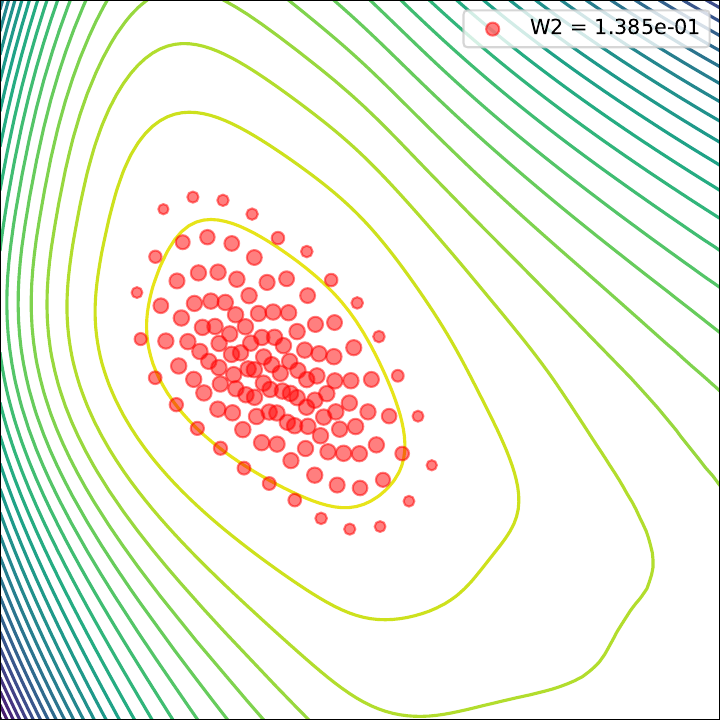}
    \end{subfigure}
    \begin{subfigure}{.19\linewidth}
        \includegraphics[width=\linewidth,height= .6\linewidth]{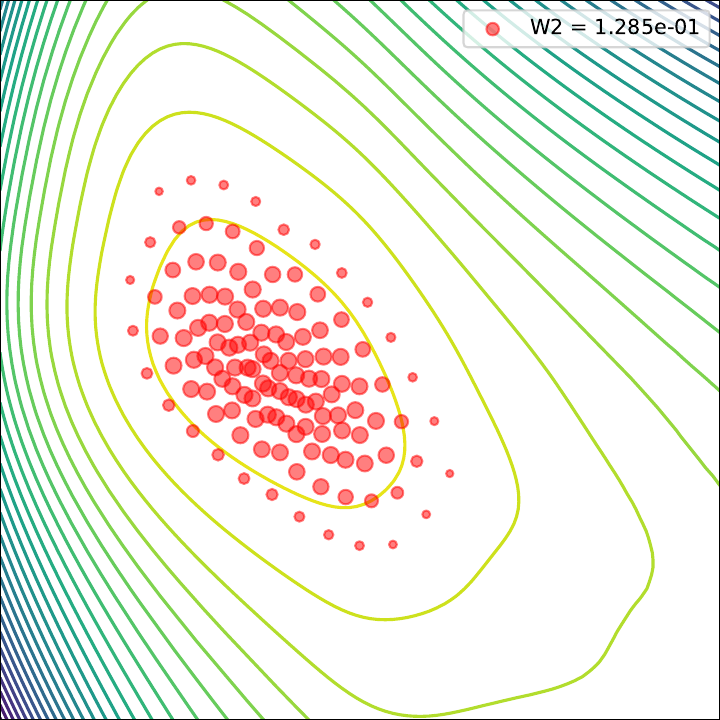}
    \end{subfigure}

    \makebox[0pt][r]{\makebox[15pt]{\raisebox{40pt}{\rotatebox[origin=c]{90}{WGAD-CA-BLOB}}}}%
    \begin{subfigure}{.19\linewidth}
        \includegraphics[width=\linewidth,height= .6\linewidth]{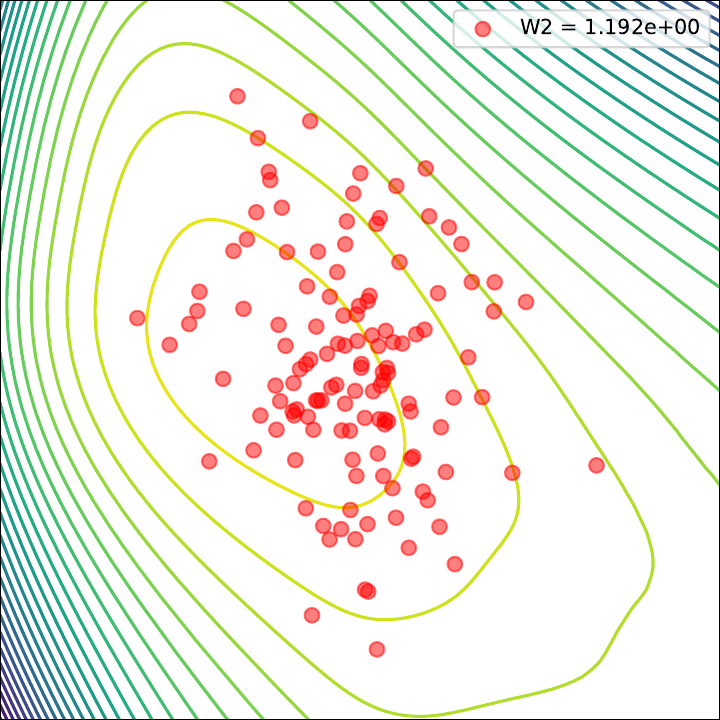}
        \caption{Iteration 0}
    \end{subfigure}
    \begin{subfigure}{.19\linewidth}
        \includegraphics[width=\linewidth,height= .6\linewidth]{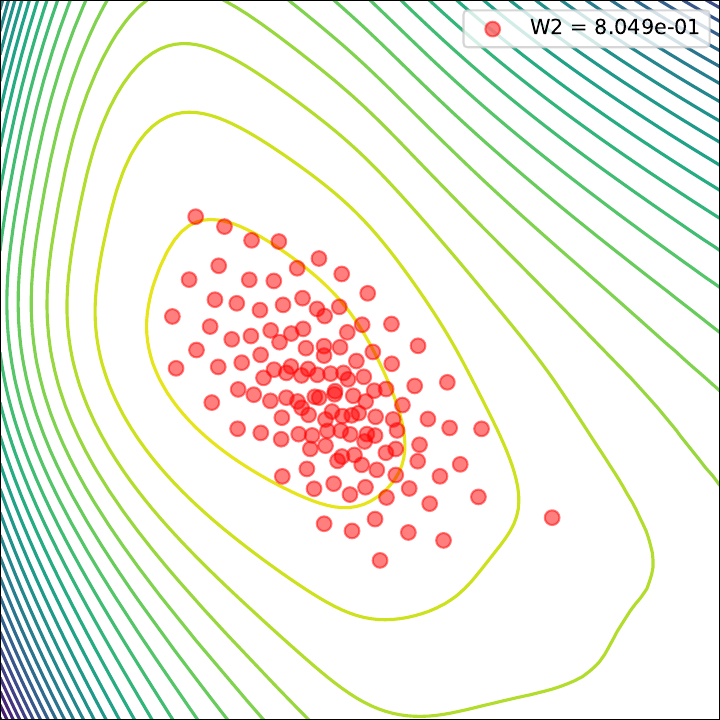}
        \caption{Iteration 100}
    \end{subfigure}
    \begin{subfigure}{.19\linewidth}
        \includegraphics[width=\linewidth,height= .6\linewidth]{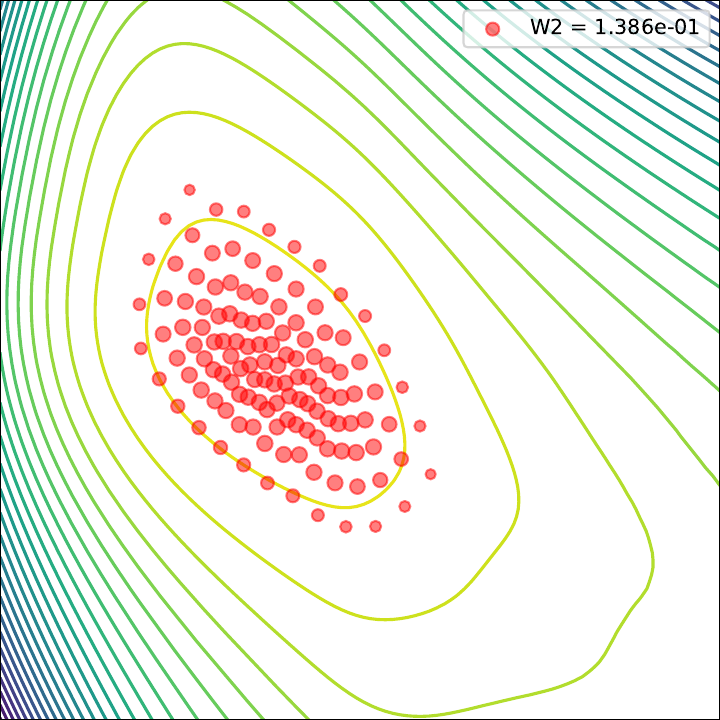}
        \caption{Iteration 500}
    \end{subfigure}
    \begin{subfigure}{.19\linewidth}
        \includegraphics[width=\linewidth,height= .6\linewidth]{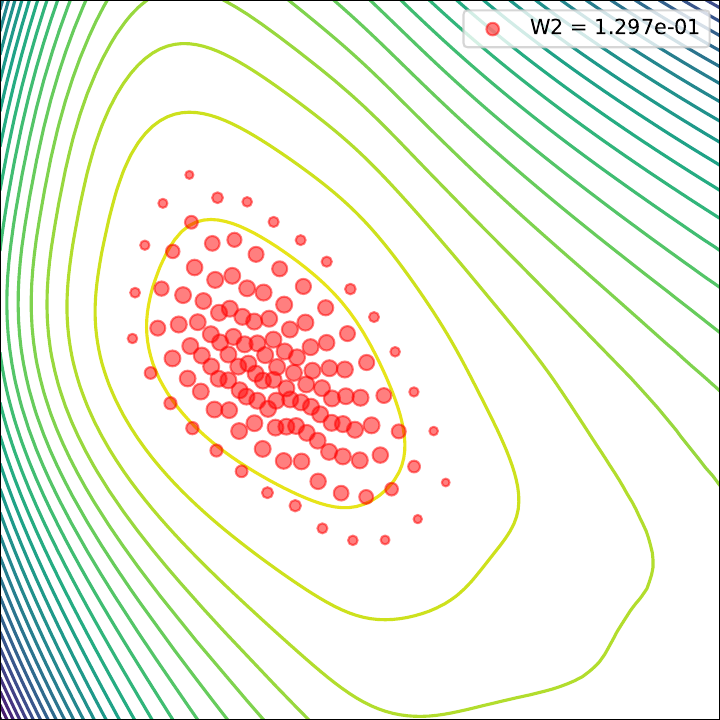}
        \caption{Iteration 2000}
    \end{subfigure}
    \begin{subfigure}{.19\linewidth}
        \includegraphics[width=\linewidth,height= .6\linewidth]{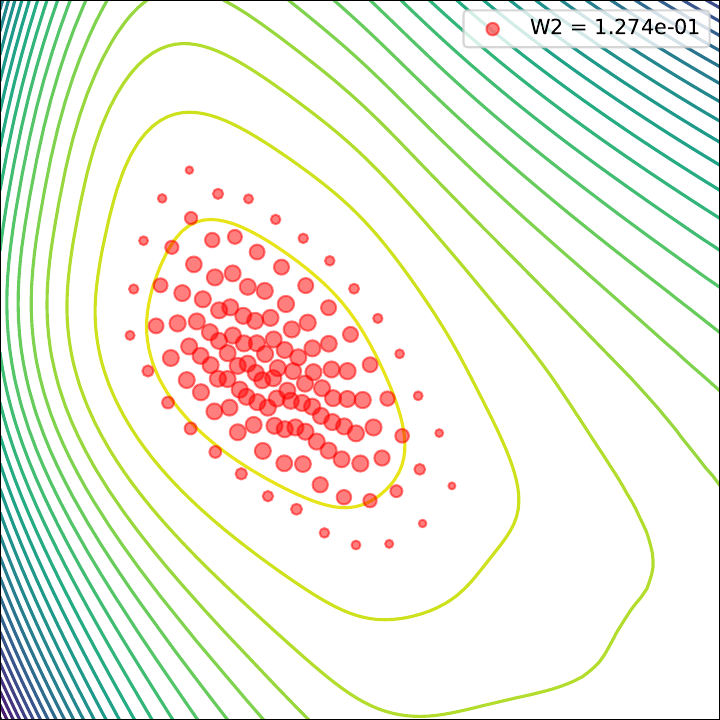}
        \caption{Iteration 10000}
    \end{subfigure}
    \vspace{-3mm}
    \caption{The contour lines of the log posterior in the Gaussian Process task.}
    \label{figure_contour}
    \vspace{-5mm}
    \end{figure*}

\subsection{Gaussian Process Regression}
The Gaussian Process (GP) model is widely adopted for the uncertainty quantification in regression problems \cite{rasmussen2003gaussian}. 
We follow the experiment setting in \cite{chen2018stein}, and use the dataset LIDAR
% (denoted as $\mathcal{D} = \{(x_i,y_i)\}^N_{i=1}$),
which consists of 221 observations.
% of scalar variable $x_i$ and $y_i$.
% Denote $\xb = [x_1, x_2,...,x_N]^T$ and $\yb = [y_1, y_2, ..., y_{N}]^T$, the target log-posterior w.r.t. the model parameter 
% $\phi = (\phi_1, \phi_2)$ is defined as follows: 
% \begin{align*}%\textstyle
% 	\log{p(\phi|\mathcal{D})}\! = \!-\frac{\yb^T \Kb^{-1}_y \yb}{2} \!-\! \frac{\log{\det{(\Kb_y)}}}{2} \!-\! \log{(1\!+\!\xb^T\xb)}.
% \end{align*}
% Here, $\Kb_y$ is a covariance function $\Kb_y = \Kb + 0.04\mathbf{I}$ with $\Kb_{i,j} = \exp(\phi_1)\exp(-\exp(\phi_2)(x_i - x_j)^2)$ and 
% 	$\mathbf{I}$ represents the identity matrix. 
In this task, we set the particle number to $M=128$ for all the algorithms.

We report the $W_2$ distance between the empirical distribution after 10000 iterations
and the target distribution in Table \ref{gp_w2_table}.
The target distribution is approximated by
10000 reference particles generated by the HMC method after it achieves its equilibrium\cite{brooks2011handbook}.
It can be observed that both the accelerated position update and the dynamic weight adjustment result in a decreased $W_2$ and GAD-PVI algorithms consistently achieve loweset $W_2$ to the target.
Besides, the results also show that the CA variants usually outperforms their DK counterpart, as CA is able to adjust the weight continuously on $[0,1]$ while DK set the weight either to $0$ or $1/M$.

In Figure \ref{figure_contour}, we plot the contour lines of the log posterior and the particles 
generated by four representative algorithms, namely BLOB, WAIG-BLOB, DPVI-CA-BLOB, and WGAD-CA-BLOB, 
at different iterations (0, 100, 500, 2000, 10000). 
The results indicate that the particles in WAIG-BLOB and WGAD-CA-BLOB exhibit a faster convergence to the high probability area of the target  due 
to their accelerated position updating strategy,
and the DPVI-CA and WGAD-CA algorithms finally offer a broader final coverage, 
as the CA dynamic weight adjustment strategy enables the particles to represent the region with arbitrary local density mass instead of a fixed $1/M$ mass.

\begin{table}[tb]
	\centering
	 \resizebox{0.45\textwidth}{!}{
	\begin{tabular}{c|cccc}
		\toprule
		\multirow{2}*{algorithms} & \multicolumn{4}{c}{Datasets}  \\
		~    & Concrete & kin8nm &RedWine&space \\
        \hline
        ParVI-SVGD & 	6.323ee+00  & 8.020e-02 & 6.330e-01 &9.021e-02\\
		\hline
		ParVI-BLOB & 	6.313e+00  & 7.891e-02 & 6.318e-01 &8.943e-02\\
		WAIG-BLOB & 	6.063e+00  & 7.791e-02 & 6.267e-01&8.775e-02 \\
		WNES-BLOB & 	6.112e+00  & 7.690e-02 & 6.264e-01 &8.836e-02\\
        DPVI-DK-BLOB & 	6.285e+00  & 7.889e-02 & 6.294e-01&8.853e-02 \\
        DPVI-CA-BLOB & 	6.292e+00  & 7.789e-02 & 6.298e-01&8.850e-02 \\
        \textbf{W\NAME-DK-BLOB} & 	6.058e+00  & 7.688e-02 & 6.267e-01& 8.716e-02\\
        \textbf{W\NAME-CA-BLOB} & 	\textbf{6.047e+00}  & \textbf{7.629e-02} & \textbf{6.263e-01}&\textbf{8.704e-02} \\

        \hline
		ParVI-GFSD & 	6.314e+00  & 7.891e-02 & 6.317e-01 & 8.943e-02\\
		WAIG-GFSD & 	6.105e+00  & 7.794e-02 & 6.265e-01 & 8.776e-02\\
		WNES-GFSD & 	6.123e+00  & 7.756e-02 & 6.263e-01 & 8.836e-02\\
        DPVI-DK-GFSD & 	6.291e+00  & 7.882e-02 & 6.277e-01 & 8.851e-02\\
        DPVI-CA-GFSD & 	6.290e+00  & 7.791e-02 & 6.298e-01 & 8.852e-02\\
        \textbf{W\NAME-DK-GFSD} & 	6.099e+00  & 7.726e-02 & 6.265e-01 &\textbf{8.708e-02} \\
        \textbf{W\NAME-CA-GFSD} & 	\textbf{6.088e+00}  & \textbf{7.634e-02} & \textbf{6.260e-01} &8.710e-02 \\

		\bottomrule
	\end{tabular}}
    % \vspace{-3mm}
	\caption{Averaged Test $RMSE$ in the BNN task.}
	\label{bnn_rmse_w}
	\vspace{-5mm}
\end{table}

% \begin{table}[tb]
% 	\centering
%     \resizebox{0.45\textwidth}{!}{
% 	\begin{tabular}{c|cccc}
% 		\toprule
% 		\multirow{2}*{Algorithm} & \multicolumn{4}{c}{Datasets}  \\
% 		~    & Concrete & kin8nm & RedWine & space \\
% 		\hline
%         ParVI-SVGD & 	1.738e+00 &1.160e+00  & 6.943e+00 & 1.819e-01 \\
%         \hline
% 		ParVI-BLOB & 	1.849e+00 & 1.122e+00  & 6.900e+00 & 1.800e-01 \\
% 		WAIG-BLOB &  1.710e+00	&1.093e+00  & 6.790e+00 & 1.686e-01 \\
% 		WNES-BLOB & 1.734e+00	& 1.065e+00  & 6.799e+00 & 1.719e-01 \\
%         DPVI-DK-BLOB & 1.833e+00	&1.122e+00  & 6.848e+00 & 1.735e-01 \\
%         DPVI-CA-BLOB & 1.837e+00	&1.093e+00  & 6.856e+00 & 1.730e-01 \\
%         W\NAME-DK-BLOB & 	1.703e+00 &1.065e+00  & 6.785e+00 & 1.633e-01 \\
%         W\NAME-CA-BLOB & 	\textbf{1.697e+00} &\textbf{1.048e+00}  & \textbf{6.782e+00} & \textbf{1.629e-01} \\

%         \hline
% 		ParVI-GFSD & 	1.850e+00  & 1.122e+00 & 6.899e+00 & 1.799e-01\\
% 		WAIG-GFSD & 	1.724e+00  & 1.094e+00 & 6.785e+00 & 1.685e-01\\
% 		WNES-GFSD & 	1.740e+00  & 1.108e+00 & 6.781e+00 & 1.719e-01\\
%         DPVI-DK-GFSD & 	1.836e+00  & 1.120e+00 & 6.812e+00 & 1.727e-01\\
%         DPVI-CA-GFSD & 	1.836e+00  & 1.093e+00 & 6.857e+00 & 1.730e-01\\
%         W\NAME-DK-GFSD & 	1.722e+00  & 1.075e+00 & 6.780e+00 & \textbf{1.630e-01} \\
%         W\NAME-CA-GFSD & 	\textbf{1.720e+00}  & \textbf{1.050e+00} & \textbf{6.774e+00} & 1.637e-01 \\
% 		\bottomrule
% 	\end{tabular}}
% 	\caption{Averaged Test $NLL$ distances for different ParVI methods in BNN task.}
% 	\label{bnn_ll_w}
% 	\vspace{-5mm}
% \end{table}

\subsection{Bayesian Neural Network}
In this experiment, we study a Bayesian regression 
task with Bayesian neural network on 4 datasets from 
UCI and LIBSVM.
We follow the experiment setting from \cite{liu2016stein,dpvi}, 
which models the output as a Gaussian distribution and uses a $\text{Gamma}(1, 0.1)$ prior for the inverse covariance.
We use a one-hidden-layer neural network with 50 hidden units and maintain 128 particles. 
For all the datasets, we set the batchsize as 128.

We present the Root Mean Squared Error (RMSE) of various ParVI algorithms in Table \ref{bnn_rmse_w}.
%and the Negative Log-likelihood (NLL) results in Table \ref{bnn_ll_w}.
The results demonstrate that the combination of the accelerated position updating strategy 
and the dynamically weighted adjustment leads to a lower RMSE. 
Notably, WGAD-CA type algrithms outperform other methods in the majority of cases.
%Note that when performing Bayesian inference in neural network tasks, one may only obtain an unbiased estimation of $\log\pi(\cdot)$ if the minibatch strategy is adopted.
%As a result, the duplicate/kill operation is biased due to the non-linear transformation $\exp$ to log-posterior $\log\pi(\cdot)$, 
%	while our continuous adjustment strategy still keeps unbiased as it directly use $\log\pi(\cdot)$.
% It can be observed that GFSD-type algorithms obtain similar results as BLOB-type algorithms,
% which may be ascribed to the negative correlation between the magnitude of
% the repulsive force and the dimensionality in ParVIs \cite{zhuo2018message}.
%In this case, BLOB and GFSD algorithms evolve particles as gradient descent since the repulsive force between particles vanishes and the vector field $\vb_{\tilde{\mu}_k}$ 
%	is dominated by the gradient term $\nabla\log\pi(\cdot)$.

\section{Conclusion}
In this paper, we propose the General Accelerated Dynamic-Weight Particle-based Variational Inference (GAD-PVI) framework, which adopts an accelerated position update scheme and dynamic weight adjustment approach simutaneously.
Our GAD-PVI framework is developed by discretizing the Semi-Hamiltonian Information Fisher-Rao (SHIFR) flow on the novel Information-Fisher-Rao space. 
The theoretical analysis  demonstrate that  the SHIFR flow yields additional decrease on the local functional dissipation compared to the Hamiltonian flow in the vanilla information space.
We propose effective particle system which evolve the position, weight, velocity of particles via a set of odes for the SHIFR flows with different underlying information metrics.
By directly discretizing the propsed particle system, we obatain our GAD-PVI framework.
Several effective instances of the GAD-PVI framework have been provided by employing three distinct dissimilarity functionals and associated smoothing approaches under the Wasserstein/Kalman-Wasserstein/Stein metric.
Empirical studies demonstrate the faster convergence and reduced approximation error of GAD-PVI methods over the SOTAs.

\clearpage
\section{Acknowledgments}
This work is supported by National Key Research and Development Program of China under Grant 2020AAA0107400 and National Natural Science Foundation of China (Grant No: 62206248).
% \newpage
\bibliography{aaai24.bib}

\begin{thebibliography}{34}
\providecommand{\natexlab}[1]{#1}

\bibitem[{Ambrosio, Gigli, and Savar{\'e}(2008)}]{ambrosio2008gradient}
Ambrosio, L.; Gigli, N.; and Savar{\'e}, G. 2008.
\newblock \emph{Gradient flows: in metric spaces and in the space of
  probability measures}.
\newblock Springer Science \& Business Media.

\bibitem[{Brooks et~al.(2011)Brooks, Gelman, Jones, and
  Meng}]{brooks2011handbook}
Brooks, S.; Gelman, A.; Jones, G.; and Meng, X.-L. 2011.
\newblock \emph{Handbook of markov chain monte carlo}.
\newblock CRC press.

\bibitem[{Butcher(1964)}]{butcher1964implicit}
Butcher, J.~C. 1964.
\newblock Implicit runge-kutta processes.
\newblock \emph{Mathematics of Computation}, 18(85): 50--64.

\bibitem[{Carrillo, Choi, and Tse(2019)}]{carrillo2019convergence}
Carrillo, J.~A.; Choi, Y.-P.; and Tse, O. 2019.
\newblock Convergence to equilibrium in Wasserstein distance for damped Euler
  equations with interaction forces.
\newblock \emph{Communications in Mathematical Physics}, 365: 329--361.

\bibitem[{Chen et~al.(2018{\natexlab{a}})Chen, Zhang, Wang, Li, and
  Chen}]{chen2018unified}
Chen, C.; Zhang, R.; Wang, W.; Li, B.; and Chen, L. 2018{\natexlab{a}}.
\newblock A unified particle-optimization framework for scalable Bayesian
  sampling.
\newblock \emph{arXiv preprint arXiv:1805.11659}.

\bibitem[{Chen et~al.(2018{\natexlab{b}})Chen, Mackey, Gorham, Briol, and
  Oates}]{chen2018stein}
Chen, W.~Y.; Mackey, L.; Gorham, J.; Briol, F.-X.; and Oates, C.
  2018{\natexlab{b}}.
\newblock Stein points.
\newblock In \emph{ICML}, 844--853. PMLR.

\bibitem[{Craig and Bertozzi(2016)}]{craig2016blob}
Craig, K.; and Bertozzi, A. 2016.
\newblock A blob method for the aggregation equation.
\newblock \emph{Mathematics of computation}, 85(300): 1681--1717.

\bibitem[{Gallou{\"e}t and Monsaingeon(2017)}]{gallouet2017jko}
Gallou{\"e}t, T.~O.; and Monsaingeon, L. 2017.
\newblock A JKO Splitting Scheme for Kantorovich--Fisher--Rao Gradient Flows.
\newblock \emph{SIAM Journal on Mathematical Analysis}, 49(2): 1100--1130.

\bibitem[{Garbuno-Inigo et~al.(2020)Garbuno-Inigo, Hoffmann, Li, and
  Stuart}]{garbuno2020interacting}
Garbuno-Inigo, A.; Hoffmann, F.; Li, W.; and Stuart, A.~M. 2020.
\newblock Interacting Langevin diffusions: Gradient structure and ensemble
  Kalman sampler.
\newblock \emph{SIAM Journal on Applied Dynamical Systems}, 19(1): 412--441.

\bibitem[{Korba et~al.(2021)Korba, Aubin-Frankowski, Majewski, and
  Ablin}]{korba2021kernel}
Korba, A.; Aubin-Frankowski, P.-C.; Majewski, S.; and Ablin, P. 2021.
\newblock Kernel Stein Discrepancy Descent.
\newblock In Meila, M.; and Zhang, T., eds., \emph{Proceedings of the 38th
  International Conference on Machine Learning}, volume 139 of
  \emph{Proceedings of Machine Learning Research}, 5719--5730. PMLR.

\bibitem[{Korba et~al.(2020)Korba, Salim, Arbel, Luise, and
  Gretton}]{korba2020non}
Korba, A.; Salim, A.; Arbel, M.; Luise, G.; and Gretton, A. 2020.
\newblock A non-asymptotic analysis for Stein variational gradient descent.
\newblock \emph{NeurIPS}, 33.

\bibitem[{Lafferty(1988)}]{density_manifold}
Lafferty, J.~D. 1988.
\newblock The Density Manifold and Configuration Space Quantization.
\newblock \emph{Transactions of the American Mathematical Society}, 305(2):
  699--741.

\bibitem[{Li et~al.(2023)Li, qiang liu, Korba, Yurochkin, and
  Solomon}]{li2023sampling}
Li, L.; qiang liu; Korba, A.; Yurochkin, M.; and Solomon, J. 2023.
\newblock Sampling with Mollified Interaction Energy Descent.
\newblock In \emph{The Eleventh International Conference on Learning
  Representations}.

\bibitem[{Liu et~al.(2019)Liu, Zhuo, Cheng, Zhang, and
  Zhu}]{liu2019understanding}
Liu, C.; Zhuo, J.; Cheng, P.; Zhang, R.; and Zhu, J. 2019.
\newblock Understanding and accelerating particle-based variational inference.
\newblock In \emph{ICML}, 4082--4092.

\bibitem[{Liu, Lee, and Jordan(2016)}]{liu2016kernelized}
Liu, Q.; Lee, J.; and Jordan, M. 2016.
\newblock A kernelized Stein discrepancy for goodness-of-fit tests.
\newblock In \emph{ICML}, 276--284. PMLR.

\bibitem[{Liu and Wang(2016)}]{liu2016stein}
Liu, Q.; and Wang, D. 2016.
\newblock Stein variational gradient descent: A general purpose bayesian
  inference algorithm.
\newblock \emph{arXiv preprint arXiv:1608.04471}.

\bibitem[{Liu et~al.(2017)Liu, Shang, Cheng, Cheng, and
  Jiao}]{liu2017accelerated}
Liu, Y.; Shang, F.; Cheng, J.; Cheng, H.; and Jiao, L. 2017.
\newblock Accelerated first-order methods for geodesically convex optimization
  on Riemannian manifolds.
\newblock \emph{Advances in Neural Information Processing Systems}, 30.

\bibitem[{Lu, Lu, and Nolen(2019)}]{lu2019accelerating}
Lu, Y.; Lu, J.; and Nolen, J. 2019.
\newblock Accelerating langevin sampling with birth-death.
\newblock \emph{arXiv preprint arXiv:1905.09863}.

\bibitem[{Mroueh and Rigotti(2020)}]{mroueh2020unbalanced}
Mroueh, Y.; and Rigotti, M. 2020.
\newblock Unbalanced Sobolev Descent.
\newblock \emph{NeurIPS}, 33.

\bibitem[{N{\"u}sken and Renger(2021)}]{nusken2021stein}
N{\"u}sken, N.; and Renger, D. 2021.
\newblock Stein Variational Gradient Descent: many-particle and long-time
  asymptotics.
\newblock \emph{arXiv preprint arXiv:2102.12956}.

\bibitem[{Platen and Bruti-Liberati(2010)}]{platen2010numerical}
Platen, E.; and Bruti-Liberati, N. 2010.
\newblock \emph{Numerical solution of stochastic differential equations with
  jumps in finance}, volume~64.
\newblock Springer Science \& Business Media.

\bibitem[{Ranganath, Gerrish, and Blei(2014)}]{ranganath2014black}
Ranganath, R.; Gerrish, S.; and Blei, D. 2014.
\newblock Black box variational inference.
\newblock In \emph{Artificial intelligence and statistics}, 814--822. PMLR.

\bibitem[{Rasmussen(2003)}]{rasmussen2003gaussian}
Rasmussen, C.~E. 2003.
\newblock Gaussian processes in machine learning.
\newblock In \emph{Summer school on machine learning}, 63--71. Springer.

\bibitem[{Rotskoff et~al.(2019)Rotskoff, Jelassi, Bruna, and
  Vanden-Eijnden}]{rotskoff2019global}
Rotskoff, G.; Jelassi, S.; Bruna, J.; and Vanden-Eijnden, E. 2019.
\newblock Global convergence of neuron birth-death dynamics.
\newblock \emph{arXiv preprint arXiv:1902.01843}.

\bibitem[{Santambrogio(2017)}]{santambrogio2017euclidean}
Santambrogio, F. 2017.
\newblock $\{$Euclidean, metric, and Wasserstein$\}$ gradient flows: an
  overview.
\newblock \emph{Bulletin of Mathematical Sciences}, 7(1): 87--154.

\bibitem[{Shen, Heinonen, and Kaski(2021)}]{De-randomizing_MCMC}
Shen, Z.; Heinonen, M.; and Kaski, S. 2021.
\newblock De-randomizing MCMC dynamics with the diffusion Stein operator.
\newblock In Ranzato, M.; Beygelzimer, A.; Dauphin, Y.; Liang, P.; and Vaughan,
  J.~W., eds., \emph{Advances in Neural Information Processing Systems},
  volume~34, 17507--17517. Curran Associates, Inc.

\bibitem[{S{\"u}li and Mayers(2003)}]{suli2003introduction}
S{\"u}li, E.; and Mayers, D.~F. 2003.
\newblock \emph{An introduction to numerical analysis}.
\newblock Cambridge university press.

\bibitem[{Taghvaei and Mehta(2019)}]{taghvaei2019accelerated}
Taghvaei, A.; and Mehta, P. 2019.
\newblock Accelerated flow for probability distributions.
\newblock In \emph{International Conference on Machine Learning}, 6076--6085.
  PMLR.

\bibitem[{Von~Mises, Geiringer, and Ludford(2004)}]{von2004mathematical}
Von~Mises, R.; Geiringer, H.; and Ludford, G. S.~S. 2004.
\newblock \emph{Mathematical theory of compressible fluid flow}.
\newblock Courier Corporation.

\bibitem[{Wang and Li(2022)}]{AIG}
Wang, Y.; and Li, W. 2022.
\newblock Accelerated Information Gradient Flow.
\newblock \emph{Journal of Scientific Computing}, 90: 11.

\bibitem[{Zhang et~al.(2022)Zhang, Li, Du, and Qian}]{dpvi}
Zhang, C.; Li, Z.; Du, X.; and Qian, H. 2022.
\newblock DPVI: A Dynamic-Weight Particle-Based Variational Inference
  Framework.
\newblock In Raedt, L.~D., ed., \emph{Proceedings of the Thirty-First
  International Joint Conference on Artificial Intelligence, {IJCAI-22}},
  4900--4906. International Joint Conferences on Artificial Intelligence
  Organization.
\newblock Main Track.

\bibitem[{Zhang and Sra(2018)}]{zhang2018estimate}
Zhang, H.; and Sra, S. 2018.
\newblock An estimate sequence for geodesically convex optimization.
\newblock In \emph{Conference On Learning Theory}, 1703--1723. PMLR.

\bibitem[{Zhang et~al.(2020)Zhang, Zhang, Carin, and
  Chen}]{zhang2020stochastic}
Zhang, J.; Zhang, R.; Carin, L.; and Chen, C. 2020.
\newblock Stochastic particle-optimization sampling and the non-asymptotic
  convergence theory.
\newblock In \emph{Artificial Intelligence and Statistics}, 1877--1887. PMLR.

\bibitem[{Zhu, Liu, and Zhu(2020)}]{zhu2020variance}
Zhu, M.; Liu, C.; and Zhu, J. 2020.
\newblock Variance Reduction and Quasi-Newton for Particle-Based Variational
  Inference.
\newblock In \emph{ICML}, 11576--11587. PMLR.

\end{thebibliography}

\appendix
\onecolumn

\section{Appendix A}
\subsection{A.1 Definition of Information Metric in Probability Space}\label{definition}
To make general information gradient flow on probability space defined, we briefly review the 
definition of information metrics in probability space \cite{ambrosio2008gradient}:
\begin{definition}\label{definition_metric}
    (Information metric in probability space). Denote the tangent space at $\mu \in \mathcal{P}(\RBB^n )$ by
    $T_\mu\mathcal{P}(\RBB^n) = \{\sigma \in C(\RBB^n):\int \sigma dx = 0 \}$. The cotangent
    space at $\mu$ is denoted as $T_\mu^* \mathcal{P}(\RBB^n)$, can be treated as the quotient space $C(\RBB^n)/\mathbb{R}$.
    An information metric tensor $G(\mu)[\cdot]:T_\mu \mathcal{P}(\RBB^n)\to T_\mu^* \mathcal{P}(\RBB^n)$ is an invertible mapping from
    $\mathcal{P}(\RBB^n)$ to $ T_\mu^* \mathcal{P}(\RBB^n)$. This information metric tensor defines the information metric (as well as inner product)
    on tangent space $\mathcal{P}(\RBB^n)$, for $\sigma_1,\sigma_2\in T_\mu \mathcal{P}(\RBB^n)$ 
    and $\Phi_i = G(\mu)[\sigma_i],i=1,2$ as
    \begin{align}\label{metric_prob}
        g_\mu(\sigma_1,\sigma_2)=\int\sigma_1G(\mu)\sigma_2dx = \int\Phi_1G(\mu)^{-1}\Phi_2dx,
    \end{align}
\end{definition}
then we denote the general information probability space as $(\mathcal{P}(\RBB^n),G(\mu))$.
As long as a metric is specified, the probability space $\mathcal{P}(\RBB^n)$ together with the metric
can be taken as an infinite dimensional Riemannian manifold which is so-called density manifold \cite{density_manifold},
which enables the definition of gradient flow.

\subsection{A.2 Full Hamiltonian Flow on the IFR Space and the Fisher-Rao Kinetic Energy}\label{full_Hamiltonian}
To develop ParVI methods which possess both accelerated position update
and dynamical weight adjustment simultaneously,
a natural choice is to directly simulate the Hamiltonian flow on the augmented IFR space
By substituting the IFR metric \eqref{ifr_metric} into the general Hamiltonian flow \eqref{AIG_general}, we derive the Full Hamiltonian flow on the IFR space as the direct accelerated probabilistic flow on the IFR space with the form
\begin{align}\label{DAIFR_flow}
    \left\{
    \begin{aligned}
        &\partial_t \mu_t \!=G(\mu_t)^{-1}\left[ \Phi_t\right] \!
        +\! \left(\Psi_t \!-
        \! \int{\Psi_t}d\mu_t\right)\mu_t\\
        &\partial_t\Phi_t \! + \! \gamma_t\Phi_t \! + 
        \underbrace{\frac{1}{2}\frac{\delta}{\delta\mu}(\int\Phi_tG(\mu_t)^{-1}\left[\Phi_t\right] dx)}_{\text{Information kinetic energy}}\! 
        +\underbrace{\frac{1}{2}\Phi_t^2-\Phi_t\int\Phi_t d\mu_t}_{\text{Fisher-Rao kinetic energy}}
        \! + \! \frac{\delta \mathcal{F}(\mu_t) }{\delta \mu} \!=\! 0.
    \end{aligned}\right.
\end{align}
where the $\Phi_t$ is the velocity field; 
the $\frac{\delta \mathcal{F}(\mu_t) }{\delta \mu}$ represent the potential energy dissipation;
the $\frac{1}{2}\frac{\delta}{\delta\mu}(\int\Phi_tG(\mu_t)^{-1}\left[\Phi_t\right] dx)$ represent the kinetic energy dissipation of information transport;
the $\frac{1}{2}\Phi_t^2-\Phi_t\int\Phi_t d\mu_t$ represent the kinetic energy dissipation of Fisher-Rao distortion.
As far as we know, the particle formulation of the Full Hamiltonian flow on the IFR space \eqref{DAIFR_flow} is intractable due to the Fisher-Rao kinetic energy term. 
When deriving particle systems, the $\nabla\Phi_t$ can be straightforwardly approximated by the velocities $\vb^i_t$, but the Hamiltonian field $\Phi_t$ is hard to be approximated by finite points and iteratively updated. 
Actually, even the particle formulation of the accelerated Fisher-Rao flow has not been derived due to great difficulty \cite{AIG}.
Therefore, we ignore the influence of the Fisher-Rao kinetic energy on the Hamiltonian field and derive the SHIFR \eqref{GAD-FLOW}. We point out that the Fisher-Rao kinetic energy would vanish as the flow converge to the equilibrium of $(\mu_{\infty}=\pi,\Phi_{\infty}=\mathbf{0})$ which fit the behavior of kinetic energy in physical dynamic system. Therefore, the ignorance of the Fisher-Rao kinetic energy is tenable and the SHIFR still have the target distribution as its stationary distribution which will be shown in Proposition \ref{proposition_stationary}.

\subsection{A.3 Proof of Proposition \ref{proposition_1}}\label{app:proof_of_p1}
First we Introduce a technical lemma for the proof of Proposition \ref{proposition_1}.
\begin{lemma}\label{lemma_1}
     The following probability flow dynamic formulation and particle system formulation is equivalent:
     \begin{align}\label{WAIG-euler}
        \left\{
            \begin{aligned}
                &\partial_t \mu_t \!+  \nabla\cdot\left(\mu_t  \nabla \Phi_t\right) = 0 \!
                \\
                &\partial_t\Phi_t + \gamma_t\Phi_t
                +\frac{1}{2}\left\lVert \nabla\Phi_t\right\rVert ^2
                +\frac{\delta\mathcal{F}(\mu_t)}{\delta\mu}=0 
            \end{aligned}\right.
       \end{align}
    \begin{align}\label{WAIG-lagrangian}
        \left\{
        \begin{aligned}
            &\frac{d}{dt}X_t=V_t\\
            &\frac{d}{dt}V_t=-\gamma_tV_t-\nabla(\frac{\delta\mathcal{F}(\mu_t)}{\delta\mu})(X_t) 
        \end{aligned}\right.
    \end{align}
\end{lemma}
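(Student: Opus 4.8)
The plan is to recognize \eqref{WAIG-lagrangian} as the characteristic system associated with the coupled continuity/Hamilton--Jacobi system \eqref{WAIG-euler}, and to verify the correspondence in both directions by the method of characteristics. Throughout I would work under enough regularity (smooth $\frac{\delta\mathcal{F}(\mu_t)}{\delta\mu}$, smooth initial velocity potential $\Phi_0$, and a short enough time horizon) that all flows are classical and $X_t$ stays a diffeomorphism; this is the standing assumption under which such equivalences are stated in the ParVI literature, and it is the setting relevant to Proposition \ref{proposition_1}.

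\textbf{From \eqref{WAIG-euler} to \eqref{WAIG-lagrangian}.} Given a smooth solution $(\mu_t,\Phi_t)$ of \eqref{WAIG-euler}, I would define the characteristic flow $X_t$ by $\frac{d}{dt}X_t=\nabla\Phi_t(X_t)$ with $X_0=\mathrm{id}$, and set $V_t:=\nabla\Phi_t(X_t)$. Then the first ODE of \eqref{WAIG-lagrangian} holds by construction, and $\mu_t=(X_t)_\#\mu_0$ solves the transport equation in the first line of \eqref{WAIG-euler}. The content lies in the second ODE: differentiating $V_t=\nabla\Phi_t(X_t)$ in $t$ gives $\frac{d}{dt}V_t=\nabla(\partial_t\Phi_t)(X_t)+\nabla^2\Phi_t(X_t)\,\nabla\Phi_t(X_t)$; taking the gradient of the Hamilton--Jacobi equation $\partial_t\Phi_t=-\gamma_t\Phi_t-\tfrac12\lVert\nabla\Phi_t\rVert^2-\frac{\delta\mathcal{F}(\mu_t)}{\delta\mu}$ and using $\tfrac12\nabla\lVert\nabla\Phi_t\rVert^2=\nabla^2\Phi_t\,\nabla\Phi_t$, the two Hessian terms cancel and one is left with $\frac{d}{dt}V_t=-\gamma_tV_t-\nabla\big(\frac{\delta\mathcal{F}(\mu_t)}{\delta\mu}\big)(X_t)$, which is exactly the second ODE of \eqref{WAIG-lagrangian}. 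When the initial velocity is $\mathbf{0}$, as in Proposition \ref{proposition_1}, one simply picks $\Phi_0$ with $\nabla\Phi_0=\mathbf{0}$.

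\textbf{From \eqref{WAIG-lagrangian} to \eqref{WAIG-euler}.} Given the Lagrangian flow, I would set $\mu_t:=(X_t)_\#\mu_0$, which automatically satisfies $\partial_t\mu_t+\nabla\cdot(\mu_t v_t)=0$ with $v_t:=\big(\frac{d}{dt}X_t\big)\circ X_t^{-1}$. It then remains to produce a potential $\Phi_t$ with $\nabla\Phi_t=v_t$ solving the second line of \eqref{WAIG-euler}. I would do this by reversing the computation above: define $\Phi_t$ along characteristics via an ODE for $t\mapsto\Phi_t(X_t)$ read off from the Hamilton--Jacobi equation evaluated on the flow, then push it back to a function on $\mathbb{R}^n$ through $X_t^{-1}$. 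The genuine obstacle is ensuring that the reconstructed $v_t$ remains a gradient field, equivalently that $\Phi_t$ is single-valued on $\mathbb{R}^n$ rather than defined only along individual characteristics; this is precisely where the short-time/regularity hypothesis is used, since the curl-free structure of the velocity is preserved along the Hamiltonian flow only as long as characteristics do not cross. I expect this global-in-space reconstruction (and the no-crossing of characteristics) to be the main technical point, whereas the differential identities in both directions are routine chain-rule calculations.
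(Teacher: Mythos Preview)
Your forward direction (from the Eulerian system to the Lagrangian ODEs) is essentially identical to the paper's argument: both define $V_t=\nabla\Phi_t(X_t)$, differentiate in $t$ via the chain rule (the paper phrases this as a material derivative $(\partial_t+\nabla\Phi_t\cdot\nabla)\nabla\Phi_t$), substitute the Hamilton--Jacobi equation, and cancel the Hessian contributions using $\tfrac12\nabla\lVert\nabla\Phi_t\rVert^2=\nabla^2\Phi_t\,\nabla\Phi_t$. The paper in fact proves only this implication, despite the lemma being stated as an equivalence; your second paragraph, which sketches the reconstruction of $\Phi_t$ from the Lagrangian flow and correctly flags the curl-free/no-crossing obstruction as the genuine technical point, goes beyond what the paper supplies and is a welcome addition.
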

\begin{proof}
    We start with the calculation of gradient of kinetic term. 
    For a twice differentiable $\Phi(x)$, we have:
    \begin{align}\label{p1_cancel}
        \frac{1}{2}\nabla\left\lVert \nabla \Phi\right\rVert ^2 = 
        \nabla^2\Phi\nabla\Phi
        =(\nabla\Phi\cdot\nabla)\nabla\Phi
	\end{align}
    From \eqref{WAIG-euler}, we have:
    \begin{align*}
        \partial_t\mu_t + \nabla\cdot(\mu_t\nabla\Phi_t)= 0
    \end{align*}
    which is the continuity equation of $\mu_t$ under vector field $\nabla\Phi_t$ \cite{santambrogio2017euclidean}. 
    Hence, we have following equation on partical level (denoting $V_t=\nabla\Phi_t(X_t)$):
    \begin{align*}
        \frac{d}{dt}X_t&=\nabla\Phi_t(X_t)\\
        &=V_t
    \end{align*}
    Then, the vector field shall follow:
    \begin{align*}
        \frac{d}{dt}V_t&=\frac{d}{dt}\nabla\Phi_t(X_t)\\
        &\stackrel{\mathrm{(1)}}{=}(\partial_t+\nabla\Phi_t(X_t)\cdot\nabla)\nabla\Phi_t(X_t)\\
        &\stackrel{\mathrm{(2)}}{=}-\gamma_t\nabla\Phi_t(X_t)
        -\frac{1}{2}\nabla\left\lVert \nabla \Phi_t(X_t)\right\rVert ^2
        -\nabla(\frac{\delta\mathcal{F}(\mu_t)}{\delta\mu})(X_t) 
        +(\nabla\Phi_t(X_t)\cdot\nabla)\nabla\Phi_t(X_t)\\
        &\stackrel{\mathrm{(3)}}{=}-\gamma_t\nabla\Phi_t(X_t)-\nabla(\frac{\delta\mathcal{F}(\mu_t)}{\delta\mu})(X_t) \\
        &\stackrel{\mathrm{(4)}}{=}-\gamma_tV_t-\nabla(\frac{\delta\mathcal{F}(\mu_t)}{\delta\mu})(X_t) 
    \end{align*}
    where equation (1) becomes valid from material derivative in fluid dynamic \cite{von2004mathematical},
    equation (2) comes from the PDEs of $\Phi_t$ in \eqref{WAIG-euler},
    equation (3) comes from cancelling terms on each side of \eqref{p1_cancel},
    equation (4) comes from the definition of $V_t$.
\end{proof}
Now we are ready to give the proof of Proposition \ref{proposition_1}.

\begin{proof}(Proof of Proposition \ref{proposition_1})
    Let $\Psi:\mathcal{P}_2(\mathbb{R}^d)\to \mathbb{R}$ be a functional on the probability space, 
    $\mut_t^M$ be the distribution produced by the continuous-time composite flow \eqref{particle_formulation_w} at time $t$.
     With $\mu_t$ denoting the mean-field limit of $\mut_t^M$ as $M \to \infty$, we have 
	\begin{align*}
		\partial_t \Psi[\mu_t] = (\mathcal{L} \Psi)[\mu_t],
	\end{align*}
	where, 
	\begin{align}\label{lpsi} 
		\mathcal{L} \Psi[\mu] =\int \langle \nabla \Phi(\xb),
         \nabla_\xb \frac{\delta \Psi(\mu)}{\delta \mu }(\xb)  \rangle \mu(\xb)\mathrm{d} \xb
		-\int \frac{\delta \Psi(\mu)}{\delta \mu }(\xb) \left( \frac{\delta\mathcal{F}(\mu)}{\delta\mu}(\xb) 
        - \mathbb{E}_{\mu} [\frac{\delta\mathcal{F}(\mu)}{\delta\mu}(\xb)]\right)\mu(\xb) \dr \xb,
	\end{align}
    in which $\Phi_t$ abides
    \begin{align*}
		\left\{
	\begin{aligned}
        &\partial_t\Phi_t+\gamma_t\Phi_t
        +\frac{1}{2}\left\lVert \nabla\Phi_t\right\rVert ^2
        +\frac{\delta\mathcal{F}(\mu_t)}{\delta\mu}=0\\
        &\Phi_0=\mathbf{0}
    \end{aligned}\right.
    \end{align*}
	and $\frac{\delta \Psi(\mu)}{\delta \mu }(\cdot)$ denotes the first variation 
    of functional $\Psi$ at $\mu$ satisfying 
	\begin{align*}
		\int \frac{\delta \Psi(\mu)}{\delta \mu }(\xb)  \xi(\xb) \dr \xb = \lim_{\epsilon \to 0} \frac{\Psi(\mu + \epsilon \xi) - \Psi(\mu)}{\epsilon}
	\end{align*}
	for all signed measure $\int \xi(\xb)\dr \xb =0$. Let $(\LM^{Pos} \Psi)[\mu]$ be the 
    first term of \eqref{lpsi}, and $(\LM^{Wei} \Psi)[\mu]$ be the second term of \eqref{lpsi},
	We have:
    \begin{align*}
		\mathcal{L} \Psi[\mu] = (\LM^{Pos} \Psi)[\mu]+(\LM^{Wei} \Psi)[\mu]
    \end{align*}
	
    For the measure valued composite flow $\mut_t^M$ \eqref{particle_formulation_w}, the infinitesimal  generator of $\Psi$ w.r.t. $ \tilde \mu_t^M$ is defined as follows: 
	\begin{align*}
		(\LM_M \Psi)[\mut^M] := \lim_{t\to 0^+} \frac{\EBB _{\mut_0^M = \mut^M}(\Psi[\mut_t^M]) - \Psi(\mut^M)}{t},
	\end{align*}
	where $\EBB _{\mut_0^M = \mut^M}(\Psi[\mut_t^M])$ denotes the expectation of the functional $\Psi$ evaluated along the trajectory $\mut_t^M$ taken conditional on the initialization $\mut_0^M = \mut^M$.
	As 
	\begin{align*}
		\EBB _{\mut_0^M = \mut^M}(\Psi[\mut_t^M]) - \Psi(\mut^M)
		=&\underbrace{\EBB _{\mut_0^M = \mut^M}(\Psi[\sum_{i=1}^M w^i_t \delta_{\xb^i_t}]) -
         \EBB _{\mut_0^M = \mut^M}( \Psi[\sum_{i=1}^M w^i_0 \delta_{\xb^i_t}]) }
         _{\text{weight adjusting infinitesimal}}
		\\ &+\underbrace{\EBB _{\mut_0^M = \mut^M}( \Psi[ \sum_{i=1}^M w^i_0 \delta_{\xb^i_t}])  -
         \Psi(\sum_{i=1}^M w^i_0 \delta_{\xb^i_0})}_{\text{position adjusting infinitesimal}},	
	\end{align*}	
	we follow the same idea from \cite{mroueh2020unbalanced,rotskoff2019global,dpvi} to 
    divide $(\LM_M \Psi)[\mut^M]$ into two parts $(\LM_M^{Pos} \Psi)[\mut^M]$ and
     $(\LM_M^{Wei} \Psi)[\mut^M]$ corresponding to the position update and the weight adjustment, respectively.
	According to the definition of first variation, it can be calculated that 
	\begin{align*}
		(\LM_M^{Wei} \Psi)[\mut^M]  
		=&	\lim_{t\to 0^+} \frac{\EBB _{\mut_0^M = \mut^M}(\Psi[\sum_{i=1}^M w^i_t \delta_{\xb^i_t}]) - \EBB _{\mut_0^M = \mut^M}(\Psi[\sum_{i=1}^M w^i_0 \delta_{\xb^i_t}])}{t}
		\\
		=& \int \frac{\delta \Psi(\mu^M)}{\delta \mu }(\xb) \sum_{i=1}^M  \partial_t \rho(w^i_t \xb^i_0)  \dr \xb
		\\
		=&- \int \frac{\delta \Psi(\mu^M)}{\delta \mu }(\xb) (\frac{\delta\mathcal{F}(\mu^M)}{\delta\mu}(\xb) - 
        \mathbb{E}_{\mu^M} [\frac{\delta\mathcal{F}(\mu^M)}{\delta\mu}(\xb)])\mu^M(\xb) \dr \xb.
	\end{align*}
	and 
    \begin{align*}
		(\LM_n^{Pos} \Psi)[\mut^M]  
		=&\lim_{t\to 0^+} \frac{\EBB _{\mut_0^M = \mut^M}(\Psi[\sum_{i=1}^M w^i_0 \delta_{\xb^i_t}]) - \Psi(\sum_{i=1}^M w^i_0 \delta_{\xb^i_0})}{t}
		\\
		=&\int \frac{\delta \Psi(\mu^M)}{\delta \mu }(\xb) \sum_{i=1}^M w^i_0 \partial_t \rho(\xb^i_t)  \dr \xb
		\\
		=&\int \langle  V_{\mu^M}(\xb), \nabla_\xb \frac{\delta \Psi(\mu^M)}{\delta \mu }(\xb)  \rangle \mu^M(\xb)\mathrm{d} \xb
	\end{align*}
    where $V_{\mu^M}$ abides 
    \begin{align*}
		\left\{
	\begin{aligned}
        &\frac{dV_{\mu^M}}{dt} = -\gamma_t V_{\mu^M}-\nabla\frac{\delta\mathcal{F}(\mu^M)}{\delta\mu}\\
        &V_{\mu^M_0}=\mathbf{0}
    \end{aligned}\right.
    \end{align*}
	Combining the above equalities, we have
	\begin{align*}
		\mathcal{L}_M \Psi[\mu_M] =& \mathcal{L}_M^{Pos} \Psi[\mu_M] + \mathcal{L}_M^{Wei} \Psi[\mu_M]
	\end{align*}
	
    If we take the limit of $\mathcal{L}_M \Psi[\mu_M]$  as $M \to \infty$ 
    on a sequence such that $\mu^M \rightharpoonup \mu$ 
    (i.e. $\mu^M$ weakly converges to $\mu$) a.s., 
    and $\frac{\delta\mathcal{F}(\mu^M)}{\delta\mu} \rightharpoonup 
    \frac{\delta\mathcal{F}(\mu)}{\delta\mu}$ a.s., we can deduce that 
    $ V_{\mu^M}\rightharpoonup 
    \nabla \Phi$ in light of Lemma \ref{lemma_1}. Those allow to conclude that 
    $\mathcal{L}^{Wei}_M \Psi[\mu_M] \to \mathcal{L}^{Wei} \Psi[\mu]$
	and $\mathcal{L}^{Pos}_M \Psi[\mu_M] \to \mathcal{L}^{Pos}\Psi[\mu]$, 
    thus $\mathcal{L}_M \Psi[\mu_M] \to \mathcal{L} \Psi[\mu]$.

	Since $\partial_t \Psi(\mu_t^M) = \mathcal{L}_M \Psi[\mu^M] $ and $\partial_t \Psi(\mu_t) = \mathcal{L}_M \Psi[\mu_t]$,
	we have 
	\begin{align*}
		\lim_{n\to \infty} \Psi(\mu_t^M) = \Psi (\mu_t),
	\end{align*}
	which indicates that $\mu_t^M \rightharpoonup \mu_t$ if $\mu_0^M \rightharpoonup \mu_0$.
	Since $\mu_t$ satisfying $\partial_t \Psi(\mu_t) = \mathcal{L} \Psi[\mu_t]$ 
    solves the partial differential equation \eqref{Gad-W-flow}, we conclude that the path 
    of \eqref{particle_formulation_w} starting from $\mut^M_0$ 
	weakly converges to a solution of the partial differential equation \eqref{Gad-W-flow} 
    starting from $\mu_0$ as $M\to\infty$.
    
\end{proof}

\subsection{A.4 The Extra Decrease of Functional Dissipation of the SHIFR Flow \eqref{GAD-FLOW}}\label{app:descending}
Here, we investigate the extra decrease property in terms of functional dissipation of the SHIFR gradient flow \eqref{GAD-FLOW} 
comparing to the Hamiltonian flow \eqref{AIG_general} in vanilla information space
in the following proposition. Here we only illustrate the Wasserstein case for readers can easily check for general information space case in the same routine.
\begin{proposition}\label{proposition_dissipation}

    For arbitrary $\overline{\mu}\in \mathcal{P}(\RBB^n) $ and $\overline{\Phi}\in C(\RBB^n)$,
    The local dissipation of functional $\frac{\mathrm{d}\mathcal{F}(\mu_t)}{\mathrm{d}t}$
    following the SHIFR gradient flow \eqref{GAD-FLOW} starting from $(\overline{\mu},\overline{\Phi})$
    has an additional functional dissipation term comparing to the ones following the Hamiltonian flow in non-augmented space \eqref{AIG_general}.

    Take the Wasserstein case as an example.
    Denote the probability path starting from $(\overline{\mu},\overline{\Phi})$ 
    following W-SHIFR flow as $(\mu^{ SHIFR}_t,\Phi^{ SHIFR}_t)$,
    following Hamiltonian flow in vanilla space as $(\mu^{H}_t,\Phi^{H}_t)$.
    We have:
    \begin{align}\label{p2_1}
        \left.\frac{\mathrm{d}\mathcal{F}(\mu^{ SHIFR}_t)}{\mathrm{d}t}\right|_{t=0}
        \left.\leq \frac{\mathrm{d}\mathcal{F}(\mu^{H}_t)}{\mathrm{d}t}\right|_{t=0}
    \end{align}

\end{proposition}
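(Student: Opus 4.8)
The plan is to reduce \eqref{p2_1} to an algebraic identity at the initial point, since only the value of $\frac{\dr\mathcal{F}(\mu_t)}{\dr t}$ at $t=0$ is being compared. First I would invoke the chain rule for functionals on $\mathcal{P}(\RBB^n)$: for any sufficiently regular probability path $\mu_t$ one has $\frac{\dr}{\dr t}\mathcal{F}(\mu_t) = \int \frac{\delta\mathcal{F}(\mu_t)}{\delta\mu}(\xb)\,\partial_t\mu_t(\xb)\,\dr\xb$, so the entire argument amounts to substituting the two expressions for $\partial_t\mu_t|_{t=0}$ furnished by the two flows — both initialized at the common point $(\overline{\mu},\overline{\Phi})$ — and subtracting. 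A useful preliminary remark is that the ``semi'' truncation only alters the $\partial_t\Phi_t$ equation, which never enters the $t=0$ evaluation of $\frac{\dr\mathcal{F}}{\dr t}$; what produces the gap is the Fisher--Rao distortion block already present in the $\partial_t\mu_t$ equation of the SHIFR flow.

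Next I would carry out the two substitutions. For the Hamiltonian flow \eqref{AIG_general} in the vanilla Wasserstein space, $\partial_t\mu_t|_{t=0} = -\nabla\cdot(\overline{\mu}\nabla\overline{\Phi})$, so one integration by parts gives $\left.\frac{\dr\mathcal{F}(\mu^{H}_t)}{\dr t}\right|_{t=0} = \int \langle \nabla\tfrac{\delta\mathcal{F}(\overline{\mu})}{\delta\mu},\,\nabla\overline{\Phi}\rangle\,\overline{\mu}\,\dr\xb$. For the W-SHIFR flow \eqref{Gad-W-flow}, reading off $\partial_t\mu_t|_{t=0} = -\nabla\cdot(\overline{\mu}\nabla\overline{\Phi}) - \big(\tfrac{\delta\mathcal{F}(\overline{\mu})}{\delta\mu} - \int\tfrac{\delta\mathcal{F}(\overline{\mu})}{\delta\mu}\,\dr\overline{\mu}\big)\overline{\mu}$, the same chain rule and integration by parts yield $\left.\frac{\dr\mathcal{F}(\mu^{SHIFR}_t)}{\dr t}\right|_{t=0} = \int \langle \nabla\tfrac{\delta\mathcal{F}(\overline{\mu})}{\delta\mu},\,\nabla\overline{\Phi}\rangle\,\overline{\mu}\,\dr\xb - \int \tfrac{\delta\mathcal{F}(\overline{\mu})}{\delta\mu}\big(\tfrac{\delta\mathcal{F}(\overline{\mu})}{\delta\mu} - \int\tfrac{\delta\mathcal{F}(\overline{\mu})}{\delta\mu}\,\dr\overline{\mu}\big)\overline{\mu}\,\dr\xb$. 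The transport term $\int\langle\nabla\tfrac{\delta\mathcal{F}(\overline{\mu})}{\delta\mu},\nabla\overline{\Phi}\rangle\overline{\mu}\,\dr\xb$ is common to both — the two flows share the Wasserstein transport block and the same $\overline{\Phi}$ — so it cancels on subtraction. Writing $g:=\tfrac{\delta\mathcal{F}(\overline{\mu})}{\delta\mu}$ and using the elementary centering identity $\int g\big(g-\EBB_{\overline{\mu}}[g]\big)\,\dr\overline{\mu} = \EBB_{\overline{\mu}}[g^2]-(\EBB_{\overline{\mu}}[g])^2 = \mathrm{Var}_{\overline{\mu}}(g)\ge 0$, I obtain $\left.\frac{\dr\mathcal{F}(\mu^{SHIFR}_t)}{\dr t}\right|_{t=0} - \left.\frac{\dr\mathcal{F}(\mu^{H}_t)}{\dr t}\right|_{t=0} = -\,\mathrm{Var}_{\overline{\mu}}\!\big(\tfrac{\delta\mathcal{F}(\overline{\mu})}{\delta\mu}\big)\le 0$, which is exactly \eqref{p2_1}, the announced extra dissipation being this non-positive variance term. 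The general-information-space version is the verbatim computation with $-\nabla\cdot(\overline{\mu}\nabla\overline{\Phi})$ replaced by $G^{I}(\overline{\mu})^{-1}[\overline{\Phi}]$: the common information-transport term $\int \tfrac{\delta\mathcal{F}(\overline{\mu})}{\delta\mu}\,G^{I}(\overline{\mu})^{-1}[\overline{\Phi}]\,\dr\xb$ cancels and the same variance term survives, since the extra block of $G^{IFR}$ over $G^{I}$ in \eqref{ifr_metric} is precisely the Fisher--Rao tensor, independent of $G^{I}$.

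I expect the only genuine (and mild) obstacle to be the justification of the manipulations rather than the algebra: verifying that the first variation $\tfrac{\delta\mathcal{F}(\overline{\mu})}{\delta\mu}$ is well defined and differentiable at $\overline{\mu}$ so that the functional chain rule applies, and that $\overline{\mu},\overline{\Phi}$ have enough decay/integrability for the single integration by parts to leave no boundary contribution and for $\mathrm{Var}_{\overline{\mu}}(\tfrac{\delta\mathcal{F}(\overline{\mu})}{\delta\mu})$ to be finite. Under the regularity already assumed for the SHIFR flow (cf. the discussion following \eqref{GAD-FLOW} and Appendices A.1--A.2) these are routine, and as a by-product the gap vanishes precisely when $\tfrac{\delta\mathcal{F}(\overline{\mu})}{\delta\mu}$ is $\overline{\mu}$-a.e. constant, i.e. at the stationary point $\overline{\mu}=\pi$, which dovetails with the stationarity statement.
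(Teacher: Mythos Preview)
Your proposal is correct and follows essentially the same route as the paper: apply the functional chain rule $\frac{\dr}{\dr t}\mathcal{F}(\mu_t)=\int\frac{\delta\mathcal{F}(\mu_t)}{\delta\mu}\,\partial_t\mu_t\,\dr\xb$ at $t=0$ for each flow, integrate by parts on the common Wasserstein transport block, and recognize the surviving difference as $-\mathrm{Var}_{\overline{\mu}}\big(\tfrac{\delta\mathcal{F}(\overline{\mu})}{\delta\mu}\big)\le 0$. The paper packages the first step by invoking the generator formula \eqref{lpsi} from the proof of Proposition~\ref{proposition_1} rather than re-deriving it, but the computation and the identification of the extra dissipation as a variance are identical to yours.
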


\begin{proof}
	
    For W-SHIFR case, according to the result in \eqref{lpsi},   
	the following eqaulity holds for any functional $\Psi:\mathcal{P}_2(\mathbb{R}^d)\to \mathbb{R}$ on the probability space $\mathcal{P}_2(\mathbb{R}^d)$,
    where, 
	\begin{align*}
		\partial_t \Psi[\mu^{ SHIFR}_t] &=\int \langle \nabla \Phi(\xb),
         \nabla_\xb \frac{\delta \Psi(\mu^{ SHIFR}_t)}{\delta \mu }(\xb)  \rangle \mu^{ SHIFR}_t(\xb)\mathrm{d} \xb
		\\ &-\int \frac{\delta \Psi(\mu^{ SHIFR}_t)}{\delta \mu }(\xb) \left( \frac{\delta\mathcal{F}(\mu^{ SHIFR}_t)}{\delta\mu}(\xb) 
        - \mathbb{E}_{\mu} [\frac{\delta\mathcal{F}(\mu^{ SHIFR}_t)}{\delta\mu}(\xb)]\right)\mu^{ SHIFR}_t(\xb) \dr \xb,
	\end{align*}
    in which $\left.\mu_t\right|_{t=0}=\overline{\mu}$ and $\Phi_t$ abides
    \begin{align*}
		\left\{
	\begin{aligned}
        &\partial_t\Phi_t+\gamma_t\Phi_t
        +\frac{1}{2}\left\lVert \nabla\Phi_t\right\rVert ^2
        +\frac{\delta\mathcal{F}(\mu_t)}{\delta\mu}=0\\
        &\left.\Phi_t\right|_{t=0}=\overline{\Phi}
    \end{aligned}\right.
    \end{align*}

	By substituting $\Psi(\mu^{ SHIFR}_t) = \FM(\mu^{ SHIFR}_t)$,
    $U_{\mu^{ SHIFR}_t} = \frac{\delta \FM(\mu^{ SHIFR}_t)}{\delta\mu }$
    and $t=0$ in the above equality, we have:
	\begin{align}\label{dissi_1}
		\left.\frac{\mathrm{d}\FM(\mu^{ SHIFR}_t)}{\mathrm{d}t}\right|_{t=0} =
        & - \int \langle \nabla_\xb \frac{\delta \FM(\overline{\mu})}{\delta\mu }(\xb),\nabla \overline{\Phi}(\xb)\rangle \overline{\mu}(\xb)\mathrm{d} \xb 
        - \int \frac{\delta \FM(\overline{\mu})}{\delta\mu }(\xb) (U_{\overline{\mu}}(\xb) - \mathbb{E}_{\overline{\mu}} [U_{\overline{\mu}}(\xb)])\overline{\mu}(\xb) \dr \xb \nonumber\\
		= &- \int \langle \nabla_\xb \frac{\delta \FM(\overline{\mu})}{\delta\mu }(\xb),\nabla \overline{\Phi}(\xb)\rangle \overline{\mu}(\xb)\mathrm{d} \xb 
        -\left(\int{\left(\frac{\delta\mathcal{F}(\overline{\mu})}{\delta\mu}(\xb)\right)^2\overline{\mu}(\xb)\mathrm{d}\xb} - \left(\int{\frac{\delta\mathcal{F}(\overline{\mu})}{\delta\mu}(\xb)\overline{\mu}(\xb)\mathrm{d}\xb}\right)^2\right).
	\end{align}   
	Similarly, we can get following result for the Hamiltonian flow in the non-augmented Wasserstein space case:
    \begin{align}\label{dissi_2}
		\left.\frac{\mathrm{d}\FM(\mu^{H}_t)}{\mathrm{d}t}\right|_{t=0} 
		= &- \int \langle \nabla_\xb \frac{\delta \FM(\overline{\mu})}{\delta\mu }(\xb),\nabla \overline{\Phi}(\xb)\rangle \overline{\mu}(\xb)\mathrm{d} \xb .
    \end{align} 
    Since the second term of \eqref{dissi_1} is always less or equal to zero
    and the fisrt term of \eqref{dissi_1} is the same as the first term of \eqref{dissi_2}, we can reach to the conclusion
    that the local dissipation of SHIFR flow has an additional functional dissipation term compared to the Hamiltonian flow in the non-augmented space:
    \begin{align*}
        \left.\frac{\mathrm{d}\mathcal{F}(\mu^{ SHIFR}_t)}{\mathrm{d}t}\right|_{t=0}
        \left.\leq \frac{\mathrm{d}\mathcal{F}(\mu^{H}_t)}{\mathrm{d}t}\right|_{t=0}
    \end{align*}
    For general information space, readers can follow the same routine to get the extra functional dissipation property.
\end{proof}

\subsection{A.5 The stationary analysis of the SHIFR Flow \eqref{GAD-FLOW}}\label{app:stationary}
Following proposition establish the stationary property of the  SHIFR Flow \eqref{GAD-FLOW} with
dissimilarity functional $\mathcal D(\cdot|\pi)$ which vanishes at $\mu=\pi$.
\begin{proposition}\label{proposition_stationary}

    The target distribution and zero-velocity $(\mu_{\infty}=\pi,\Phi_{\infty}=\mathbf{0})$ ($\mathbf{0}$ means a funtion
    defined on $\RBB^n$ that always map to zero)
    is the stationary distribution of the  SHIFR flow \eqref{GAD-FLOW} with dissimilarity functional
     $\mathcal D(\cdot|\pi)$ which satisfy $\mathcal D(\pi|\pi)=0$ with any Information metric tensor $G^I(\mu)[\cdot]$.
\end{proposition}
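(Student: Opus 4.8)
The plan is to verify directly that the pair $(\mu_t,\Phi_t)\equiv(\pi,\mathbf{0})$ annihilates both right-hand sides of the SHIFR flow \eqref{GAD-FLOW}, reading the $\Phi$-equation in the cotangent space $T^*_\pi\mathcal{P}(\RBB^n)=C(\RBB^n)/\RBB$, i.e.\ modulo an additive constant (which is the correct notion of ``$\partial_t\Phi_t=0$'' since $\Phi_t$ and $\frac{\delta\FM}{\delta\mu}$ are cotangent vectors). First I would dispatch the $\mu$-equation: its right-hand side is $\frac{\delta}{\delta\Phi}\mathcal{H}^{IFR}(\mu,\Phi)=G^I(\mu)^{-1}[\Phi]+(\Phi-\int\Phi\,\mathrm{d}\mu)\,\mu=G^{IFR}(\mu)^{-1}[\Phi]$ by \eqref{ifr_metric}, which is \emph{linear} in $\Phi$ and therefore vanishes at $\Phi=\mathbf{0}$ for every admissible underlying metric tensor $G^I(\mu)$ and every $\mu$ — in particular $\partial_t\mu_t=0$ at $(\pi,\mathbf{0})$, with no hypothesis on $\FM$ needed.

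Next I would treat the $\Phi$-equation. At $\Phi=\mathbf{0}$ the damping term $-\gamma_t\Phi_t$ is zero, and the information-kinetic term $-\tfrac12\frac{\delta}{\delta\mu}\big(\int\Phi_t\,G^I(\mu_t)^{-1}[\Phi_t]\,\mathrm{d}x\big)$ is the $\mu$-variation of a form that is bilinear — hence quadratic — in $\Phi_t$, so it too vanishes when $\Phi_t=\mathbf{0}$ (its value and all its $\mu$-derivatives are quadratic in $\Phi_t$). This reduces the right-hand side to $-\frac{\delta\FM(\pi)}{\delta\mu}=-\frac{\delta\mathcal{D}(\mu|\pi)}{\delta\mu}\big|_{\mu=\pi}$, and the remaining task is to argue that this first variation is a constant function, i.e.\ the zero element of $C(\RBB^n)/\RBB$.

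That last step is where the content lies, and it is the only place I expect a real obstacle: one must invoke that $\mathcal{D}(\cdot|\pi)$ is a \emph{proper} dissimilarity — nonnegative with equality exactly at $\pi$ — so that $\pi$ globally minimizes $\FM$ over the affine set $\{\mu:\int\mu\,\mathrm{d}x=1\}$; the first-order optimality condition for this equality-constrained problem then forces $\frac{\delta\FM(\pi)}{\delta\mu}$ to be constant on $\mathrm{supp}\,\pi$ (the Lagrange multiplier of the mass constraint), which is precisely the statement that it is $\mathbf{0}$ in the cotangent space. As a concrete sanity check I would also verify this functional-by-functional for the instances used in the paper: for $\mathcal{D}=\mathrm{KL}(\cdot|\pi)$ we have $\frac{\delta\mathcal{D}(\mu|\pi)}{\delta\mu}(\cdot)=\log\mu(\cdot)-\log\pi(\cdot)$, which is identically $0$ at $\mu=\pi$; for the kernel Stein discrepancy against $\pi$, the first variation is assembled from the Stein operator of $\pi$, which annihilates $\pi$ itself. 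Combining the three observations, both components of the velocity field of \eqref{GAD-FLOW} vanish at $(\mu_\infty,\Phi_\infty)=(\pi,\mathbf{0})$, so it is a stationary point of the SHIFR flow for any information metric tensor $G^I(\mu)$, as claimed; the only nontrivial part of the write-up will be making the minimality/constancy argument for $\frac{\delta\mathcal{D}(\pi|\pi)}{\delta\mu}$ clean, the rest being a direct substitution.
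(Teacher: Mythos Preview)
Your proposal is correct and follows the same direct-substitution approach as the paper: plug $(\pi,\mathbf{0})$ into the two equations of \eqref{GAD-FLOW} and check that every term vanishes. In fact your treatment is more careful than the paper's on the one nontrivial point: the paper simply asserts $\frac{\delta\mathcal{D}(\pi|\pi)}{\delta\mu}=\mathbf{0}$ as an immediate consequence of $\mathcal{D}(\pi|\pi)=0$, without invoking nonnegativity or any first-order optimality argument, whereas you correctly recognize that the hypothesis $\mathcal{D}(\pi|\pi)=0$ alone does not force the first variation to vanish --- one needs $\pi$ to minimize $\mathcal{D}(\cdot|\pi)$, and then the Lagrange-multiplier argument yields constancy of $\frac{\delta\FM(\pi)}{\delta\mu}$, i.e.\ the zero element of $T^*_\pi\PM(\RBB^n)=C(\RBB^n)/\RBB$.
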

    
\begin{proof}
The  SHIFR flow under an Information metric writes:
	\begin{align}\label{GAD-FLOW-I}
        \left\{
        \begin{aligned}
            &\partial_t\mu_t=G^I(\mu_t)^{-1}\left[ \Phi_t\right]-\! \left(\frac{\delta\mathcal{F}(\mu_t)}{\delta\mu} \!-
            \! \int{\frac{\delta\mathcal{F}(\mu_t)}{\delta\mu}\mathrm{d}\mu_t}\right)\mu_t ,\\
            &\partial_t\Phi_t+\gamma_t\Phi_t+\frac{1}{2}\frac{\delta}{\delta\mu}(\int\Phi_tG^I(\mu_t)^{-1}\left[\Phi_t\right] dx)+\frac{\delta \mathcal{F}(\mu_t) }{\delta \mu}=0.
        \end{aligned}\right.
    \end{align}
    Because the functional $\mathcal{F}(\cdot)$ is specified as some dissimilarity functional $\mathcal D(\cdot|\pi)$,
    we have:
    \begin{align*}
		\frac{\delta \mathcal{F}(\mu_\infty) }{\delta \mu}
        =\frac{\delta \mathcal{F}(\pi) }{\delta \mu}
        =\frac{\delta\mathcal D(\pi,\pi) }{\delta \mu}
        =\mathbf{0}.
    \end{align*}   
    From the element of gradient flow, we also have:
    \begin{align*}
		G^I(\mu)^{-1}\left[ \Phi_\infty\right]
        =G^I(\mu)^{-1}\left[ \mathbf{0}\right]
        =\mathbf{0}.
    \end{align*}  
    Substituting into \eqref{GAD-FLOW-I} that $(\mu_{\infty}=\pi,\Phi_{\infty}=\mathbf{0})$,
    we can get :
    \begin{align}\label{stationary_1}
		&\left.\partial_t\Phi_t\right|_{t=\infty}=
        -\gamma_\infty \Phi_\infty 
        -\frac{1}{2}\frac{\delta}{\delta\mu}(\int\Phi_\infty G^I(\mu_t)^{-1}\left[\Phi_\infty\right] dx)
        -\frac{\delta \mathcal{F}(\mu_\infty) }{\delta \mu}=\mathbf{0},\\
        &\left.\partial_t\mu_t\right|_{t=\infty}=G^I(\mu_\infty)^{-1}\left[ \Phi_\infty\right]-\! \left(\frac{\delta\mathcal{F}(\mu_\infty)}{\delta\mu} \!-
            \! \int{\frac{\delta\mathcal{F}(\mu_\infty)}{\delta\mu}\mathrm{d}\mu_\infty}\right)\mu_\infty=\mathbf{0}.
    \end{align}   
    These suffice for proof.

\end{proof}
\clearpage
\section{Appendix B}\label{app:b}
\subsection{B.1 Kalman-Wasserstein-SHIFR Flow and KWGAD-PVI Algorithms}
Combining the Kalman filter to estimate the probability distributions of a dynamic system over time
and the Wasserstein metric to measure the difference between these estimated distributions,
the Kalman-Wasserstein metric is proposed in ensemble Kalman sampling literature \cite{garbuno2020interacting}.
The inverse of Kalman-Wasserstein metric tensor write:
% and Fisher-Rao metric tensor (arise in information geometry) write:
\begin{align}\label{kw_inverse_tensor}
    G^{KW}(\mu)^{-1}\left[\Phi\right] = -\nabla \cdot (\mu C^\lambda(\mu)\nabla\Phi),\Phi \in T_\mu^* \mathcal{P}(\RBB^n),
\end{align}
where $\Phi \in T^*_\mu\mathcal{P}(\RBB^n)$,
substituting into \eqref{gf_general}, the gradient flow of Kalman-Wasserstein metric writes:
\begin{align}\label{gf_kw}
    \partial_t \mu_t = G^{KW}(\mu_t)^{-1}\left[\frac{\delta \mathcal{F}(\mu_t) }{\delta \mu}\right] = 
    -\nabla \cdot (\mu_t C^\lambda(\mu_t) \nabla \frac{\delta \mathcal{F}(\mu_t) }{\delta \mu}) .
\end{align}
where $\lambda \geq 0$ is the regularization constant and $C^\lambda(\mu)$ is the linear transformation follows:
\begin{align}\label{C_matrix_kw}
    C^\lambda(\mu) =
    \int (x-m(\mu))(x-m(\mu))^T\mu dx +\lambda I,m(\mu) = \int x\mu dx.
\end{align}
Substituting the Kalman-Wasserstein metric into the  SHIFR flow \eqref{GAD-FLOW} gives
the Kalman-Wasserstein-SHIFR flow:
\begin{align}\label{Gad-KW-flow}
    \left\{
    \begin{aligned}
        &\partial_t \mu_t \!=-  \nabla\cdot\left(\mu_t C^\lambda(\mu_t) \nabla \Phi_t\right) \!
        -\! \left(\frac{\delta\mathcal{F}(\mu_t)}{\delta\mu} \!-
        \! \int{\frac{\delta\mathcal{F}(\mu_t)}{\delta\mu}\mathrm{d}\mu_t}\right)\mu_t,\\
        &\partial_t\Phi_t + \gamma_t\Phi_t
        +\frac{1}{2}((x-m(\mu_t))^T \int \nabla\Phi_t\nabla\Phi_t^Td\mu_t (x-m(\mu_t))
            +\nabla\Phi_t^TC^\lambda(\mu_t)\nabla\Phi_t
        )
        +\frac{\delta\mathcal{F}(\mu_t)}{\delta\mu}=0 .
    \end{aligned}\right.
\end{align}
We claim that the finite-particles formulation of the Kalman-Wasserstein-SHIFR flow \eqref{Gad-KW-flow}
evolves the positions $x^i$'s, the 
weights $w^i$'s of $M$ particles and velocity field $\vb$ following:
\begin{align}\label{particle_formulation_kw}
    \left\{
	\begin{aligned}
		\mathrm{d}\xb^i_t &=  C^\lambda(\tilde{\mu}_t) \vb^i_t  \mathrm{d}t, \\
        \mathrm{d} \vb^i_t &= (-\gamma \vb^i_t 
        -\EBB[\vb_t\vb_t^T](\xb^i-\EBB[\xb] )
        - \nabla\frac{\delta\mathcal{F}(\tilde{\mu}_t)}{\delta\mu}(\xb^i_t) )\mathrm{d}t, \\
		\mathrm{d} w^i_t &=\textstyle -\left(\frac{\delta\mathcal{F}(\tilde{\mu}_t)}{\delta\mu}(\xb^i_t) 
        - \sum_{i=1}^M w^i_t \frac{\delta\mathcal{F}(\tilde{\mu}_t)}{\delta\mu}(\xb^i_t)\right)w^i_t\mathrm{d}t,\\
		\tilde{\mu}_t   &= \textstyle\sum_{i=1}^M w^i_t\delta_{\xb^i_t}.
	\end{aligned}\right.
\end{align}
Here the expectation is taken over the empirical distribution of particles.

Then, the proposition below show the mean-field limit of the finite-particles formulation \eqref{particle_formulation_kw}
is exactly the Kalman-Wasserstein-SHIFR flow \eqref{Gad-KW-flow}.
\begin{proposition}\label{proposition_2}
    Suppose the empirical distribution $\mut^M_0$ of $M$ weighted particles weakly converges to a 
    distribution $\mu_0$ when $M \to \infty$. 
    Then, the path of \eqref{particle_formulation_kw} starting from $\mut^M_0$ and $\Phi_0$ with initial velocity $\mathbf{0}$
    weakly converges to a solution of the Kalman-Wasserstein-SHIFR gradient flow \eqref{Gad-KW-flow}
    starting from $\mu_t|_{t=0} = \mu_0 = $ and $\Phi_t|_{t=0} = \mathbf{0}$ as $M\to\infty$:
\end{proposition}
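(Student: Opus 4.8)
The plan is to reproduce, in the Kalman-Wasserstein setting, the two-stage argument used for Proposition~\ref{proposition_1} in Appendix~A.3. First I would prove a Kalman-Wasserstein analogue of Lemma~\ref{lemma_1}, identifying the density-level velocity-field PDE in the second line of \eqref{Gad-KW-flow} with the Lagrangian velocity ODE (the second line of \eqref{particle_formulation_kw}); then I would run the infinitesimal-generator and mean-field-limit computation of Appendix~A.3 essentially verbatim, with the plain displacement field $\nabla\Phi_t$ everywhere replaced by the preconditioned field $C^\lambda(\mu_t)\nabla\Phi_t$ and with the extra velocity drift $-\EBB[\vb_t\vb_t^T](\xb^i-\EBB[\xb])$ accounted for.

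For the lemma, set $V_t=\nabla\Phi_t(X_t)$ along a characteristic $X_t$ solving $\dot X_t=C^\lambda(\mu_t)\nabla\Phi_t(X_t)$, which is exactly the continuity equation $\partial_t\mu_t+\nabla\cdot(\mu_t C^\lambda(\mu_t)\nabla\Phi_t)=0$ read on particles. Differentiating and using the material-derivative identity gives $\tfrac{d}{dt}V_t=\partial_t\nabla\Phi_t(X_t)+\big(C^\lambda(\mu_t)\nabla\Phi_t(X_t)\cdot\nabla\big)\nabla\Phi_t(X_t)$. Substituting the $\Phi_t$-equation of \eqref{Gad-KW-flow} produces the gradients $-\gamma_t\nabla\Phi_t$, $-\tfrac12\nabla_x\big(\nabla\Phi_t^T C^\lambda(\mu_t)\nabla\Phi_t\big)$, $-\tfrac12\nabla_x\big((x-m(\mu_t))^T\!\int\!\nabla\Phi_t\nabla\Phi_t^T\mathrm{d}\mu_t\,(x-m(\mu_t))\big)$ and $-\nabla\tfrac{\delta\FM(\mu_t)}{\delta\mu}$. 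The key cancellation is that, since $C^\lambda(\mu_t)$ is constant in $x$ and symmetric, $\tfrac12\nabla_x\big(\nabla\Phi_t^T C^\lambda(\mu_t)\nabla\Phi_t\big)=\big(C^\lambda(\mu_t)\nabla\Phi_t\cdot\nabla\big)\nabla\Phi_t$ kills the advection term exactly, while $\tfrac12\nabla_x$ of the quadratic-in-$x$ term equals $\big(\int\nabla\Phi_t\nabla\Phi_t^T\mathrm{d}\mu_t\big)(x-m(\mu_t))=\EBB[\vb_t\vb_t^T](X_t-\EBB[\xb])$. What survives is precisely $\dot V_t=-\gamma_t V_t-\EBB[\vb_t\vb_t^T](X_t-\EBB[\xb])-\nabla\tfrac{\delta\FM(\mu_t)}{\delta\mu}(X_t)$, i.e. the velocity line of \eqref{particle_formulation_kw}.

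With this equivalence established, the rest transcribes Appendix~A.3. For a test functional $\Psi$ and the mean-field limit $\mu_t$ of $\tilde\mu_t^M$, I would split the generator as $\mathcal{L}_M\Psi=\mathcal{L}_M^{Pos}\Psi+\mathcal{L}_M^{Wei}\Psi$. The weight part is unchanged, since the weight ODE in \eqref{particle_formulation_kw} is identical to the one in \eqref{particle_formulation_w}, so it contributes the Fisher-Rao distortion term exactly as in \eqref{lpsi}; the position part now reads $\int\langle C^\lambda(\mu^M)\nabla\Phi(\xb),\nabla_\xb\tfrac{\delta\Psi(\mu^M)}{\delta\mu}(\xb)\rangle\mu^M(\xb)\,\mathrm{d}\xb$, whose integrand is precisely the $G^{KW}$-transport term appearing in the first line of \eqref{Gad-KW-flow}. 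Passing to the limit along a weakly convergent subsequence $\mu^M\rightharpoonup\mu$ with $\tfrac{\delta\FM(\mu^M)}{\delta\mu}\rightharpoonup\tfrac{\delta\FM(\mu)}{\delta\mu}$, the Kalman-Wasserstein lemma yields $V_{\mu^M}\rightharpoonup\nabla\Phi$, hence $\mathcal{L}_M\Psi[\mu^M]\to\mathcal{L}\Psi[\mu]$ with $\mathcal{L}\Psi$ the generator of \eqref{Gad-KW-flow}; the Gr\"onwall/continuity argument then gives $\mu_t^M\rightharpoonup\mu_t$, which solves \eqref{Gad-KW-flow}.

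The main obstacle is the nonlocal, $\mu$-dependent preconditioner $C^\lambda(\mu)$ and mean $m(\mu)$. I must (i) verify that the first variation of $\int\nabla\Phi^T C^\lambda(\mu)\nabla\Phi\,\mathrm{d}\mu$ with respect to $\mu$ is indeed $\nabla\Phi^T C^\lambda(\mu)\nabla\Phi+(x-m(\mu))^T\!\int\!\nabla\Phi\nabla\Phi^T\mathrm{d}\mu\,(x-m(\mu))$, so that the semi-Hamiltonian field equation feeding the lemma is the correct one coming from $G^{KW}$; and (ii) justify that weak convergence $\mu^M\rightharpoonup\mu$ is enough to pass $C^\lambda(\mu^M)\to C^\lambda(\mu)$ and $m(\mu^M)\to m(\mu)$ to the limit, which requires a uniform-in-$M$ second-moment bound along the flow. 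That bound is obtained by a Gr\"onwall estimate on $\int\|\xb\|^2\,\mathrm{d}\tilde\mu_t^M$ using the ODEs in \eqref{particle_formulation_kw} (controlling $C^\lambda$, the quadratic velocity drift, and the weight dynamics). Everything else is mechanical.
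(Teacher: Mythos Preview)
Your proposal is correct and follows essentially the same route as the paper: a Kalman--Wasserstein analogue of Lemma~\ref{lemma_1} (the paper's Lemma~\ref{lemma_2}) establishing the Euler--Lagrange equivalence via the material derivative and the cancellation $\tfrac12\nabla_x(\nabla\Phi^T C^\lambda\nabla\Phi)=\nabla^2\Phi\,C^\lambda\nabla\Phi=(C^\lambda\nabla\Phi\cdot\nabla)\nabla\Phi$, followed by the generator splitting and mean-field limit argument of Appendix~A.3 verbatim. You are in fact more careful than the paper, which simply states ``substituting Lemma~\ref{lemma_1} by Lemma~\ref{lemma_2}, the proof scheme is the same'' and does not discuss the continuity of $C^\lambda(\mu)$ and $m(\mu)$ under weak convergence or the second-moment control you flag in (ii).
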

Similar to the proof scheme of proposition \ref{proposition_1},
we start from proofing a technical lemma first:
\begin{lemma}\label{lemma_2}
    The following fluid dynamic formulation and particle dynamic formulation is equivalent:
    
    (Suppose that $X_t\sim \mu_t$ and $V_t=\nabla\Phi_t(X_t)$, expectation is taken over particles)
   \begin{align}\label{KWAIG-euler}
    \left\{
        \begin{aligned}
            &\partial_t \mu_t \!+  \nabla\cdot\left(\mu_t C^\lambda(\mu_t) \nabla \Phi_t\right) = 0 ,\!
            \\
            &\partial_t\Phi_t + \gamma_t\Phi_t
            +\frac{1}{2}((x-m(\mu_t))^T \int \nabla\Phi_t\nabla\Phi_t^Td\mu_t (x-m(\mu_t))
                +\nabla\Phi_t^TC^\lambda(\mu_t)\nabla\Phi_t
            )
            +\frac{\delta\mathcal{F}(\mu_t)}{\delta\mu}=0 .
        \end{aligned}\right.
   \end{align}
   \begin{align}\label{KWAIG-lagrangian}
    \left\{
    \begin{aligned}
        &\frac{d}{dt}X_t=C^\lambda(\mu_t) V_t,\\
        &\frac{d}{dt}V_t=-\gamma_tV_t
        -\EBB[V_tV_t^T](X_t-\EBB[X_t])
        -\nabla(\frac{\delta\mathcal{F}(\mu_t)}{\delta\mu})(X_t) .
    \end{aligned}\right.
\end{align}

\end{lemma}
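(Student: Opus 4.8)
The plan is to mirror the proof of Lemma~\ref{lemma_1}, inserting the preconditioner $C^\lambda(\mu_t)$ wherever the bare gradient $\nabla\Phi_t$ appeared before, and tracking the one genuinely new term in the velocity PDE, namely the covariance-weighted quadratic form in $(x-m(\mu_t))$. The structural point that makes everything go through is that $C^\lambda(\mu_t)$ and $m(\mu_t)=\int x\,\mu_t\,dx$ depend on $t$ only through $\mu_t$ and are constant in the spatial variable $x$, so every $\nabla_x$ commutes with them.

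First I would record the algebraic identities driving the cancellation. For a twice-differentiable $\Phi$ and a symmetric matrix $C$ constant in $x$, $\tfrac12\nabla_x\!\big(\nabla\Phi^{T}C\nabla\Phi\big)=\nabla^{2}\Phi\,C\,\nabla\Phi$, and by symmetry of the Hessian also $(C\nabla\Phi\cdot\nabla)\nabla\Phi=\nabla^{2}\Phi\,C\,\nabla\Phi$; these are the analogue of \eqref{p1_cancel} and will cancel against one another. I would also note $\tfrac12\nabla_x\!\big((x-m)^{T}A(x-m)\big)=A(x-m)$ for symmetric $A$ and $m$ independent of $x$, which is what produces the covariance drift. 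Then, reading \eqref{KWAIG-euler}, the first equation is the continuity equation of $\mu_t$ advected by the field $C^\lambda(\mu_t)\nabla\Phi_t$, so on the characteristics $\tfrac{d}{dt}X_t=C^\lambda(\mu_t)\nabla\Phi_t(X_t)=C^\lambda(\mu_t)V_t$ with $V_t:=\nabla\Phi_t(X_t)$. Differentiating $V_t$ along the flow via the material-derivative identity gives $\tfrac{d}{dt}V_t=\partial_t\nabla\Phi_t(X_t)+\big(C^\lambda(\mu_t)\nabla\Phi_t(X_t)\cdot\nabla\big)\nabla\Phi_t(X_t)$. Taking $\nabla_x$ of the second equation of \eqref{KWAIG-euler} and using the two identities above, the $\gamma_t\Phi_t$ term gives $-\gamma_t V_t$, the covariance quadratic gives $-\mathbb{E}[V_tV_t^{T}]\big(X_t-m(\mu_t)\big)=-\mathbb{E}[V_tV_t^{T}]\big(X_t-\mathbb{E}[X_t]\big)$, the kinetic quadratic gives $-\nabla^{2}\Phi_t\,C^\lambda(\mu_t)\nabla\Phi_t$, and the potential gives $-\nabla\tfrac{\delta\mathcal F(\mu_t)}{\delta\mu}(X_t)$; substituting into the material-derivative expression and cancelling $-\nabla^{2}\Phi_t\,C^\lambda(\mu_t)\nabla\Phi_t$ against the advection cross term yields exactly \eqref{KWAIG-lagrangian}. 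The converse direction — that the pushforward of \eqref{KWAIG-lagrangian} solves \eqref{KWAIG-euler} — follows by running these steps backwards, reconstructing $\Phi_t$ from $V_t$ along the characteristics as in the proof of Lemma~\ref{lemma_1}.

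The main obstacle I anticipate is purely bookkeeping rather than conceptual: one must be scrupulous that $C^\lambda(\mu_t)$ and $m(\mu_t)$ are spatially constant so gradients pass through them, that $\int\nabla\Phi_t\nabla\Phi_t^{T}d\mu_t$ may be replaced by $\mathbb{E}[V_tV_t^{T}]$ only after identifying $V_t$ with $\nabla\Phi_t(X_t)$ along the flow, and that it is precisely the symmetry of the Hessian that makes the advection term and the kinetic-gradient term coincide, so that they cancel. The regularization $\lambda I$ plays no special role, being a constant matrix, and no analytic ingredient beyond the routine already used for Lemma~\ref{lemma_1} is required.
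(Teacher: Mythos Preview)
Your proposal is correct and follows essentially the same approach as the paper's proof: establish the two algebraic identities $(C\nabla\Phi\cdot\nabla)\nabla\Phi=\nabla^{2}\Phi\,C\,\nabla\Phi$ and $\tfrac12\nabla(\nabla\Phi^{T}C\nabla\Phi)=\nabla^{2}\Phi\,C\,\nabla\Phi$, read off the characteristics from the continuity equation, then compute $\tfrac{d}{dt}V_t$ via the material derivative and cancel. If anything, your write-up is slightly more careful than the paper's in making explicit why $C^\lambda(\mu_t)$ and $m(\mu_t)$ commute with $\nabla_x$ and in recording the gradient of the covariance quadratic $\tfrac12\nabla_x\big((x-m)^{T}A(x-m)\big)=A(x-m)$, which the paper uses without stating.
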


\begin{proof}
First we establish two equations, For $i=1\dots n$, we have:
\begin{align}\label{lemma2_1}
    \begin{aligned}
        (C^\lambda(\mu_t)\nabla\Phi_t\cdot\nabla)\nabla_i\Phi_t(X_t)
        &=\sum_{j = 1}^{n}(C^\lambda(\mu_t)\nabla\Phi_t)_j  \nabla_j\nabla_i\Phi_t(X_t)\\
        &=\sum_{j = 1}^{n}\nabla_{ij}\Phi_t(X_t)(C^\lambda(\mu_t)\nabla\Phi_t)_j\\
        &=(\nabla^2\Phi_tC^\lambda(\mu_t)\nabla\Phi_t)_i.
    \end{aligned}
\end{align}
Them, according to chain rule, we have:
\begin{align}\label{lemma2_2}
    \begin{aligned}
        \nabla(\nabla\Phi_t(x)^TC^\lambda(\mu_t)\nabla\Phi_t(x))
        =2\nabla^2\Phi_t(x)C^\lambda(\mu_t)\nabla\Phi_t(x).
    \end{aligned}
\end{align}
Since the first equation of \eqref{KWAIG-euler} is actually the continuity equation with velocity 
field $C^\lambda(\mu_t) \nabla\Phi_t$, it is obvious to have $\frac{d}{dt}X_t=C^\lambda(\mu_t) V_t$.
Then we deduce the second equation of \eqref{KWAIG-lagrangian}:
\begin{align*}
    \frac{d}{dt}V_t&=\frac{d}{dt}\nabla\Phi_t(X_t)\\
    &\stackrel{\mathrm{(1)}}{=}(\partial_t+C^\lambda(\mu_t)\nabla\Phi_t\cdot\nabla)\nabla\Phi_t(X_t)\\
    &\stackrel{\mathrm{(2)}}{=}\partial_t\nabla\Phi_t+
    \nabla^2\Phi_tC^\lambda(\mu_t)\nabla\Phi_t
    \\
    &\stackrel{\mathrm{(3)}}{=}-\gamma_t\nabla\Phi_t
    -\int \nabla\Phi_t\nabla\Phi_t^Td\mu_t (x-m(\mu_t))
    -\frac{1}{2}\nabla(\nabla\Phi_t(x)^TC^\lambda(\mu_t)\nabla\Phi_t(x))
    -\nabla(\frac{\delta\mathcal{F}(\mu_t)}{\delta\mu})(X_t)
    +\nabla^2\Phi_tC^\lambda(\mu_t)\nabla\Phi_t
    \\&\stackrel{\mathrm{(4)}}{=}-\gamma_t\nabla\Phi_t
    -\int \nabla\Phi_t\nabla\Phi_t^Td\mu_t (x-m(\mu_t))
    -\nabla(\frac{\delta\mathcal{F}(\mu_t)}{\delta\mu})(X_t)
    \\&\stackrel{\mathrm{(5)}}{=}-\gamma_tV_t
    -\EBB[V_tV_t^T](X_t-\EBB[X_t])
    -\nabla(\frac{\delta\mathcal{F}(\mu_t)}{\delta\mu})(X_t). 
\end{align*}
where equation (1) becomes valid from material derivative in fluid dynamic \cite{von2004mathematical},
equation (2) comes from the equation \eqref{lemma2_1},
equation (3) comes from the PDE \eqref{KWAIG-euler},
equation (4) comes from cancelling terms on each side of \eqref{lemma2_2},
equation (5) comes from the definition of $V_t$ and $X_t$.
\end{proof}

\begin{proof}(Proof of Proposition \ref{proposition_2})
Substituting Lemma \ref{lemma_1} by Lemma \ref{lemma_2},
the proof scheme of proposition \ref{proposition_2}
is the same with the proof scheme of proposition \ref{proposition_1}.
\end{proof}

By discretizing \eqref{particle_formulation_kw}, we derive the KWGAD-PVI algorithms which
update the positions of particles according to the following rule: 
\begin{align}\label{dynamic_pos_kw}
    \xb^i_{k+1} = \xb^i_{k} + \eta_{pos}  [C^\lambda_k\vb_{k}]^i,
\end{align}
and adjusts the velocity field as following:
\begin{align}\label{dynamic_ac_kw}
	\vb_{k+1}^i=(1-\gamma\eta_{vel})\vb_k^i
    -\frac{\eta_{vel}}{M}[\sum_{j=1}^{N}w_k^j(V_k^j)(V_k^j)^T](\xb_k^i-m_k)
    -\eta_{vel}\nabla U_{\tilde{\mu}_k}(\xb_k).
\end{align}
Here, $C^\lambda_k$ and $m_k$ are calculated at each round by:
\begin{align}\label{dynamic_auxiliary_kw}
	m_k= \frac{1}{N}\sum_{i=1}^{M}w_k^i x^i_k,
    C^\lambda_k=\frac{1}{N-1}\sum_{i=1}^{M}
    w_k^i(\xb_k^i-m_k) (\xb_k^i-m_k)^T+\lambda I.
\end{align}

\subsection{B.2 Stein-SHIFR Flow and SGAD-PVI Algorithms}
Involving reproducing kernel Hilbert space norm into probability space,
the Stein metric is proposed for geometrical analysis \cite{nusken2021stein}.
The gradient flow of Stein metric writes:
\begin{align}\label{gf_s}
    \partial_t \mu_t = G^{S}(\mu_t)^{-1}\frac{\delta \mathcal{F}(\mu_t) }{\delta \mu} = 
    -\nabla \cdot (\mu_t\int k(\cdot,y)\mu_t(y)\nabla_y\frac{\delta \FM(\mu_t)}{\delta\mu}(y)dy ).
\end{align}
Substituting the Stein metric into the  SHIFR flow \eqref{GAD-FLOW} gives
the Stein-SHIFR flow:
\begin{align}\label{Gad-S-flow}
    \left\{
    \begin{aligned}
        &\partial_t \mu_t \!=-\nabla \cdot (\mu_t\int k(\cdot,y)\mu_t(y)\nabla_y\Phi_t(y)dy )
        -\! \left(\frac{\delta\mathcal{F}(\mu_t)}{\delta\mu} \!-
        \! \int{\frac{\delta\mathcal{F}(\mu_t)}{\delta\mu}\mathrm{d}\mu_t}\right)\mu_t,\\
        &\partial_t\Phi_t + \gamma_t\Phi_t
        +\int \nabla\Phi_t(\cdot)^T\nabla\Phi_t(y)k(\cdot,y)\mu_t(y)dy
        +\frac{\delta\mathcal{F}(\mu_t)}{\delta\mu}=0 .
    \end{aligned}\right.
\end{align}
We claim that the finite-particles formulation of the Stein-SHIFR flow \eqref{Gad-S-flow}
evolves the positions $x^i$'s, the 
weights $w^i$'s of $M$ particles and velocity field $\vb$ following:
\begin{align}\label{particle_formulation_s}
    \left\{
	\begin{aligned}
		\mathrm{d}\xb^i_t &=  [\int k(\xb_t,y)\nabla\Phi_t(y)\tilde{\mu}_t(y) \mathrm{d}y]^i\mathrm{d}t, \\
        \mathrm{d} \vb^i_t &= (-\gamma \vb^i_t 
        -[\int \vb_t^T\nabla\Phi_t(y)\nabla_xk(\xb_t,y)\tilde{\mu}_t(y)\mathrm{d}y]^i
        - \nabla\frac{\delta\mathcal{F}(\tilde{\mu}_t)}{\delta\mu}(\xb^i_t) )\mathrm{d}t, \\
		\mathrm{d} w^i_t &=\textstyle -\left(\frac{\delta\mathcal{F}(\tilde{\mu}_t)}{\delta\mu}(\xb^i_t) 
        - \sum_{i=1}^M w^i_t \frac{\delta\mathcal{F}(\tilde{\mu}_t)}{\delta\mu}(\xb^i_t)\right)w^i_t\mathrm{d}t,\\
		\tilde{\mu}_t   &= \textstyle\sum_{i=1}^M w^i_t\delta_{\xb^i_t}.
	\end{aligned}\right.
\end{align}

Then, the proposition below show the mean-field limit of the finite-particles formulation \eqref{particle_formulation_s}
is exactly the Stein-SHIFR flow \eqref{Gad-S-flow}.
\begin{proposition}\label{proposition_3}
    Suppose the empirical distribution $\mut^M_0$ of $M$ weighted particles weakly converges to a 
    distribution $\mu_0$ when $M \to \infty$. 
    Then, the path of \eqref{particle_formulation_s} starting from $\mut^M_0$ and $\Phi_0$ with initial velocity $\mathbf{0}$
    weakly converges to a solution of the Stein-SHIFR gradient flow \eqref{Gad-S-flow}
    starting from $\mu_t|_{t=0} = \mu_0 = $ and $\Phi_t|_{t=0} = \mathbf{0}$ as $M\to\infty$:
\end{proposition}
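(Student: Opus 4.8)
The plan is to mirror the two-step template already used for Propositions~\ref{proposition_1} and \ref{proposition_2}: first isolate a technical lemma (the analogue of Lemmas~\ref{lemma_1} and \ref{lemma_2}) showing that the Eulerian description of the position/velocity block of the Stein-SHIFR flow \eqref{Gad-S-flow} is equivalent to its Lagrangian particle description, and then substitute that lemma into the mean-field generator computation from the proof of Proposition~\ref{proposition_1}. Precisely, I would state a lemma asserting the equivalence of
\begin{align*}
    &\partial_t\mu_t + \nabla\!\cdot\!\Bigl(\mu_t\!\int k(\cdot,y)\nabla\Phi_t(y)\mu_t(y)\,\dr y\Bigr) = 0,\\
    &\partial_t\Phi_t + \gamma_t\Phi_t + \int \nabla\Phi_t(\cdot)^\top\nabla\Phi_t(y)\,k(\cdot,y)\mu_t(y)\,\dr y + \frac{\delta\FM(\mu_t)}{\delta\mu} = 0,
\end{align*}
with the characteristic system (for $X_t\sim\mu_t$, $V_t=\nabla\Phi_t(X_t)$)
\begin{align*}
    &\tfrac{\dr}{\dr t}X_t = \int k(X_t,y)\nabla\Phi_t(y)\mu_t(y)\,\dr y,\\
    &\tfrac{\dr}{\dr t}V_t = -\gamma_t V_t - \int \bigl(V_t^\top\nabla\Phi_t(y)\bigr)\nabla_x k(X_t,y)\mu_t(y)\,\dr y - \nabla\tfrac{\delta\FM(\mu_t)}{\delta\mu}(X_t).
\end{align*}

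For the lemma itself, the first PDE is the continuity equation for $\mu_t$ with velocity field $b_t(x):=\int k(x,y)\nabla\Phi_t(y)\mu_t(y)\,\dr y$, so the characteristics obey $\dot X_t=b_t(X_t)$, which is the first particle ODE. For $V_t=\nabla\Phi_t(X_t)$ the material derivative gives $\tfrac{\dr}{\dr t}V_t = \partial_t\nabla\Phi_t(X_t) + \nabla^2\Phi_t(X_t)\,b_t(X_t)$. Differentiating the $\Phi_t$-PDE in $x$ and applying the product rule to the Stein kinetic term, $\nabla_x\bigl(\int\nabla\Phi_t(x)^\top\nabla\Phi_t(y)k(x,y)\mu_t(y)\,\dr y\bigr)$ splits into a $\nabla_x k$ part, namely $\int\bigl(\nabla\Phi_t(x)^\top\nabla\Phi_t(y)\bigr)\nabla_x k(x,y)\mu_t(y)\,\dr y$, plus a Hessian part $\int k(x,y)\nabla^2\Phi_t(x)\nabla\Phi_t(y)\mu_t(y)\,\dr y = \nabla^2\Phi_t(x)\,b_t(x)$. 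The Hessian part cancels exactly against the convective term $\nabla^2\Phi_t(X_t)\,b_t(X_t)$, in the same way as the cancellations in \eqref{p1_cancel} and \eqref{lemma2_2}, and what remains is precisely the second particle ODE; the converse direction is the same computation read backwards.

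With this lemma in hand, the rest copies the proof of Proposition~\ref{proposition_1}. Writing $\mut^M_t$ for the empirical law produced by \eqref{particle_formulation_s}, its infinitesimal generator on a test functional $\Psi$ decomposes as $\LM_M\Psi = \LM_M^{Pos}\Psi + \LM_M^{Wei}\Psi$; here $\LM_M^{Wei}$ is literally the same operator as in Proposition~\ref{proposition_1} since the $w^i_t$-ODE in \eqref{particle_formulation_s} is unchanged, whereas $\LM_M^{Pos}$ now transports mass along the Stein velocity field $b_t$, so the limiting generator matches the position term of \eqref{lpsi} with $\nabla\Phi(\xb)$ replaced by $\int k(\xb,y)\nabla\Phi(y)\mu(y)\,\dr y$. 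On a sequence with $\mu^M\rightharpoonup\mu$ and $\tfrac{\delta\FM(\mu^M)}{\delta\mu}\rightharpoonup\tfrac{\delta\FM(\mu)}{\delta\mu}$ a.s., the lemma together with boundedness and smoothness of $k$ yields convergence of the finite-particle velocity fields to the Stein-SHIFR velocity field; hence $\LM_M^{Pos}\Psi[\mu^M]\to\LM^{Pos}\Psi[\mu]$ and $\LM_M^{Wei}\Psi[\mu^M]\to\LM^{Wei}\Psi[\mu]$, so $\partial_t\Psi(\mut^M_t)\to\partial_t\Psi(\mu_t)$ and $\mut^M_t\rightharpoonup\mu_t$, where $\mu_t$ solves \eqref{Gad-S-flow} with $\Phi_0=\mathbf{0}$.

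The only genuinely new ingredient relative to the Wasserstein and Kalman-Wasserstein cases is controlling the nonlocal, nonlinear $(\Phi_t,\nabla\Phi_t)$ dynamics: I would need the solution map $\mu_t\mapsto(\Phi_t,\nabla\Phi_t)$ defined by the Stein kinetic PDE to be continuous with respect to weak convergence of $\mu_t$, so that the $M\to\infty$ limit of the particle velocity field is indeed the Stein-SHIFR velocity field. This is exactly where the regularity of the kernel $k$ (bounded, with Lipschitz gradient) enters; once that stability is granted, everything else is a routine transcription of the argument for Proposition~\ref{proposition_1}, so the claim follows by the same routine.
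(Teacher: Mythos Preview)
Your proposal is correct and follows essentially the same approach as the paper: first a technical lemma (the paper's Lemma~\ref{lemma_3}) establishing the Eulerian/Lagrangian equivalence for the Stein position-velocity block via the material derivative and the product-rule splitting of $\nabla_x\!\int\nabla\Phi_t(x)^\top\nabla\Phi_t(y)k(x,y)\mu_t(y)\,\dr y$ with the Hessian part cancelling the convective term, and then a verbatim reduction to the generator argument of Proposition~\ref{proposition_1}. Your closing remark about needing weak-continuity of the map $\mu_t\mapsto(\Phi_t,\nabla\Phi_t)$ is in fact more careful than the paper, which simply writes ``the proof scheme of proposition~\ref{proposition_3} is the same with the proof scheme of proposition~\ref{proposition_1}'' without isolating this stability requirement.
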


Similarly, fist proof a technical lemma:
\begin{lemma}\label{lemma_3}
    The following fluid dynamic formulation and particle dynamic formulation is equivalent:
    
    (Suppose that $X_t\sim \mu_t$ and $V_t=\nabla\Phi_t(X_t)$)
   \begin{align}\label{SAIG-euler}
    \left\{
    \begin{aligned}
        &\partial_t \mu_t \!+\nabla \cdot (\mu_t\int k(\cdot,y)\mu_t(y)\nabla_y\Phi_t(y)dy )=0,
        \\
        &\partial_t\Phi_t + \gamma_t\Phi_t
        +\int \nabla\Phi_t(\cdot)^T\nabla\Phi_t(y)k(\cdot,y)\mu_t(y)dy
        +\frac{\delta\mathcal{F}(\mu_t)}{\delta\mu}=0 .
    \end{aligned}\right.
   \end{align}
   \begin{align}\label{SAIG-lagrangian}
    \left\{
    \begin{aligned}
        &\frac{d}{dt}X_t = \int k(X_t,y)\nabla\Phi_t(y)\mu_t(y)dy,
        \\
        &\frac{d}{dt}V_t=-\gamma_tV_t
        -\int V_t^T\nabla\Phi_t(y)\nabla_xk(X_t,y)\mu_t(y)dy
        -\nabla(\frac{\delta\mathcal{F}(\mu_t)}{\delta\mu})(X_t) .
    \end{aligned}\right.
\end{align}

\end{lemma}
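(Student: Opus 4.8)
The plan is to follow the template of the proofs of Lemma \ref{lemma_1} and Lemma \ref{lemma_2}: record the elementary differential identities for the Stein velocity field, read off the position ODE from the continuity equation, and then differentiate $V_t=\nabla\Phi_t(X_t)$ along characteristics to obtain the velocity ODE. Write $V^S_t(x):=\int k(x,y)\nabla\Phi_t(y)\mu_t(y)\,dy$ for the Stein-smoothed field appearing in the first line of \eqref{SAIG-euler}. The two ingredients I would need are the convective-derivative identity $(V^S_t\cdot\nabla)\nabla\Phi_t(x)=\nabla^2\Phi_t(x)\,V^S_t(x)$ (the component-wise statement analogous to \eqref{lemma2_1}), and the chain rule for the gradient of the Stein kinetic term,
\begin{align*}
\nabla_x\!\left[\int \nabla\Phi_t(x)^\top\nabla\Phi_t(y)\,k(x,y)\,\mu_t(y)\,dy\right]
=\nabla^2\Phi_t(x)\,V^S_t(x)+\int\big(\nabla\Phi_t(x)^\top\nabla\Phi_t(y)\big)\,\nabla_x k(x,y)\,\mu_t(y)\,dy,
\end{align*}
obtained by differentiating the integrand in $x$ and pulling $\nabla^2\Phi_t(x)$ out of the first integral; here I assume the same regularity on $\Phi_t$ and $k$ that lets one differentiate under the integral sign, as in the earlier lemmas.

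Given these, the first equation of \eqref{SAIG-euler} is exactly the continuity equation $\partial_t\mu_t+\nabla\cdot(\mu_t V^S_t)=0$, so along characteristics $\tfrac{d}{dt}X_t=V^S_t(X_t)$, which is the first line of \eqref{SAIG-lagrangian}. Setting $V_t=\nabla\Phi_t(X_t)$ and using the material derivative as in the previous proofs gives $\tfrac{d}{dt}V_t=\partial_t\nabla\Phi_t(X_t)+\nabla^2\Phi_t(X_t)V^S_t(X_t)$ by the convective identity. Taking the gradient of the second line of \eqref{SAIG-euler}, substituting the chain-rule identity for $\partial_t\nabla\Phi_t$, and cancelling the $\nabla^2\Phi_t(X_t)V^S_t(X_t)$ term against the convective term leaves
\begin{align*}
\frac{d}{dt}V_t=-\gamma_t V_t-\int V_t^\top\nabla\Phi_t(y)\,\nabla_x k(X_t,y)\,\mu_t(y)\,dy-\nabla\Big(\frac{\delta\FM(\mu_t)}{\delta\mu}\Big)(X_t),
\end{align*}
the second line of \eqref{SAIG-lagrangian}. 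The converse direction is the usual argument: from a solution of \eqref{SAIG-lagrangian} the law of $X_t$ solves the continuity equation with field $V^S_t$, and $\Phi_t$ reconstructed along characteristics satisfies the scalar PDE, by uniqueness of the transport equation.

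I expect the only genuine obstacle to be bookkeeping: one must keep track that $\nabla_x k(x,y)$ differentiates the \emph{first} argument, that the $\nabla^2\Phi_t(x)V^S_t(x)$ piece extracted from the gradient of the kinetic term is \emph{identical} to the convective term produced by the material derivative (so that it cancels exactly, not merely up to lower-order terms), and that the surviving kernel-gradient term coincides with $\int V_t^\top\nabla\Phi_t(y)\nabla_x k(X_t,y)\mu_t(y)\,dy$ once $V_t=\nabla\Phi_t(X_t)$ is substituted. With Lemma \ref{lemma_3} in hand, Proposition \ref{proposition_3} follows verbatim from the proof of Proposition \ref{proposition_1} with Lemma \ref{lemma_1} replaced by Lemma \ref{lemma_3}, since the weight-adjustment part of the generator is unchanged and only the position velocity field is modified.
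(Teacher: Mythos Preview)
Your proposal is correct and follows essentially the same route as the paper: establish the chain-rule identity for the gradient of the Stein kinetic term (your displayed formula, which is the paper's equation \eqref{lemma3_1}), read off the position ODE from the continuity equation, apply the material derivative to $V_t=\nabla\Phi_t(X_t)$, and cancel the $\nabla^2\Phi_t(X_t)V^S_t(X_t)$ piece against the convective term. Your remark on Proposition~\ref{proposition_3} also matches the paper's one-line proof exactly.
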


\begin{proof}
First we notice the following equation:
\begin{align}\label{lemma3_1}
    \begin{aligned}
        \nabla (\int \nabla\Phi(x)^T\nabla\Phi(y)k(x,y)\mu_t(y)dy)
        &=\nabla^2\Phi(x)\int\nabla\Phi(y)k(x,y)\mu(y)dy
        +\int \nabla\Phi (x)^T\nabla\Phi (y)\nabla_xk(x,y)\mu(y)dy.
    \end{aligned}
\end{align}
Since the first equation of \eqref{SAIG-euler} is actually the continuity equation with velocity 
field $\int k(\cdot,y)\mu_t(y)\nabla_y\Phi_t(y)dy$, it is obvious to have $\frac{d}{dt}X_t = \int k(X_t,y)\nabla\Phi_t(y)\mu_t(y)dy$.
Then we deduce the second equation of \eqref{SAIG-lagrangian}:
\begin{align*}
    \frac{d}{dt}V_t&=\frac{d}{dt}\nabla\Phi_t(X_t)\\
    &\stackrel{\mathrm{(1)}}{=}\partial_t\nabla\Phi_t(X_t)
    +\nabla^2\Phi_t(X_t)(\int k(X_t,y)\nabla\Phi_t(y)\mu_t(y)dy)
    \\ &\stackrel{\mathrm{(2)}}{=}
    -\gamma_tV_t
        -\nabla (\int \nabla\Phi(x)^T\nabla\Phi(y)k(x,y)\mu_t(y)dy)
        -\nabla(\frac{\delta\mathcal{F}(\mu_t)}{\delta\mu})(X_t)
    +\nabla^2\Phi_t(X_t)(\int k(X_t,y)\nabla\Phi_t(y)\mu_t(y)dy)
    \\ &\stackrel{\mathrm{(3)}}{=}
    -\gamma_tV_t
        -\int \nabla\Phi (x)^T\nabla\Phi (y)\nabla_xk(x,y)\mu(y)dy
        -\nabla(\frac{\delta\mathcal{F}(\mu_t)}{\delta\mu})(X_t)
    \\ &\stackrel{\mathrm{(4)}}{=}   
        -\gamma_tV_t
        -\int V_t^T\nabla\Phi_t(y)\nabla_xk(X_t,y)\mu_t(y)dy
        -\nabla(\frac{\delta\mathcal{F}(\mu_t)}{\delta\mu})(X_t) .
\end{align*}
where equation (1) becomes valid from material derivative in fluid dynamic \cite{von2004mathematical},
equation (2) comes from the PDE \eqref{SAIG-euler},
equation (3) comes from leveraging \eqref{lemma3_1},
equation (4) comes from the definition of $V_t$.
\end{proof}
\begin{proof}(Proof of Proposition \ref{proposition_3})
    Substituting Lemma \ref{lemma_1} by Lemma \ref{lemma_3},
    the proof scheme of proposition \ref{proposition_3}
    is the same with the proof scheme of proposition \ref{proposition_1}.
\end{proof}

By discretizing \eqref{particle_formulation_s}
Stein-GAD-PVI algorithm 
updates the positions of particles according to the following rule: 
\begin{align}\label{dynamic_pos_s}
    \xb^i_{k+1} = \xb^i_{k} +  \frac{\eta_{pos}}{M}  \sum_{j=1}^{M}K(\xb^i_{k},\xb^j_{k}) \vb_{k}^i,
\end{align}
and adjusts the velocity field as following:
\begin{align}\label{dynamic_ac_s}
	\vb_{k+1}^i=(1-\gamma\eta_{vel})\vb_k^i
    -\frac{\eta_{vel}}{M} \sum_{j=1}^{M}(\vb_{k}^i)^T\vb_{k}^j\nabla_1K(\xb^i_{k},\xb^j_{k})
    -\eta_{vel}\nabla U_{\tilde{\mu}_k}(\xb_k).
\end{align}

\subsection{B.3 GAD-PVI Algorithms in details}\label{app:detail_algo}
\paragraph{various Dissimilarity Functionals and Smoothing Approaches}
To develop practical GAD-PVI methods, we must first select a dissimilarity functional $\FM$. 
Once a dissimilarity functional $\FM$ has been chosen, 
we need to select a smoothing approach to approximate 
the first variation of the empirical approximation, 
as the value of $\frac{\delta\FM(\cdot)}{\mu}$ at an 
empirical distribution $\tilde{\mu} = \sum_{i=1}^M w^i \delta_{\xb^i}$ is generally not well-defined. 
Smoothing strategies allow us to approximate the first variation value at the discrete empirical distribution.
Generally, a smoothed approximation to the first total variation is denoted as $U_{\tilde{\mu}}(\cdot) \thickapprox \frac{\delta\mathcal{F}(\tilde{\mu})}{\delta\mu}(\cdot)$.
The commonly BLOB (with KL-divergence as $\FM$) \cite{craig2016blob} has been introduced in \eqref{blob-u},
now we give detailed formulations of GFSD (with KL-divergence as $\FM$) \cite{liu2019understanding} and KSDD (with Kernel Stein Discrepancy as $\FM$) \cite{korba2021kernel}, which are all compatible with our GAD-PVI framework.
\begin{algorithm}[hp]
    \caption{General Accelerated Dynamic-weight Particle-based Variational Inference (GAD-PVI) framework in details}
    \label{gadpvi_variants}
    \textbf{Input}: 
    Initial distribution $\tilde{\mu}_0 = \sum_{i=1}^M w^i_0 \delta_{\xb^i_0}$, position adjusting step-size $\eta_{pos}$, weight adjusting step-size $\eta_{wei}$, velocity field adjusting step-size $\eta_{vel}$, velocity damping parameter $\gamma$.	
    \begin{algorithmic}[1] %[1] enables line numbers
    \STATE Choose a suitable functional $\FM$ and 
    its smoothing strategy $U_{\tilde{\mu}} \thickapprox \frac{\delta\mathcal{F}(\tilde{\mu})}{\delta\mu}$
     from KL-BLOB/KL-GFSD/KSD-KSDD\\
     \begin{align*}
        U_{\tilde{\mu}}(\xb)\thickapprox \frac{\delta\mathcal{F}(\tilde{\mu})}{\delta\mu}(\xb) = \left\{ 
        \begin{aligned}
            &-\log{\pi(\xb)} + \frac{\sum_{i=1}^M K(\xb, \xb^i_k)}{\sum_{i=1}^M  K(\xb,\xb^i_k)} + \sum_{i=1}^M{\frac{K(\xb,\xb^i_k)}{\sum_{l=1}^M K(\xb^i_k,\xb^l_k)}}\ (\text{KL-BLOB}), \\
            &-\log{\pi(\xb)} + \frac{\sum_{i=1}^M K(\xb, \xb^i_k)}{\sum_{i=1}^M K(\xb,\xb^i_k)}\ (\text{KL-GFSD}),\\
            &-\textstyle \frac{1}{M}\sum_{i=1}^M  k_\pi(\xb^i_k, \xb)\ (\text{KSD-KSDD}).
        \end{aligned}\right.
    \end{align*}
    \FOR{$k = 0,1,...,T-1$}
    \FOR{$i = 1,2,...,M$}
    \STATE 
    \begin{align*}
        \text{Update positions $\xb^i_k$ according to } \left\{ 
        \begin{aligned}
            &\text{\eqref{dynamic_pos_w}(WGAD)},\\
            &\text{\eqref{dynamic_pos_kw}(KWGAD)},\\
            &\text{\eqref{dynamic_pos_s}(SGAD)}.
        \end{aligned}\right.
    \end{align*}
    \begin{align*}
        \text{Update positions $\vb^i_k$ according to } \left\{ 
        \begin{aligned}
            &\text{\eqref{dynamic_ac_w}(WGAD)},\\
            &\text{\eqref{dynamic_ac_kw}(KWGAD)},\\
            &\text{\eqref{dynamic_ac_s}(SGAD)}.
        \end{aligned}\right.
    \end{align*}
    \ENDFOR 
    \IF{Adopt CA strategy for weight adjustment}
    \STATE Update weights $w^i_k$ according to \eqref{dynamic_weight}
    \ENDIF
    \IF{Adopt DK strategy for weight adjustment}
    \FOR{$i = 1,2,...,M$}
    \STATE Calculate the duplicate/kill rate: $R^i_{k+1} = -\lambda\eta\left(U_{\tilde{\mu}_k}(\xb^i_{k+1}) - \frac{1}{M}\sum_{i=1}^M U_{\tilde{\mu}_k}(\xb^i_{k+1})\right)$
    \ENDFOR 
    \FOR{$i = 1,2,...,M$}
    \IF{$R^i_{k+1}>0$}
        \STATE Duplicate the particle $\xb^i_{k+1}$ with probability $1-\exp(-R^i_{k+1})$ and kill one which is uniformly chosen from the rest.
    \ELSE 
        \STATE Kill the particle $\xb^i_{k+1}$ with probability $1-\exp(R^i_{k+1})$ and duplicate one which is uniformly chosen from the rest.
    \ENDIF
    \ENDFOR

    \ENDIF

    \ENDFOR
    \STATE \textbf{Output}:$\tilde{\mu}_T = \frac{1}{M} \sum_{i=1}^M \delta_{\xb^i_T}$.
    \end{algorithmic}
    \end{algorithm}
\emph{KL-GFSD}
In order to deal with the intractable $\log{\mu(\xb)}$ of the first variation of the KL divergence, GFSD directly approximate $\mu$ by smoothing the empirical distribution $\tilde{\mu}$ with a kernel function $K$: 
    $\hat{\mu} = \tilde{\mu}*K  = \sum_{i=1}^Mw^iK(\cdot, \xb^i)$, which leads to the following approximations: 
    \begin{align}
        U_{\tilde{\mu}_k} (\xb)  = & - \log{\pi(\xb)} + \textstyle\log{\sum_{i=1}^M w^i_k K(\xb,\xb^i_{k})},\label{gfsd-u}
        \\
        \nabla U_{\tilde{\mu}_k} (\xb) = 
        &-\nabla\log{\pi(\xb)} + \frac{\sum_{i=1}^M w^i_k\nabla_{\xb}K(\xb, \xb^i_k)}{\sum_{i=1}^M w^i_k K(\xb,\xb^i_k)}. \label{gfsd-v}
    \end{align}
    \begin{remark}\label{blob-better}
        In the above approximations, we call the terms defined through the interaction with other particles
        as the repulsive terms.
        It can be observed that the BLOB-type approximations \eqref{blob-u} have an extra repulsive term (the term in the second line)
        compared to the GFSD-type approximations \eqref{gfsd-u} and \eqref{gfsd-v}.
        Practically, this extra repulsive term would drive the particles away from each other further,
        and result in a better exploration of particles in the probability space.
        Actually, the BLOB-type methods usually outperforms the GFSD-type methods empirically. 
    \end{remark}
    \begin{remark}\label{rbf}
        Since GFSD and BLOB (partly) smooth the original empirical distribution $\tilde{\mu}$ with a kernel function $K$, the underlying evolutionary 
        distribution is actually a smoothed version of $\tilde{\mu}$. 
    To update the positions and the weights in the smoothed empirical distribution, one should solve a system of linear 
        equations to obtain the new positions $\xb^i_{k+1}$'s and weights $w^i_{k+1}$'s in the $k$-th iteration.
    Nevertheless, with a proper kernel function $K$, such as the RBF kernel, the density $\mu(\xb^i)$ at a given position $\xb^i$
        mainly comes from its corresponding weight $w^i$.
    Actually, as the RBF kernel $e^{-h\|x- x_i\|^2}$ approaches 0 when x becomes far from $x_i$ and equals 1 when $x = x_i$,
     it can be observed that the density at $x_i$ mainly from $w^i$.
    Hence, we can still update the positions and weights in a splitting scheme respectively.
    This approximation performs well in practice.
    \end{remark}

\emph{KSD-KSDD}
Except for the KL-divergence, KSD is recently adopted as the dissimilarity functional in the non-accelerated fixed-weight ParVI method KSDD \cite{korba2021kernel}, 
    whose first variation and the corresponding vector field are defined as
    \begin{align}\label{ksdd_vu}
        \begin{array}{ll}
        \frac{\delta\mathcal{F}(\mu)}{\delta\mu}(\xb) =  \mathbb{E}_{\xb'\sim\mu}\left[k_{\pi}(\xb',\xb)\right],\\
        \nabla\frac{\delta\mathcal{F}(\mu)}{\delta\mu}(\xb) =  \mathbb{E}_{\xb'\sim\mu}\left[\nabla_{\xb}k_{\pi}(\xb',\xb)\right].
        \end{array}
    \end{align}
    Here, $k_{\pi}$ denotes the Stein kernel \cite{liu2016kernelized}, and it is defined by the score of $\pi$: $s(\xb) = \nabla\log{\pi(\xb)}$ and a positive 
        semi-definite kernel function $K$:
    \begin{align*}
        k_\pi(\xb,\yb) = & s(\xb)^Ts(\yb)K(\xb,\yb) + s(\xb)^T\nabla_\yb K(\xb, \yb) + \nabla_\xb K(\xb,\yb)^T s(\yb) + 
            \nabla_\xb \cdot\nabla_\yb K(\xb,\yb).
    \end{align*}

The the first variation and its gradient in \eqref{ksdd_vu} can be directly approximated via the empirical distribution $\tilde{\mu}$.
We construct the following finite-particle approximations: 
    \begin{align*}
        U_{\tilde{\mu}_k} (\xb) = & \textstyle\sum_{i=1}^M w^i_k  k_\pi(\xb^i_{k}, \xb),\\
        \nabla U_{\tilde{\mu}_k} (\xb) =& \textstyle\sum_{i=1}^M w^i_k \nabla_\xb k_\pi(\xb^i_k, \xb).
    \end{align*}

\paragraph{The Detailed GAD-PVI algorithms} 
Adopting different underlying information metric tensors (W-metric, KW-metic and S-metric), weight adjustment approaches(CA and DK) and dissimilarity functionals/associated smoothing approaches(KL-BLOB, KL-GFSD and KSD-KSDD), we can derive 18 different intances of GAD-PVI, named as WGAD/KWGAD/SGAD-CA/DK-BLOB/GFSD/KSDD.
Here we present our General Accelerated Dynamic-weight Particle-based Variational Inference (GAD-PVI) framework, in a more detailed version of Algorithm \ref{alg:algorithm1} as Algorithm \ref{gadpvi_variants}.

\clearpage
\section{Appendix C}
In this section we list the details of experiments setting, parameter tuning and additional results of our empirical studies.
\subsection{C.1 Experiments Settings}
\paragraph{Density of the Gaussian mixture model.}
The density of the Gaussian mixture model is defined as follows: 
\begin{align*}
    \pi(\xb)\propto \frac{2}{3}\exp\left(-\frac{1}{2}\left\|\xb-\ab\right\|^2\right) + \frac{1}{3}\exp\left(-\frac{1}{2}\left\|\xb + \ab\right\|^2\right),
\end{align*}
where $\ab = 1.2 * \mathbf{1}$. 
\paragraph{Density of the Gaussian Process task.}
We follow the experiment setting in \cite{chen2018stein,dpvi}, and use the dataset LIDAR (denoted as $\mathcal{D} = \{(x_i,y_i)\}^N_{i=1}$) which consists of 221 observations 
of scalar variable $x_i$ and $y_i$.
Denote $\xb = [x_1, x_2,...,x_N]^T$ and $\yb = [y_1, y_2, ..., y_{N}]^T$, the target log-posterior w.r.t. the model parameter 
$\phi = (\phi_1, \phi_2)$ is defined as follows: 
\begin{align*}%\textstyle
	\log{p(\phi|\mathcal{D})}\! = \!-\frac{\yb^T \Kb^{-1}_y \yb}{2} \!-\! \frac{\log{\det{(\Kb_y)}}}{2} \!-\! \log{(1\!+\!\xb^T\xb)}.
\end{align*}
Here, $\Kb_y$ is a covariance function $\Kb_y = \Kb + 0.04\mathbf{I}$ with $\Kb_{i,j} = \exp(\phi_1)\exp(-\exp(\phi_2)(x_i - x_j)^2)$ and 
	$\mathbf{I}$ represents the identity matrix. 
\paragraph{Training/Validation/Test dataset in Bayesian neural network.}
For each dataset in the Bayesian neural network task, 
we split it into 90\% training data and 10\% test data randomly, 
which follows the settings from \cite{liu2016stein,zhang2020stochastic,dpvi}. 
Besides, we also randomly choose $1/5$ of the training set as the validation set for parameter tuning.
% \paragraph{Parallelization of single-chain MCMC algorithms.}
% For the single-chain MCMC algorithms ULD and HMC, we parallelize them by simply running M chain independently together.
\paragraph{Initialization of particles' positions.}
In the Gaussian mixture model, we initialize particles according to the standard Gaussian distribution.
In the Gaussian process regression task, 
we initialize particles with mean vector $[0, -10]^T$ and covariance $0.09*\mathbf{I}_{2\times 2}$ for all the algorithms.
As for the Bayesian neural network task, we follow the initialization convention in \cite{liu2016stein,dpvi}.
\paragraph{Bandwidth of Kernel Function in Different Algorithms}
For all the experiments, 
we adopt RBF kernel as the kernel function $K$: $K(\xb, \yb) = \exp(-\|\xb-\yb\|^2_2/h)$, 
where the parameter $h$ is known as the bandwidth \cite{liu2019understanding}. 
We follow the convention in \cite{dpvi} and set the parameter
$h = \frac{1}{M}\sum^M_{i=1}\left(\min_{j\neq i}{\|\xb^i - \xb^j\|^2_2}\right)$ for 
GFSD-type algorithms and 
BLOB-type algorithms.

\paragraph{WNES and WAG}
\citet{liu2019understanding} follows the accelerated gradient descend methods in the Wasserstein propability space \cite{liu2017accelerated,zhang2018estimate} and derives the WNES and WAG methods,
which update the particles' positions with an extra momentum. Though their methods have WNES and WAG type, we only conduct empirical studies of WNES as baseline because the authors report WNES algorithms are usually more robust and efficient than WAG type algorithms\cite{liu2019understanding}. 
% As for the KSDD-type algorithms, 
% we follow the conventions in \cite{lu2019accelerating} and \cite{korba2021kernel}, and set the parameter $h$ via grid search.

% We follow the convention in \cite{liu2016stein} and set the parameter $h = \text{med}^2/\log{M}$ for SVGD algorithm, 
% where $\text{med}$ is the median of the pairwise distance between particles. 
% We find that this median strategy adopted by SVGD will easily make particles collapse 
% to the modes in GFSD-type algorithms (GFSD, D-GFSD-CA, and D-GFSD-DK) and 
% BLOB-type algorithms (BLOB, D-BLOB-CA, and D-BLOB-DK), 
% and we set $h = \frac{1}{M}\sum^M_{i=1}\left(\min_{j\neq i}{\|\xb^i - \xb^j\|^2_2}\right)$ for 
% those algorithms instead, which leads to better approximation results compared with the median strategy.
% As for the BDLS algorithm and KSDD-type algorithms (KSDD, D-KSDD-CA, and D-KSDD-DK), 
% we follow the conventions in \cite{lu2019accelerating} and \cite{korba2021kernel}, and set the parameter $h$ via grid search.

\subsection{C.2 Parameters Tuning}

\paragraph{Detailed Settings for $\eta_{pos}$, $\eta_{wei}$, $\eta_{vel}$ and $\gamma$}
Here we present the parameter settings for position adjusting step-size $\eta_{pos}$, weight adjusting step-size $\eta_{wei}$, velocity field adjusting step-size $\eta_{vel}$, velocity damping parameter $\gamma$ of different algorithms are provided in Table \ref{param:gaussian}, \ref{param:gp}, and \ref{param:bnn}. 
All the parameters are chosen by grid search. For the position adjusting step-size $\eta_{pos}$, we first find a suitable range by a coarse-grain grid search and then fine tune it. Note that, the position step-size are tuned via grid search for the fixed-weight ParVI algorithms, then used in the corresponding dynamic-weight algorithms. The acceleration parameters and weight adjustment parameters are tuned via grid
search for each specific algorithm. As a result, it can be observed that the position adjusting step-size for any specific fixed-weight ParVI algorithm, its corresponding dynamic algorithm and the DK variant are the same in these tables.
For ease of understanding, we use the rate of weight adjusting step-size $\eta_{wei}$ divided by the position adjusting step-size $\eta_{pos}$ to illustrate the tuning.
Moreover, inspired by the effective warmup strategy in tuning hyper-parameters, we follow the settings of \cite{dpvi} and construct the weight adjusting step-size parameter schedule using the hyperbolic tangent: $\lambda \tanh{(2*(t/T)^5)}$, with $t$ being the current time step and $T$ the total number of steps.

\begin{table}[tb]
	\centering

	\begin{tabular}{c|cc}
		\toprule
		\multirow{2}*{Algorithm} & \multicolumn{2}{c}{tasks}  \\
		~    & Single Gaussian & Gaussian Mixture Model \\
		\hline
		ParVI-BLOB & 	1.0e-2, -, -, - & 1.0e-2, -, -, -    \\
		WAIG-BLOB & 	1.0e-2, -, 1.0, 0.3  & 1.0e-2, -, 1.0, 0.3  \\

		WNES-BLOB & 	1.0e-2, -, 1.0, 0.2   & 1.0e-2, -, 1.0, 0.2  \\
		DPVI-CA-BLOB & 	1.0e-2, 1.0, -, -  & 1.0e-2, 1.0, -, -  \\
        DPVI-DK-BLOB & 	1.0e-2, 1.0, -, - & 1.0e-2, 1.0, -, - \\
        W\NAME-CA-BLOB & 	1.0e-2, 1.0, 1.0, 0.3  & 1.0e-2, 1.0, 1.0, 0.3  \\
        W\NAME-DK-BLOB & 	1.0e-2, 5e-2, 1.0, 0.3  & 1.0e-2, 5e-2, 1.0, 0.3  \\
        \hline
        KWAIG-BLOB & 	1.0e-2, -, -, -  & 1.0e-2, -, -, -  \\
        KW\NAME-CA-BLOB & 	1.0e-2, 5e-3, 1.0, 0.9   & 1.0e-2, 5e-3, 1.0, 0.9 \\
        KW\NAME-DK-BLOB & 	1.0e-2, 5e-2, 1.0, 0.9  & 1.0e-2, 5e-2, 1.0, 0.9   \\
        \hline
        SAIG-BLOB & 	 5.0e-2, -, -, -  & 2.5e-2, -, -, -  \\
        S\NAME-CA-BLOB & 	5.0e-2, 5e-3, 1.0, 0.9  & 2.5e-2, 5e-3, 1.0, 0.9  \\
        S\NAME-DK-BLOB & 	5.0e-2, 5e-2, 1.0, 0.9  & 2.5e-2, 5e-2, 1.0, 0.9 \\
        \hline
        \hline
        ParVI-GFSD & 	1.0e-2, -, -, - & 1.0e-2, -, -, -  \\
		WAIG-GFSD & 	1.0e-2, -, 1.0, 0.3  & 1.0e-2, -, 1.0, 0.3 \\
		WNES-GFSD & 	1.0e-2, -, 1.0, 0.2  & 1.0e-2, -, 1.0, 0.2  \\
		DPVI-CA-GFSD & 	1.0e-2, 1.0, -, -  & 1.0e-2, 0.8, -, -   \\
        DPVI-DK-GFSD & 	1.0e-2, 1.0, -, -  & 1.0e-2, 1.0, -, - \\
        W\NAME-CA-GFSD & 	1.0e-2, 1.0, 1.0, 0.3  & 1.0e-2, 0.8, 1.0, 0.3  \\
        W\NAME-DK-GFSD & 	1.0e-2, 5e-2, 1.0, 0.3  & 1.0e-2, 5e-2, 1.0, 0.3  \\
        \hline
        KWAIG-GFSD & 	1.0e-2, -, -, -  & 1.0e-2, -, -, -  \\
        KW\NAME-CA-GFSD & 	1.0e-2, 5e-3, 1.0, 0.9  & 1.0e-2, 5e-3, 1.0, 0.9  \\
        KW\NAME-DK-GFSD & 	1.0e-2, 5e-2, 1.0, 0.9  & 1.0e-2, 5e-2, 1.0, 0.9  \\
        \hline
        SAIG-GFSD & 	  5.0e-2, -, -, - & 2.5e-2, -, -, - \\
        S\NAME-CA-GFSD & 	5.0e-2, 5e-3, 1.0, 0.9  & 2.5e-2, 5e-3, 1.0, 0.9  \\
        S\NAME-DK-GFSD & 	5.0e-2, 5e-2, 1.0, 0.9  & 2.5e-2, 5e-2, 1.0, 0.9  \\
        
		\bottomrule
	\end{tabular}
	\caption{Parameters of different algorithms in SG and GMM($\eta_{pos}$, $\frac{\eta_{wei}}{\eta_{pos}}$, $\eta_{vel}$ and $\gamma$).}
	\label{param:gaussian}
	\vspace{-5mm}
\end{table}

\begin{table}[tb]
	\centering

	\begin{tabular}{c|cc}
		\toprule
		\multirow{2}*{Algorithm} & \multicolumn{2}{c}{Smoothing Approaches}  \\
		~    & BLOB &GFSD  \\
		\hline
		ParVI & 	1.0e-2, --, --, --  & 1.0e-2, --, --, --  \\
		WAIG & 	1.0e-2, --, 1.0, 0.4  & 1.0e-2, --, 1.0, 0.3  \\
		WNES & 	1.0e-2, --, 1.0, 0.4  & 1.0e-2, --, 1.0, 0.4  \\
		DPVI-CA & 	1.0e-2, 0.1, --, --  & 1.0e-2, 0.3, --, -- \\
        DPVI-DK & 	1.0e-2, 0.01, --, --  & 1.0e-2, 0.01, --, --  \\
        W\NAME-CA & 	1.0e-2, 0.1, 1.0, 0.4,   & 1.0e-2, 0.3, 1.0, 0.3  \\
        W\NAME-DK & 	1.0e-2, 0.01, 1.0, 0.4,   & 1.0e-2, 0.01, 1.0, 0.3  \\
        \hline
        KWAIG & 	5.0e-3, --, 1.0, 0.8  & 1.0e-3, --, 1.0, 0.7  \\
        KW\NAME-CA & 	5.0e-3, 0.1, 1.0, 0.8   & 1.0e-3, 0.3, 1.0, 0.7   \\
        KW\NAME-DK & 	5.0e-3, 0.01, 1.0, 0.8   & 1.0e-3, 0.01, 1.0, 0.7   \\
        \hline
        SAIG & 	 2.0e-2, --, 1.0, 0.7 & 1.0e-2, --, 1.0, 0.6  \\
        S\NAME-CA & 	2.0e-2, 0.1, 1.0, 0.7   & 1.0e-2, 0.3, 1.0, 0.6,  \\
        S\NAME-DK & 	2.0e-2, 0.01, 1.0, 0.7   & 1.0e-2, 0.01, 1.0, 0.6  \\
		\bottomrule
	\end{tabular}
	\caption{Parameters of different algorithms in GP($\eta_{pos}$, $\frac{\eta_{wei}}{\eta_{pos}}$, $\eta_{vel}$ and $\gamma$).}
	\label{param:gp}
	\vspace{-5mm}
\end{table}

\begin{table}[tb]
	\centering

	\begin{tabular}{c|cccc}
		\toprule
		\multirow{2}*{Algorithm} & \multicolumn{4}{c}{Datasets}  \\
		~    & Concrete & kin8nm & RedWine & space \\
		\hline
		ParVI-BLOB & 	4.0e-6, --, --, --  & 1.0e-6, --, --, -- & 3.4e-6, --, --, -- & 3.0e-6, --, --, -- \\
		WAIG-BLOB & 	4.0e-6, --, 1.0, 0.2  & 1.0e-6, --, 1.0, 0.3 & 3.4e-6, --, 1.0, 0.5 & 3.0e-6, --, 1.0, 0.5 \\
		WNES-BLOB & 	4.0e-6, --, 1.0, 0.3  & 1.0e-6, --, 1.0, 0.2 & 3.4e-6, --, 1.0, 0.2 & 3.0e-6, --, 1.0,  0.2 \\
		DPVI-CA-BLOB & 	4.0e-6, 1.0, --, --  & 1.0e-6, 0.8, --, -- & 3.4e-6, 0.5, --, -- & 3.0e-6, 1.0, --, -- \\
        DPVI-DK-BLOB & 	4.0e-6, 1.0, --, --  & 1.0e-6, 0.8, --, -- & 3.4e-6, 0.5, --, -- & 3.0e-6, 1.0, --, -- \\
        W\NAME-CA-BLOB & 	4.0e-6, 1.0, 1.0, 0.2   & 1.0e-6, 0.8, 1.0, 0.3 & 3.4e-6, 0.5, 1.0, 0.5 & 3.0e-6, 1.0, 1.0, 0.5 \\
        W\NAME-DK-BLOB & 	4.0e-6, 1.0, 1.0, 0.2  & 1.0e-6, 0.8, 1.0, 0.3 & 3.4e-6, 0.1, 1.0, 0.5 & 3.0e-6, 1.0, 1.0, 0.5  \\
        \hline
        KWAIG-BLOB & 	4.0e-6, --, 1.0, 0.7  & 1.0e-6, --, 1.0, 0.3 & 3.4e-6, --, 1.0, 0.5 & 2.0e-6, --, 1.0, 0.8 \\
        KW\NAME-CA-BLOB & 	4.0e-6, 1.0, 1.0, 0.7  & 1.0e-6, 0.8, 1.0, 0.3 & 3.4e-6, 0.5, 1.0, 0.5 & 2.0e-6, 1.0, 1.0, 0.8 \\
        KW\NAME-DK-BLOB & 	4.0e-6, 1.0, 1.0, 0.7  & 1.0e-6, 0.8, 1.0, 0.3 & 3.4e-6, 0.1, 1.0, 0.5 & 2.0e-6, 1.0, 1.0, 0.8  \\
        \hline
        SAIG-BLOB & 	 1.0e-5, --, 1.0, 0.6 & 2.0e-6, --, 1.0, 0.3 & 7.0e-6, --, 1.0, 0.5 & 8.0e-6, --, 1.0, 0.8 \\
        S\NAME-CA-BLOB & 	1.0e-5, 1.0, 1.0, 0.6  & 2.0e-6, 0.8, 1.0, 0.3 & 7.0e-6, 0.5, 1.0, 0.5 & 8.0e-6, 1.0, 1.0, 0.8 \\
        S\NAME-DK-BLOB & 	1.0e-5, 1.0, 1.0, 0.6   & 2.0e-6, 0.8, 1.0, 0.3 & 7.0e-6, 0.1, 1.0, 0.5 & 8.0e-6, 1.0, 1.0, 0.8 \\
        \hline
        \hline
        ParVI-GFSD & 	4.0e-6, --, --, --  & 1.0e-6, --, --, -- & 3.4e-6, --, --, -- & 3.0e-6, --, --, -- \\
		WAIG-GFSD & 	4.0e-6, --, 1.0, 0.1  & 1.0e-6, --, 1.0, 0.3 & 3.4e-6, --, 1.0, 0.5 & 3.0e-6, --, 1.0, 0.5 \\
		WNES-GFSD & 	4.0e-6, --, 1.0, 0.3  & 1.0e-6, --, 1.0, 0.3 & 3.4e-6, --, 1.0, 0.2 & 3.0e-6, --, 1.0, 0.2 \\
		DPVI-CA-GFSD & 	4.0e-6, 1.0, --, --  & 1.0e-6, 0.8, --, -- & 3.4e-6, 0.5, --, -- & 3.0e-6, 1.0, --, -- \\
        DPVI-DK-GFSD & 	4.0e-6, 1.0, --, --  & 1.0e-6, 0.8, --, -- & 3.4e-6, 0.5, --, -- & 3.0e-6, 1.0, --, -- \\
        W\NAME-CA-GFSD & 	4.0e-6, 1.0, 1.0, 0.1  & 1.0e-6, 0.8, 1.0, 0.3 & 3.4e-6, 0.5, 1.0, 0.5 & 3.0e-6, 1.0, 1.0, 0.5 \\
        W\NAME-DK-GFSD & 	4.0e-6, 1.0, 1.0, 0.1   & 1.0e-6, 0.8, 1.0, 0.3 & 3.4e-6, 0.1, 1.0, 0.5 & 3.0e-6, 1.0, 1.0, 0.5 \\
        \hline
        KWAIG-GFSD & 	4.0e-6, --, 1.0, 0.6  & 1.0e-6, --, 1.0, 0.3 & 3.4e-6, --, 1.0, 0.5 & 3.0e-6, --, 1.0, 0.3 \\
        KW\NAME-CA-GFSD & 	4.0e-6, 1.0, 1.0, 0.6  & 1.0e-6, 0.8, 1.0, 0.3 & 3.4e-6, 0.5, 1.0, 0.5 & 3.0e-6, 1.0, 1.0, 0.3 \\
        KW\NAME-DK-GFSD & 	4.0e-6, 1.0, 1.0, 0.6  & 1.0e-6, 0.8, 1.0, 0.3 & 3.4e-6, 0.1, 1.0, 0.5 & 3.0e-6, 1.0, 1.0, 0.3 \\
        \hline
        SAIG-GFSD & 	 4.0e-6, --, 1.0, 0.2 & 2.0e-6, --, 1.0, 0.3 & 7.0e-6, --, 1.0, 0.5 & 6.0e-6, --, 1.0, 0.3 \\
        S\NAME-CA-GFSD & 	4.0e-6, 1.0, 1.0, 0.2  & 2.0e-6, 0.8, 1.0, 0.3 & 7.0e-6, 0.5, 1.0, 0.5 & 6.0e-6, 1.0, 1.0, 0.3 \\
        S\NAME-DK-GFSD & 	4.0e-6, 1.0, 1.0, 0.2   & 2.0e-6, 0.8, 1.0, 0.3 & 7.0e-6, 0.1, 1.0, 0.5 & 6.0e-6, 1.0, 1.0, 0.3 \\
		\bottomrule
	\end{tabular}
	\caption{Parameters of different algorithms in BNN($\eta_{pos}$, $\frac{\eta_{wei}}{\eta_{pos}}$, $\eta_{vel}$ and $\gamma$).}
	\label{param:bnn}
	\vspace{-5mm}
\end{table}

\subsection{C.3 Additional Experiments Results}\label{additional_result}
\paragraph{Results for SG}
In this section, we give empirical results on approximating a single-mode Gaussian distribution, whose density is defined as: 
\begin{align*}
    \pi(\xb)\propto \exp(-\frac{1}{2}\xb^T\Sigma^{-1}\xb), 
\end{align*}
where $\Sigma_{ii} = 1.0$ and correlation $\Sigma_{ij, i\neq j} = 0.8$. 
To investigate the influence of  number $M$ in this task, we run all the algorithms with $M\in\{32, 64, 128, 256, 512\}$.
All the particles are initialized from a Gaussian distribution with zero mean and covariance matrix $0.5*\mathbf{I}_{10\times 10}$.

In Figure \ref{figure_sg_w} and \ref{figure_sg_kws}, we plot the $W_2$ distance to the target of the samples generated by each algorithm w.r.t. iteration and time.
We generate 5000 samples from the target distribution $\pi$ as reference to evaluate $W_2$.
As this task is a simple Single Gaussian model, the approximation error difference between the GAD-PVI algorithms and fixed-weight ones is not so obvious. When particles number gets more, the effect of the dynamic weight adjustment scheme is smaller, for such a large number of fix-weight particles is also sufficed for approximating this simple distribution. Meanwhile, the faster convergence effect of the accelerated position update strategies is quite obvious in the figures.
Moreover, we can see that the GFSD-type algorithms cannot outperform the BLOB-type algorithms and SVGD, this may due to the lack of the repulsive mechanism in GFSD which lead to particle system collapse in single-mode task. This also coincide with the discussion in Appendix B.3.

In Table \ref{sg_result}, we further report the final $W_2$ distance between the empirical distribution generated by each algorithm and the target distribution. It can be observed that, in the Wasserstein metric case, CA strategy constantly outperform their fixed-weight counterparts with the same number of particles, the DK variants are weakened due to single-modality of this task. However, in the KW or S metric case, it can be observed that the GAD-DK algorithms outperform than others in majority of cases, this is due to the poor transportation ability of KW and S metric, a direct duplicate/kill mechanism greatly enhance the transport speed from low probability region to high probability region. We also find that the KW and Stype algorithms achieve poor approximation result comparing to WGAD algorithms in terms of the Wasserstein distance to reference points, this may because the WGAD algorithms aim to implicitly minimize the Wasserstein distance, and this is not the case for other two type.

\begin{table}[tb]
	\centering
    \resizebox{1.0\textwidth}{!}{
	\begin{tabular}{c|ccccc}
		\toprule
		\multirow{2}*{Algorithm} & \multicolumn{5}{c}{Number of particles}  \\
		~    & 32 & 64 & 128 & 256 & 512 \\
		\hline
        ParVI-SVGD & 	1.320e+00 & 1.228e+00  & 1.162e+00 & 1.102e+00 & 1.027e+00\\
        \hline
		ParVI-BLOB & 1.315e+00 & 1.229e+00  & 1.164e+00 & 1.102e+00 & 1.038e+00\\
		WAIG-BLOB &  1.314e+00	& 1.230e+00  & 1.164e+00 & 1.100e+00 & 1.038e+00\\
		WNes-BLOB & 1.315e+00	& 1.230e+00  & 1.164e+00 & 1.101e+00 & 1.037e+00\\
        DPVI-DK-BLOB & 1.313e+00	& 1.229e+00  & 1.163e+00 & 1.100e+00 & 1.035e+00\\
        DPVI-CA-BLOB & 1.309e+00	&1.227e+00  & 1.162e+00 & 1.102e+00 &1.037e+00\\
        WGAD-DK-BLOB(Ours) & 	1.313e+00   &1.229e+00  & 1.163e+00 & \textbf{1.098e+00} & \textbf{1.034e+00}\\
        WGAD-CA-BLOB(Ours) & 	\textbf{1.300e+00} &\textbf{1.226e+00}  & \textbf{1.161e+00} & 1.099e+00  & 1.036e+00\\
        \hline
        KWAIG-BLOB & 	1.943e+00  & 1.922e+00                          & 1.889e+00 &1.827e+00 & 1.777e+00\\
        KWGAD-DK-BLOB(Ours) & 	\textbf{1.920e+00}  & \textbf{1.884e-02} & \textbf{1.848e+00}& \textbf{1.798e+00}& \textbf{1.757e+00} \\
        KWGAD-CA-BLOB(Ours) & 	1.942e+00  & 1.901e+00                 & 1.865e+00& 1.816e+00 & 1.764e+00\\

        \hline
        SAIG-BLOB & 	 1.451e+00 & 1.412e+00                      & 1.429e+00 &1.436e+00 & 1.479e+00\\
        SGAD-DK-BLOB(Ours) &\textbf{1.435e+00}  & \textbf{1.355e+00} & \textbf{1.341e+00}& \textbf{1.219e+00} & \textbf{1.143e+00} \\
        SGAD-CA-BLOB(Ours) & 	1.444e+00  & 1.396e+00              & 1.412e+00& 1.405e+00 & 1.407e+00\\

        \hline
        \hline
		ParVI-GFSD & 	1.453e+00  & 1.353e+00 & 1.267e+00 & 1.198e+00 & 1.136e+00\\
		WAIG-GFSD & 	1.449e+00  & 1.353e+00 & 1.264e+00 & 1.196e+00 & 1.134e+00\\
		WNes-GFSD & 	1.450e+00  & 1.353e+00 & 1.265e+00 & 1.197e+00 & 1.135e+00\\
        DPVI-DK-GFSD & 	1.448e+00  & 1.347e+00 & 1.267e+00 & 1.197e+00 & 1.135e+00\\
        DPVI-CA-GFSD & 	1.446e+00  & 1.349e+00 & 1.259e+00 & 1.195e+00 & 1.133e+00\\
        WGAD-DK-GFSD(Ours) & 	1.448e+00  & 1.345e+00 & 1.264e+00 & 1.195e+00 & 1.134e+00\\
        WGAD-CA-GFSD(Ours) & 	\textbf{1.398e+00}  & \textbf{1.332e+00} & \textbf{1.252e+00} & \textbf{1.191e+00} & \textbf{1.131e+00}\\
        \hline
        KWAIG-GFSD & 	2.246e+00  & 2.171e+00        & 2.134e+00           & 2.082e-02 & 2.046e+00 \\
        KWGAD-DK-GFSD(Ours) & 	2.215e+00  & \textbf{2.154e+00} & \textbf{2.092e+00} &\textbf{2.073e-02} & \textbf{2.022e+00} \\
        KWGAD-CA-GFSD(Ours) &\textbf{2.204e+00} &2.160e+00 & 2.110e+00 &2.081e-02 & 2.035e+00 \\
        
        \hline
        SAIG-GFSD & 	 1.823e+00              & 1.789e+00 & 1.760e+00       & 1.626e-02 & 1.609e+00\\
        SGAD-DK-GFSD(Ours) & 1.881e+00  & \textbf{1.782e+00} & \textbf{1.609e+00} & \textbf{1.437e-02} & \textbf{1.401e+00} \\
        SGAD-CA-GFSD(Ours) & 	\textbf{1.820e+00}        & 1.783e+00    & 1.720e+00      & 1.602e-02 & 1.592e+00 \\
		\bottomrule
	\end{tabular}}
	\caption{Averaged Test $W_2$ distances for different ParVI methods in SG task.}
	\label{sg_result}
	\vspace{-5mm}
\end{table}

\begin{figure}[ht!]
    \centering
    \begin{subfigure}{\linewidth}
        \includegraphics[width=.45\linewidth]{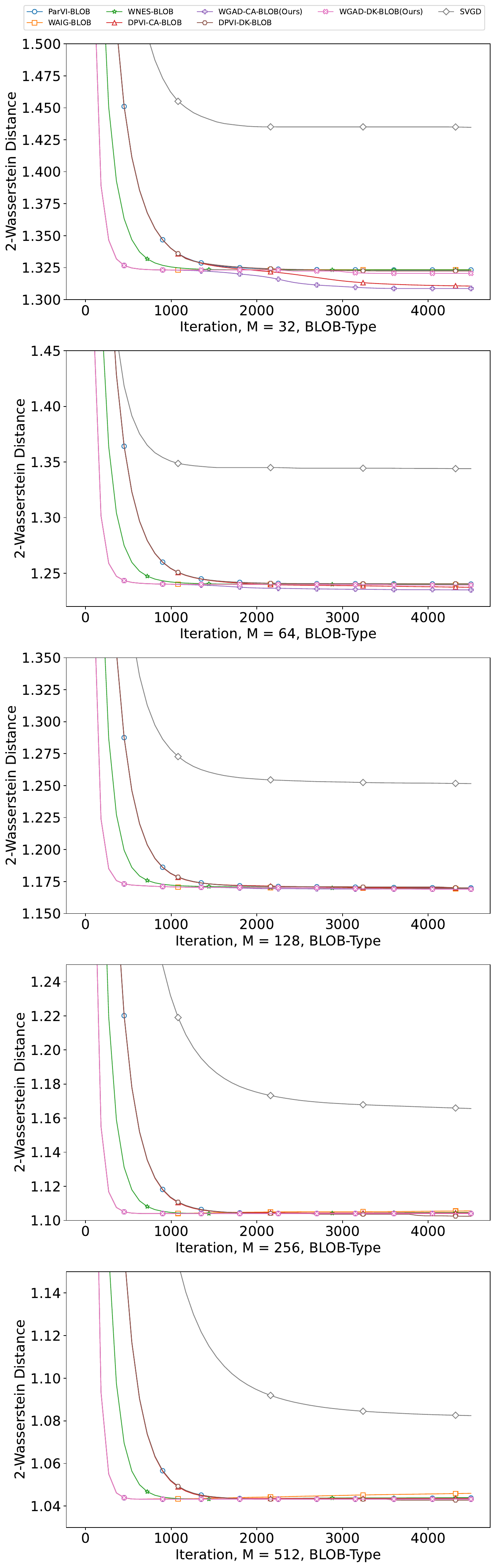}
        \includegraphics[width=.45\linewidth]{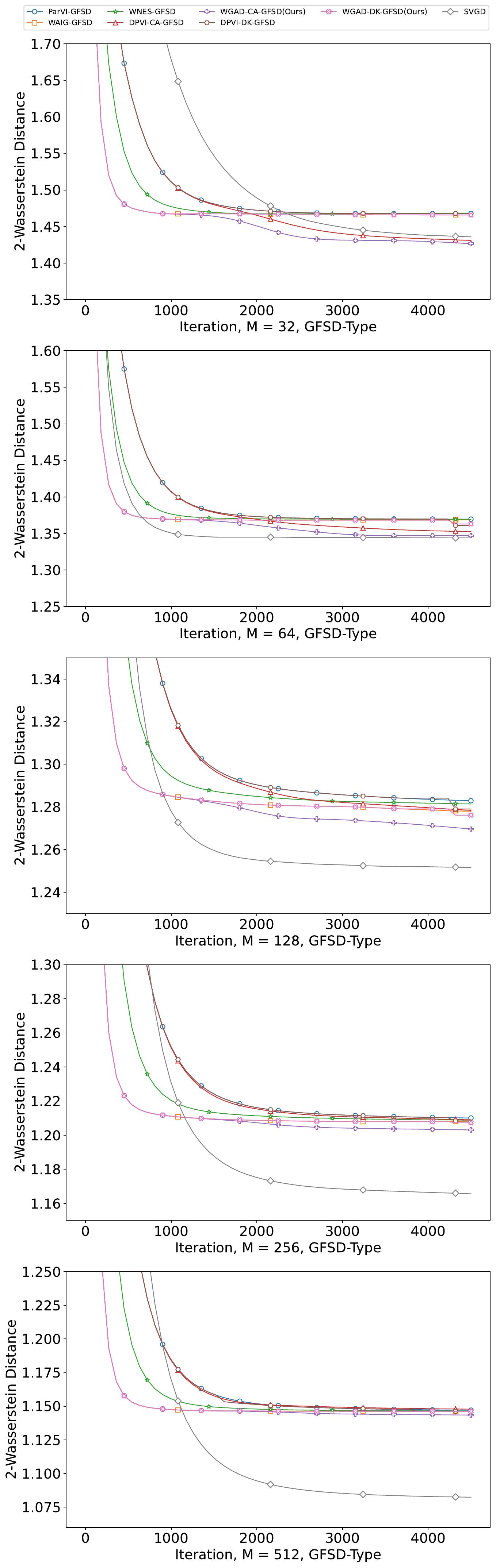}
    \end{subfigure}\qquad
    \caption{Averaged Test $W_2$ distance to the target w.r.t. iterations in the SG task for algorithms(W-Type).}
    \vspace{-5mm}
    \label{figure_sg_w}
    \end{figure}
\begin{figure}[ht!]
    \centering
    \begin{subfigure}{\linewidth}
        \includegraphics[width=.24\linewidth]{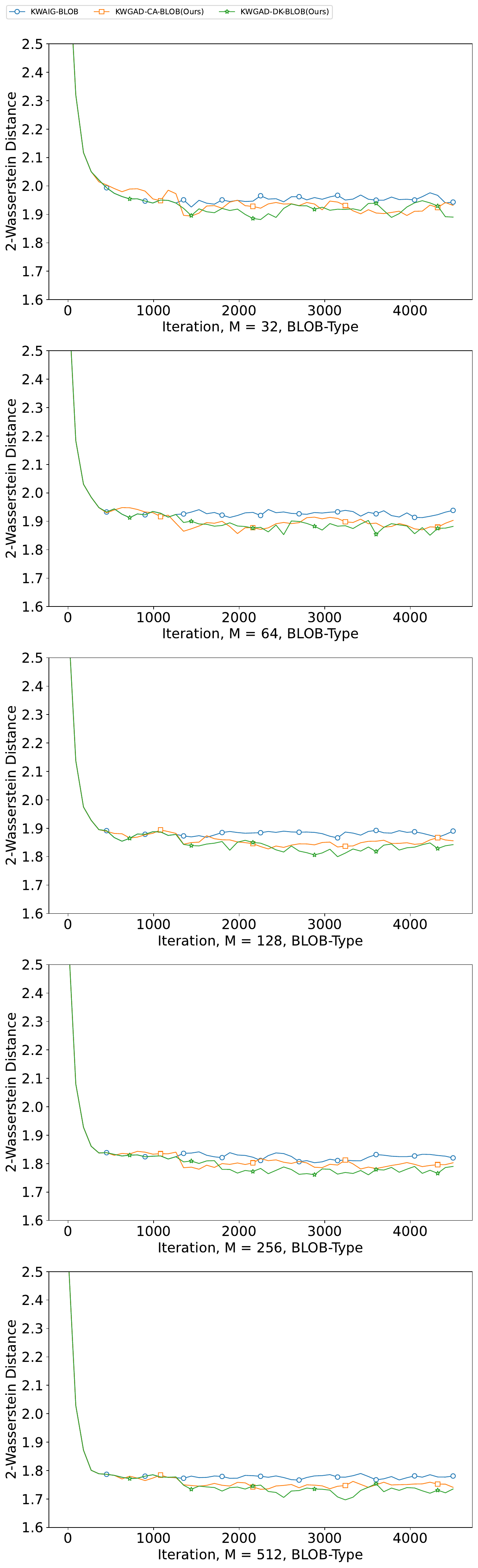}
        \includegraphics[width=.24\linewidth]{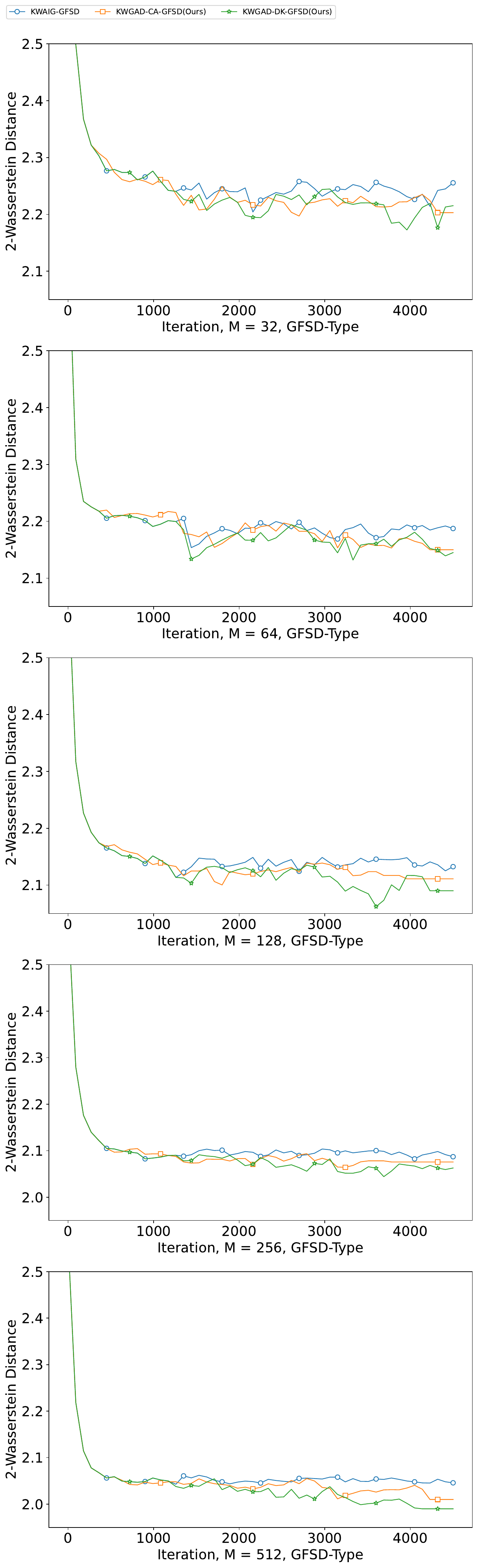}
        \includegraphics[width=.24\linewidth]{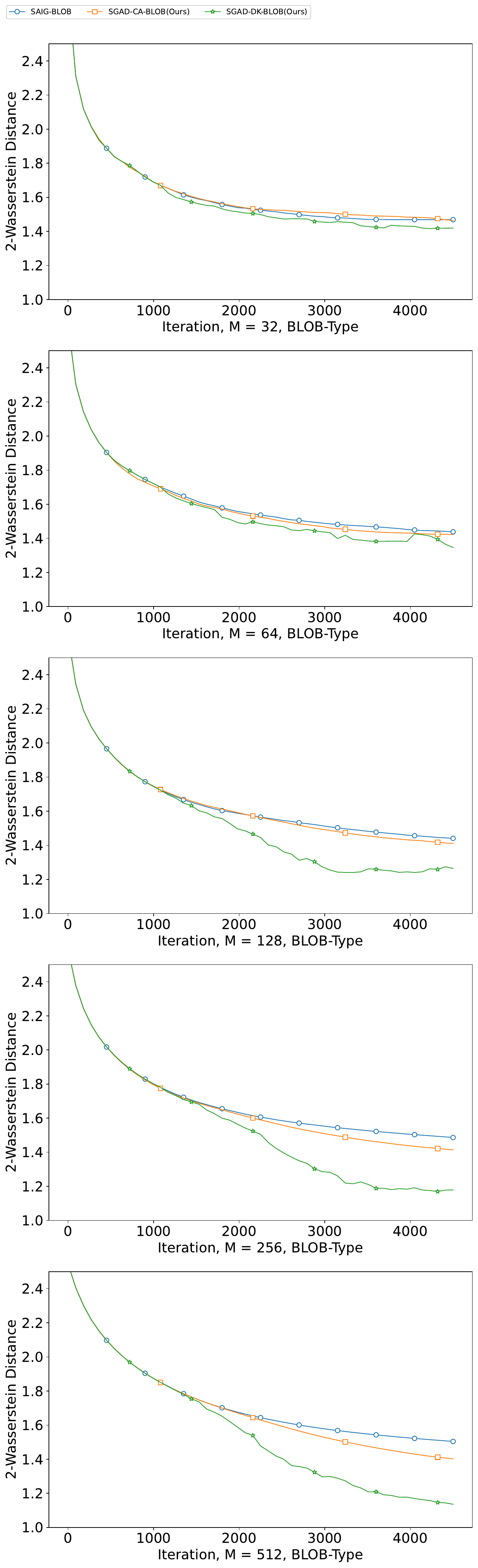}
        \includegraphics[width=.24\linewidth]{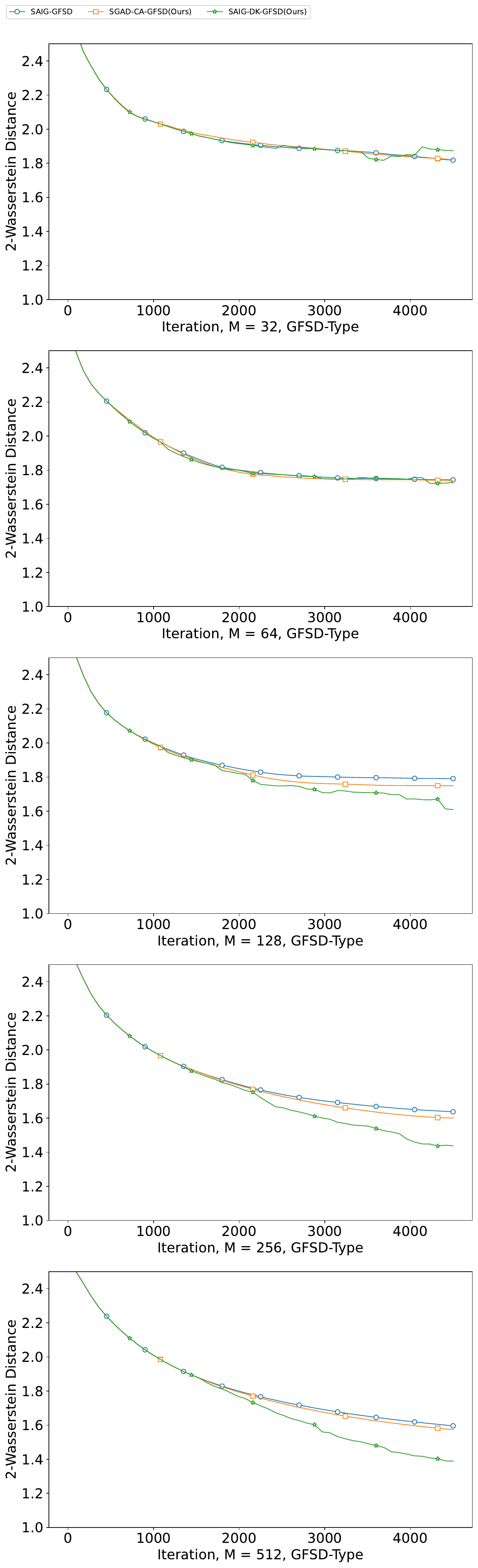}
    \end{subfigure}\qquad
    \caption{Averaged Test $W_2$ distance to the target w.r.t. iterations in the SG task for algorithms(KW/S-Type).}
    \vspace{-5mm}
    \label{figure_sg_kws}
    \end{figure}

\paragraph{Additional Results for GMM}
We provide additional results for Gaussian Mixture Model experiment. 
In Figure \ref{figure_mg} and Figure \ref{figure_mg_kws}, we plot the $W_2$ w.r.t iteration of each algorithm for all $M=\{32,63,128,256,512\}$. From these figures, we can observe that, compared with the baseline ParVI algorithms,
our GAD-PVI of either CA strategy or DK variants
result in a better performance, i.e., less approximation error and faster convergence.
Actually, as we have discussed in the methodology part, while the weight-adjustment step in GAD-PVI greatly enhances the expressiveness of particles' empirical distribution, the accelerated position update strategy also bring in faster convergence. From Figure \ref{figure_mg}, we find that DK variants decrease quite fast at first in the WGAD case, that is because the duplicate/kill scheme will greatly enhance the particle transport ability thus move more particles to high probability region at first comparing to moving particle step by step. In the Figure \ref{figure_mg_kws}, we observe that DK variants also show better result comparing to other algorithms, this may because of the slow transportation property of KW and S type algorithms.

In Table \ref{mg_result}, we further report the final $W_2$ distance between the empirical distribution generated by each algorithm and the target distribution. It can be observed that GAD-PVI algorithms constantly achieve better approximation result than existing algorithms. Notably, for this complex multi-mode task, DK variants show their advantage as the duplicate/kill operation allows transferring particles from low-probability region to distant high-probability area (e.g. among different local modes) especially in KW/S case. However, the DK variants in Wasserstein case are not so competitive with CA algorithms, this difference could lie in that the KW/S metric space are much more influenced by the potential barrier and need DK scheme to transport particles among different local modes but Wasserstein metric are more robust from multi-modality and CA strategy suffice. Besides, the GAD-PVI algorithms with the CA strategy are usually more stable than their counterpart with DK, which may be ascribed to the fluctuations induced by the discrete weight adjustment (0 or $1/M$) in DK.

\begin{table}[tb]
	\centering
    \resizebox{1.0\textwidth}{!}{
	\begin{tabular}{c|ccccc}
		\toprule
		\multirow{2}*{Algorithm} & \multicolumn{5}{c}{Number of particles}  \\
		~    & 32 & 64 & 128 & 256 & 512 \\
		\hline
        ParVI-SVGD & 	2.175e+00 & 2.101e+00  & 2.088e+00 & 2.026e+00 & 2.044e+00\\
        \hline
		ParVI-BLOB & 	2.317e+00 & 2.779e+00  & 2.292e+00 & 2.440e+00 & 2.294e+00\\
		WAIG-BLOB &  2.317e+00	& 2.775e+00  & 2.039e+00 & 1.976e+00 & 1.845e+00\\
		WNes-BLOB & 2.317e+00	& 2.777e+00  & 2.213e+00 & 2.329e+00 & 2.180e+00\\
        DPVI-DK-BLOB & 2.065e+00	& 2.066e+00  & 1.859e+00 & 1.735e+00 & 1.704e+00\\
        DPVI-CA-BLOB & 2.039e+00	&1.934e+00  & 1.825e+00 & 1.727e+00 & 1.633e+00\\
        WGAD-DK-BLOB(Ours) & 	2.064e+00 &1.933e+00  & 1.831e+00 & 1.727e+00 & 1.650e+00\\
        WGAD-CA-BLOB(Ours) & 	\textbf{2.037e+00} &\textbf{1.929e+00}  & \textbf{1.824e+00} & \textbf{1.725e+00}  & \textbf{1.632e+00}\\
        \hline
        KWAIG-BLOB & 	4.937e+00  & 4.674e+00                          & 4.600e+00 & 4.242e+00 & 4.221e+00\\
        KWGAD-DK-BLOB(Ours) & \textbf{2.854e+00} & \textbf{2.622e+00} & \textbf{2.400e+00}& \textbf{2.542e+00}& \textbf{2.248e+00} \\
        KWGAD-CA-BLOB(Ours) & 	4.565e+00  & 4.206e+00                 & 4.094e+00& 3.836e+00 & 3.767e+00\\

        \hline
        SAIG-BLOB & 	 5.070e+00 & 4.632e+00                      & 4.554e+00 & 4.140e+00 & 4.032e+00\\
        SGAD-DK-BLOB(Ours) &\textbf{2.863e+00}  & \textbf{2.914e+00} & 2.760e+00& \textbf{1.1890e+00} & \textbf{2.247e+00} \\
        SGAD-CA-BLOB(Ours) & 	3.406e+00  & 3.051e+00              & \textbf{2.659e+00}& 2.595e+00 & 2.492e+00\\

        \hline\hline
		ParVI-GFSD & 	2.427e+00  & 2.888e+00 & 2.331e+00 & 2.567e+00 & 2.398e+00\\
		WAIG-GFSD & 	2.425e+00  & 2.885e+00 & 2.328e+00 & 2.206e+00 & 2.094e+00\\
		WNes-GFSD & 	2.426e+00  & 2.888e+00 & 2.330e+00 & 2.494e+00 & 2.337e+00\\
        DPVI-DK-GFSD & 	2.151e+00  & 2.025e+00 & 1.924e+00 & 1.837e+00 & 1.769e+00\\
        DPVI-CA-GFSD & 	2.134e+00  & 2.025e+00 & 1.928e+00 & 1.838e+00 & 1.755e+00\\
        WGAD-DK-GFSD(Ours) & 	2.150e+00  & \textbf{2.017e+00} & 1.924e+00 & \textbf{1.834e+00} & 1.755e+00\\
        WGAD-CA-GFSD(Ours) & 	\textbf{2.120e+00}  & 2.019e+00 & \textbf{1.923e+00} & 1.835e+00 & \textbf{1.754e+00}\\
        \hline
        KWAIG-GFSD & 	5.072e+00  & 4.780e+00                        & 4.706e+00           & 4.381e+00 & 4.359e+00 \\
        KWGAD-DK-GFSD(Ours) &\textbf{3.086e+00} & \textbf{2.682e+00} & \textbf{2.817e+00} &\textbf{2.385e+00} & \textbf{2.393e+00} \\
        KWGAD-CA-GFSD(Ours) & 4.597e+00  & 4.389e+00                 & 4.262e+00    &  3.960e+00 & 3.929e+00 \\
        
        \hline
        SAIG-GFSD & 	 4.828e+00              & 4.577e+00          & 4.555e+00       & 4.151e+00 & 4.075e+00\\
        SGAD-DK-GFSD(Ours) & \textbf{2.994e+00}  & \textbf{3.411e+00} & \textbf{2.881e+00} & \textbf{2.347e+00} & \textbf{2.324e+00} \\
        SGAD-CA-GFSD(Ours) & 	3.937e+00        & 4.142e+00    & 3.676e+00      & 3.617e+00 & 4.007e+00 \\
		\bottomrule
	\end{tabular}}
	\caption{Averaged Test $W_2$ distances for different ParVI methods in GMM task.}
	\label{mg_result}
	\vspace{-5mm}
\end{table}

\begin{figure}[ht!]
    \centering
    \begin{subfigure}{\linewidth}
        \includegraphics[width=.24\linewidth]{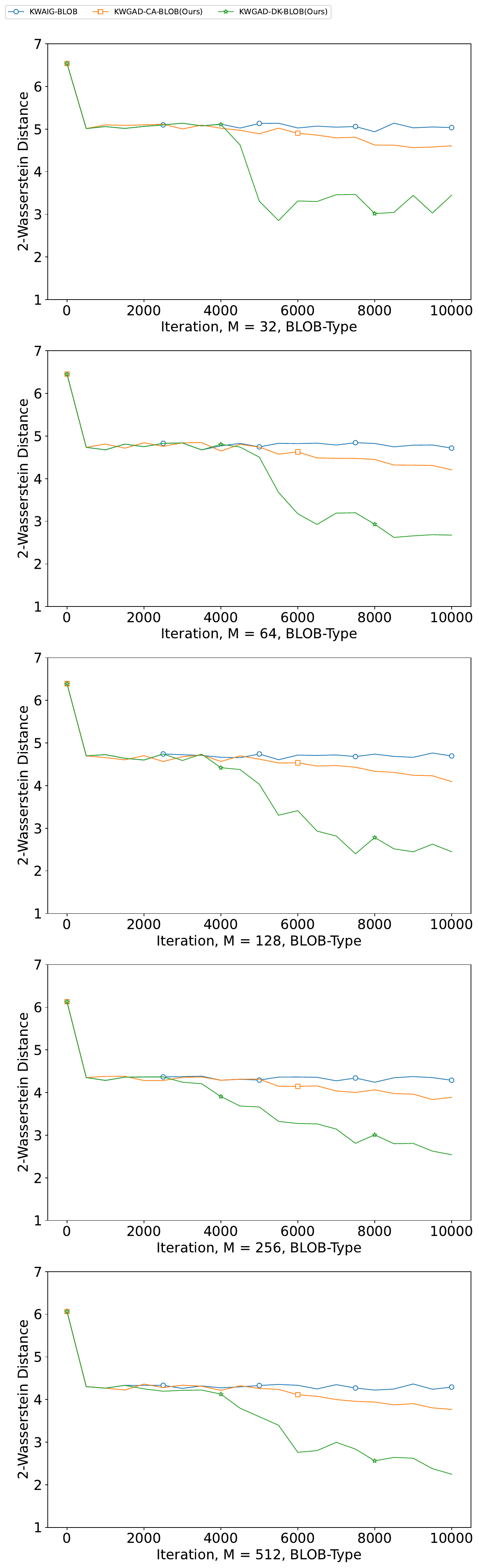}
        \includegraphics[width=.24\linewidth]{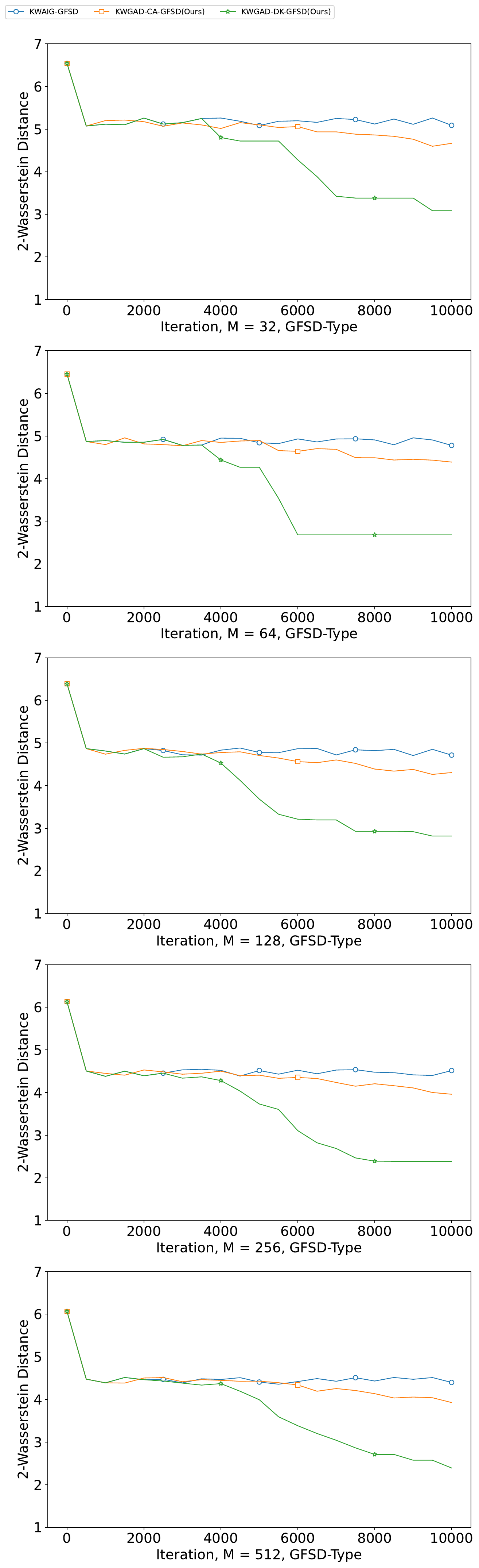}
        \includegraphics[width=.24\linewidth]{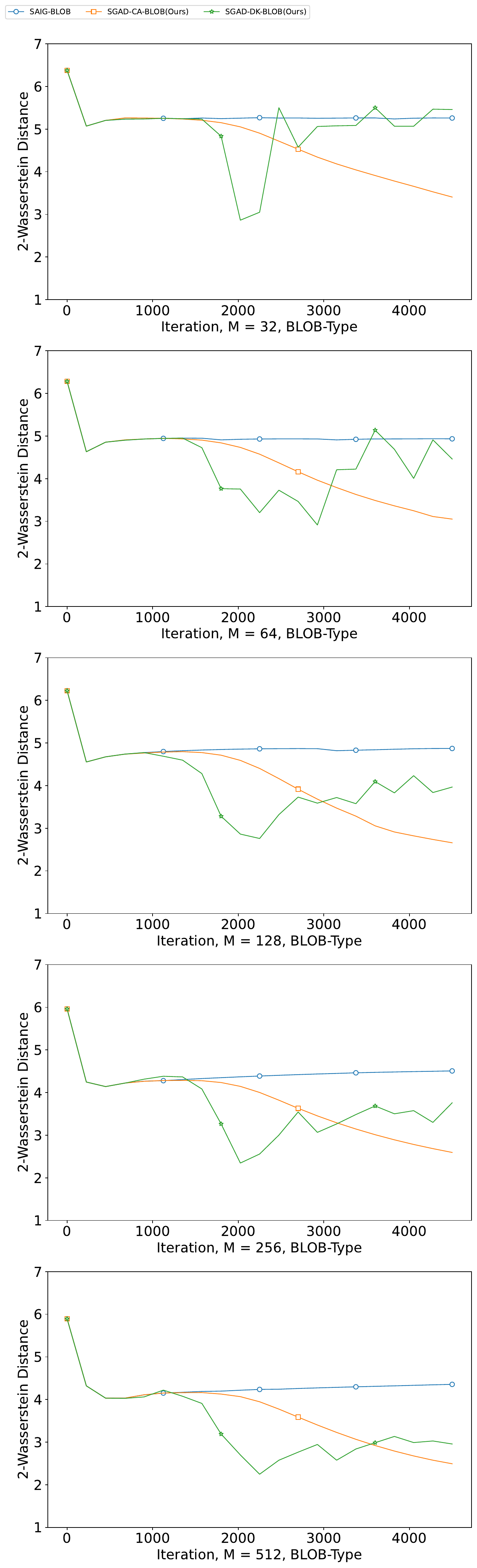}
        \includegraphics[width=.24\linewidth]{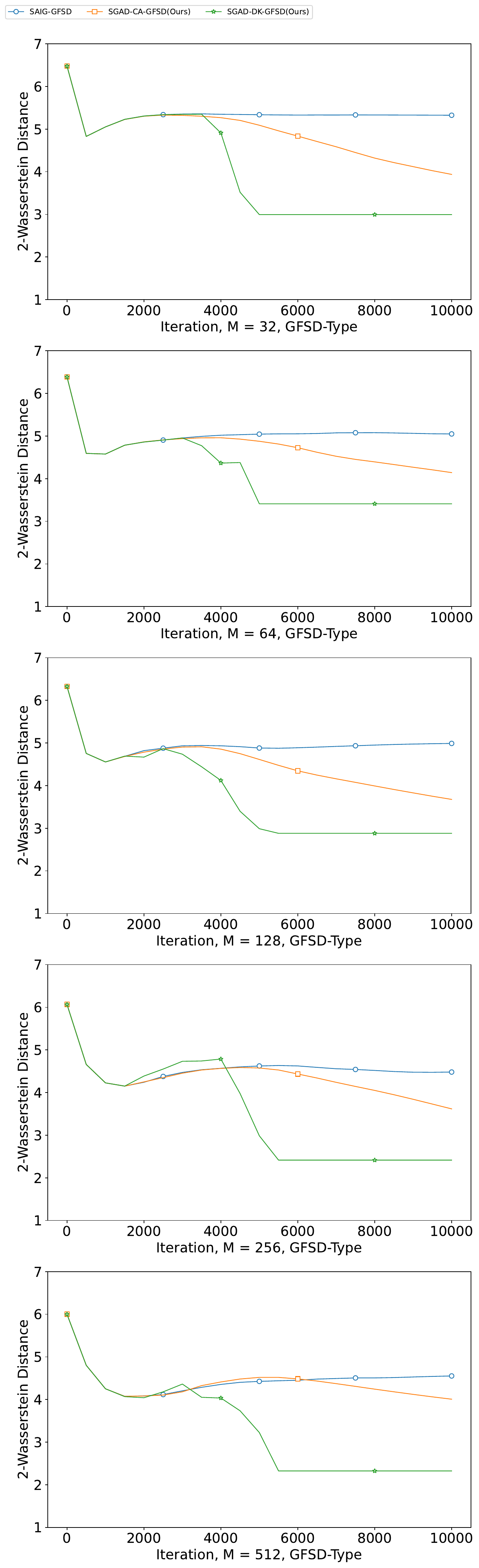}
    \end{subfigure}\qquad
    \caption{Averaged Test $W_2$ distance to the target w.r.t. iterations in the GMM task for algorithms(KW/S-Type).}
    \vspace{-5mm}
    \label{figure_mg_kws}
    \end{figure}

\paragraph{Additional Results for GP}
Here, we provide additional results of KW/S-type method for Gaussian Process Regression task in Table \ref{gp_w2_table_kw}.
The result is quite similar to the Wasserstein case, i.e., both the accelerated position update and the dynamic weight adjustment result in a decreased $W_2$ and GAD-PVI algorithms consistently achieve loweset $W_2$ to the target. Note that difference between DK type GAD-PVI and their fixed weight counterpart is not that obvious, due to the fact that the one-mode nature of GP greatly weaken the advantage of DK, i.e., transferring particles from low-probability region to distant high-probability area (e.g. among different local modes).

\begin{table}[tb]
	\centering

	\begin{tabular}{c|cc}
		\toprule
		\multirow{2}*{Algorithm} & \multicolumn{2}{c}{Smoothing Strategy}  \\
		~    & BLOB &GFSD \\

        \hline
        KWAIG & 	1.571e-01 $\pm $ 2.190e-04  & 2.146e-01 $\pm $ 8.608e-04  \\
        \NAME-KW-DK & 	1.566e-01 $\pm $ 1.791e-03  & 2.072e-01 $\pm $ 1.772e-03  \\
        \NAME-KW-CA & 	\textbf{1.341e-01 $\pm $ 1.494e-04}   & \textbf{1.991e-01 $\pm $ 4.415e-04} \\
        
        \hline
        SAIG & 	 1.570e-01 $\pm $ 3.791e-04 & 2.084e-01 $\pm $ 7.575e-03  \\   
        \NAME-S-DK & 	1.552e-01 $\pm $ 1.090e-02  & 2.012e-01 $\pm $ 4.960e-03  \\
        \NAME-S-CA & 	\textbf{1.236e-01 $\pm $ 2.872e-04}  & \textbf{1.691e-01 $\pm $ 3.499e-03 }  \\
		\bottomrule
	\end{tabular}
	\caption{Averaged $W_2$ distances after 10000 iterations 
    with different KW/S-type algorithms in the GP task with dataset LIDAR.}
	\label{gp_w2_table_kw}
	\vspace{-5mm}
\end{table}

\paragraph{Additional Results for BNN}
We provide additional test Negative Log-likelihood results for Bayesian Neural Network experiment for all algorithms in Table \ref{bnn_ll} and the test RMSE result under KW/S-Type algorithms in Table \ref{bnn_rmse_kws}.
The results demonstrate that the combination of the accelerated position updating strategy and the dynamically weighted adjustment leads to a lower NLL and RMSE under difference specific IFR space, and WGAD-PVI algorithms with CA usually achieve the best performance in Wasserstein case while KW/SGAD-PVI algorithms with CA or DK are comparable to each others.
Note that the position step-size of GAD-PVI are set to the value tuned for their fixed weight counterpart. Actually, if we retune the position step-size for all GAD-PVI algorithms, they are expected to achieve a better performance than existing result. 

\begin{table}[tb]
	\centering

	\begin{tabular}{c|cccc}
		\toprule
		\multirow{2}*{Algorithm} & \multicolumn{4}{c}{Datasets}  \\
		~    & Concrete & kin8nm & RedWine & space \\
		\hline
        ParVI-SVGD & 	1.738e+00 &1.160e+00  & 6.943e+00 & 2.739e+00 \\
        \hline
        \hline
		ParVI-BLOB & 	1.849e+00 & 1.122e+00  & 6.900e+00 & 2.742e+00 \\
		WAIG-BLOB &  1.710e+00	&1.093e+00  & 6.790e+00 & 2.638e+00 \\
		WNES-BLOB & 1.734e+00	& 1.065e+00  & 6.799e+00 & 2.679e+00 \\
        DPVI-DK-BLOB & 1.833e+00	&1.122e+00  & 6.848e+00 & 2.687e+00 \\
        DPVI-CA-BLOB & 1.837e+00	&1.093e+00  & 6.856e+00 & 2.685e+00 \\
        W\NAME-DK-BLOB(Ours) & 	1.703e+00 &1.065e+00  & 6.785e+00 & 2.602e+00 \\
        W\NAME-CA-BLOB(Ours) & 	\textbf{1.697e+00} &\textbf{1.048e+00}  & \textbf{6.782e+00} & \textbf{2.595e+00} \\

        \hline
        KWAIG-BLOB & 1.794e+00	&1.199e+00  & 9.322e-01 & 2.751e+00 \\
        KW\NAME-DK-BLOB(Ours) & 	\textbf{1.788e+00}&\textbf{1.169e+00}  & \textbf{9.300e-01} & 2.743e+00 \\
        KW\NAME-CA-BLOB(Ours) & 1.789e+00	&1.173e+00  & 9.304e-01 & \textbf{2.736e+00} \\
        
        \hline
        SAIG-BLOB & 1.832e+00	 &1.136e+00 & 9.349e-01 & 2.711e+00 \\
        S\NAME-DK-BLOB(Ours) & \textbf{1.821e+00}	&\textbf{1.068e+00}  & 9.292e-01 & 2.708e+00 \\
        S\NAME-CA-BLOB(Ours) & 1.824e+00	&1.119e+00  & \textbf{9.270e-01} & \textbf{2.642e+00} \\

        \hline
        \hline
		ParVI-GFSD & 	1.850e+00  & 1.122e+00 & 6.899e+00 & 2.742e+00\\
		WAIG-GFSD & 	1.724e+00  & 1.094e+00 & 6.785e+00 & 2.638e+00\\
		WNES-GFSD & 	1.740e+00  & 1.108e+00 & 6.781e+00 & 2.679e+00\\
        DPVI-DK-GFSD & 	1.836e+00  & 1.120e+00 & 6.812e+00 & 2.687e+00\\
        DPVI-CA-GFSD & 	1.836e+00  & 1.093e+00 & 6.857e+00 & 2.687e+00\\
        W\NAME-DK-GFSD(Ours) & 	1.722e+00  & 1.075e+00 & 6.780e+00 & \textbf{2.593e+00} \\
        W\NAME-CA-GFSD(Ours) & 	\textbf{1.720e+00}  & \textbf{1.050e+00} & \textbf{6.774e+00} & 2.597e+00 \\

        \hline
        KWAIG-GFSD & 	1.820e+00  & 1.199e+00 & 9.337e-01 & 2.759e+00\\
        KW\NAME-DK-GFSD(Ours) & 	1.813e+00  & 1.176e+00 & \textbf{9.305e-01} & \textbf{2.740e+00} \\
        KW\NAME-CA-GFSD(Ours) & 	\textbf{1.812e+00}  & \textbf{1.169e+00} & 9.319e-01 & 2.746e+00 \\
        
        \hline
        SAIG-GFSD & 	1.814e+00 & 1.116e+00 & 9.444e-01 & 2.782e+00\\
        S\NAME-DK-GFSD(Ours) & 	1.809e+00  & \textbf{1.062e+00} & 9.391e-01 & \textbf{2.555e+00}\\
        S\NAME-CA-GFSD(Ours) & 	\textbf{1.800e+00}  & 1.098e+00 & \textbf{9.359e-01} & 2.745e+00\\

		\bottomrule
	\end{tabular}
	\caption{Averaged Test $NLL$ distances for different ParVI methods in BNN task.}
	\label{bnn_ll}
	\vspace{-5mm}
\end{table}

\begin{table}[tb]
	\centering
	\begin{tabular}{c|cccc}
		\toprule
		\multirow{2}*{Algorithm} & \multicolumn{4}{c}{Datasets}  \\
		~    & Concrete & kin8nm & RedWine & space \\
        \hline
        KWAIG-BLOB & 	6.217e+00  & 8.159e-02 & 6.916e+00 &8.829e-02\\
        KW\NAME-DK-BLOB(Ours) & 	\textbf{6.207e+00}  & 8.069e-02 & 6.910e+00& 8.760e-02\\
        KW\NAME-CA-BLOB(Ours) & 	6.208e+00  & \textbf{8.058e-02} & \textbf{6.896e+00}& \textbf{8.753e-02}\\

        \hline
        SAIG-BLOB & 	 6.323e+00 & 7.936e-02 & 6.856e+00 &8.899e-02\\
        S\NAME-DK-BLOB(Ours) & 	6.293e+00  & \textbf{7.695e-02} & 6.828e+00& 8.867e-02\\
        S\NAME-CA-BLOB(Ours) & 	\textbf{6.269e+00}  & 7.872e-02 & \textbf{6.811e+00}& \textbf{8.783e-02}\\

        \hline
        KWAIG-GFSD & 	6.330e+00  & 8.158e-02 & 6.924e+00 & 8.948e-02\\
        KW\NAME-DK-GFSD(Ours) & 	6.274e+00  & 8.079e-02 & \textbf{6.854e+00} &\textbf{8.918e-02} \\
        KW\NAME-CA-GFSD(Ours) & 	\textbf{6.251e+00}  & \textbf{8.057e-02} & 6.917e+00 &8.927e-02 \\
        
        \hline
        SAIG-GFSD & 	 6.266e+00 & 7.864e-02 & 6.861e+00 & 8.998e-02\\
        S\NAME-DK-GFSD(Ours) & 	6.257e+00  & \textbf{7.679e-02} & 6.804e+00 & \textbf{8.638e-02}\\
        S\NAME-CA-GFSD(Ours) & 	\textbf{6.228e+00}  & 7.801e-02 & \textbf{6.793e+00} & 8.931e-02\\
		\bottomrule
	\end{tabular}
	\caption{Averaged Test $RMSE$ for different ParVI methods in BNN task.}
	\label{bnn_rmse_kws}
	\vspace{-5mm}
\end{table}

\paragraph{Results for GAD-KSDD}
Newly derived KSDD methods proposed by \cite{korba2020non} evolve particle system according to the direct minimizing the Kernel Stein Discrepancy(KSD) of particles w.r.t. the target distribution. The KSDD method is the first ParVI that introduce the dissimilarity functional whose first variation is well-defined at discrete empirical distribution, thus result in no approximation error when particles number is infinite. Though theoretically impressive, the experimental performance of KSDD is not satisfying, for they are more computationally expensive and have been widely reported to be less stable. Furthermore, KSDD are also reported to make particles easily trapped at saddle points and demanding for convexity of task and sensitive to parameters.

As shown in Figure \ref{figure_sg_ksdd}, we make simple Wasserstein experiments of KSDD type algorithms on SG task to illustrate that our GAD-PVI framwork is compatible with the KSD-KSDD approach. The bandwidth of KSDD are reported that should be carefully determined\cite{korba2020non,dpvi},  we follow the conventions in \cite{lu2019accelerating} and \cite{korba2021kernel}, and set the parameter $h$ via grid search. From Figure \ref{figure_sg_ksdd}, it can be observed that our GAD-PVI algorithms achieve the best result in different particles number settings. These illustrate our framework can corporate with this new smoothing approach. However, due to the limit of the KSDD itself, it is not realistic to fine tune parameters and conduct empirical studies of KSDD ParVI algorithms on complex tasks. Additionally, in this simple SG task, the final result of KSDD type algorithms is not competitive to other methods at all. So we exclude KSDD type experiments in GMM, GP and BNN.

\begin{figure}[h]
    \centering
    \begin{subfigure}{\linewidth}
        \includegraphics[width=.33\linewidth]{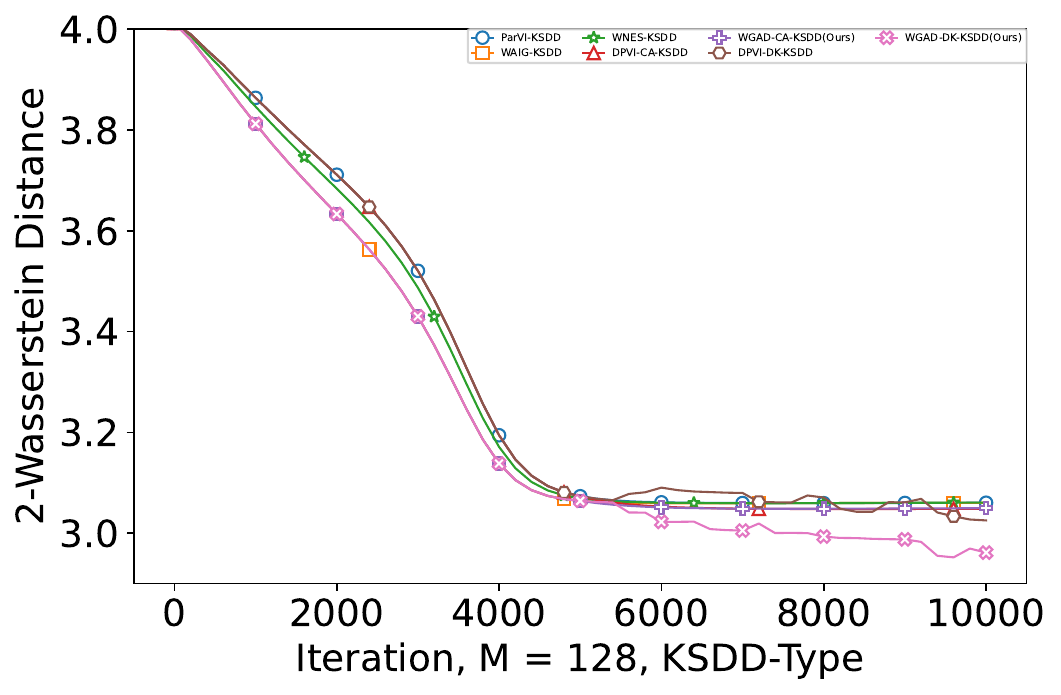}
        \includegraphics[width=.33\linewidth]{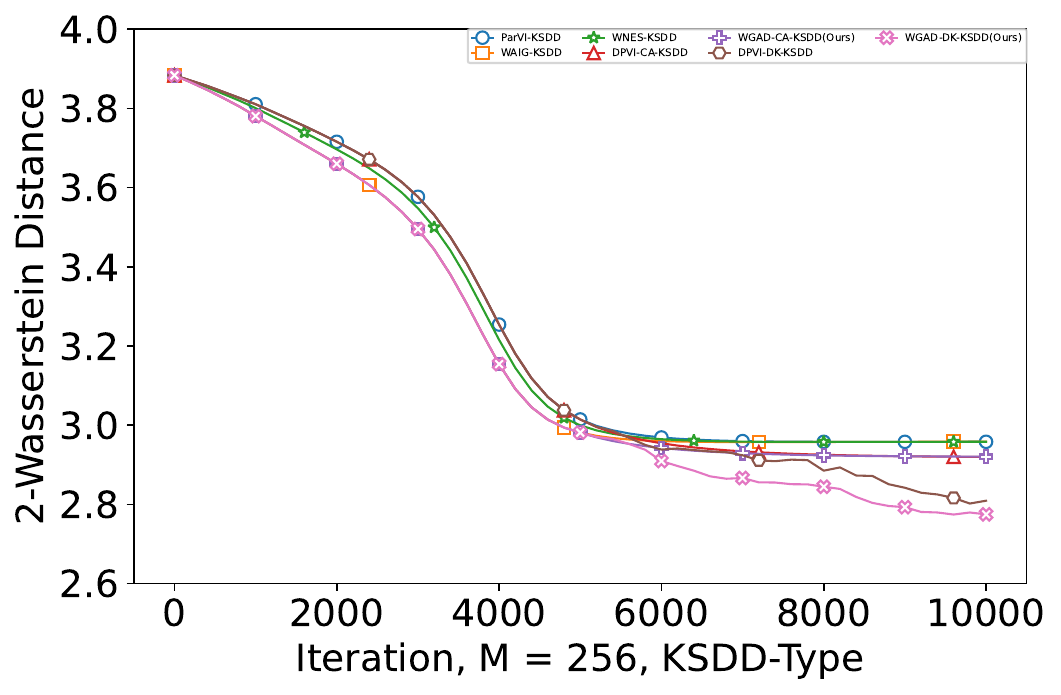}
        \includegraphics[width=.33\linewidth]{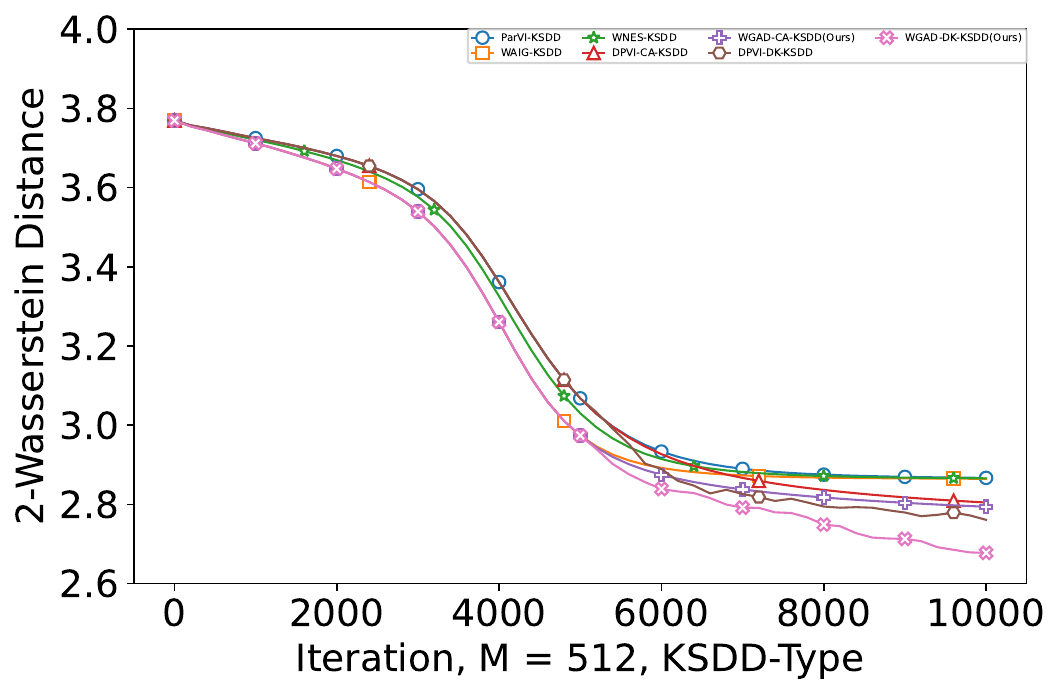}
    \end{subfigure}\qquad
    \caption{Averaged Test $W_2$ distance to the target w.r.t. iterations in the SG task for algorithms (WGAD-KSDD-Type).}
    \vspace{-5mm}
    \label{figure_sg_ksdd}
    \end{figure}

% \bibliography{aaai24.bib}
\end{document}